%% file: main.tex
\documentclass{article}
\usepackage{arxiv,times}
\input{math_commands.tex}

\usepackage{amsmath,amssymb,amsthm,mathtools}
\usepackage{graphicx}
\usepackage{hyperref}
\usepackage{url}
\usepackage{booktabs}
\usepackage{caption}
\usepackage{tabularx}
\usepackage{adjustbox}
\usepackage{longtable}
\usepackage{multirow}
\usepackage{array}
\usepackage[table]{xcolor}
\usepackage{float}
\usepackage{wrapfig}
\usepackage{placeins}
\usepackage{algorithm}
\usepackage{algpseudocode}

\newcommand{\tablefont}{\small}
\newcolumntype{Y}{>{\raggedright\arraybackslash}X}
\newcolumntype{Z}{>{\centering\arraybackslash}X}

\newtheorem{proposition}{Proposition}
\newtheorem{theorem}[proposition]{Theorem}
\newtheorem{lemma}[proposition]{Lemma}
\newtheorem{corollary}[proposition]{Corollary}
\theoremstyle{definition}
\newtheorem{assumption}{Assumption}
\theoremstyle{remark}
\newtheorem{remark}[proposition]{Remark}
\theoremstyle{plain}
\newcommand{\method}{\textsc{PMosFM}}
\newcommand{\norm}[1]{\left\lVert #1\right\rVert}
\newcommand{\pending}{\textcolor{gray}{--}}
\newcommand{\cmark}{\textcolor{green!45!black}{\ensuremath{\checkmark}}}
\newcommand{\xmark}{\textcolor{red!75!black}{\ensuremath{\times}}}

\title{PMosFM: Preconditioned Manifold Matching for One-Step Physics-Constrained Generation}
\hypersetup{
  pdftitle={PMosFM: Preconditioned Manifold Matching for One-Step Physics-Constrained Generation},
  pdfauthor={Zhangyong Liang, Haibin Ling},
  hypertexnames=false,
  hidelinks
}
\author{%
{\small\textbf{Zhangyong Liang}}\\
Department of Artificial Intelligence\\
Westlake University\\
\texttt{liangzhangyong@westlake.edu.cn}
\And
{\small\textbf{Haibin Ling}$^{*}$}\\
Department of Artificial Intelligence\\
Westlake University\\
\texttt{linghaibin@westlake.edu.cn}
}

\iclrfinalcopy
\begin{document}
\maketitle
\thispagestyle{pmosfmfirstpage}

\begin{abstract}
Physics-constrained generative models aim to generate physical fields that match a target distribution and satisfy prescribed constraints.
However, enforcing these constraints often increases sampling costs through iterative corrections or training costs through residual optimization and trajectory unrolling.
To address this issue, we introduce \textbf{P}reconditioned \textbf{M}anifold \textbf{o}ne-\textbf{s}tep \textbf{F}low \textbf{M}atching (\textbf{PMosFM}), a preconditioned manifold matching framework for one-step physics-constrained generation.
By encoding constraints in a manifold decoder, PMosFM learns transport in intrinsic coordinates without separate residual losses or terminal residual unrolling.
A geometric preconditioner rescales coordinates using the decoder-induced metric, while a regularized covariance transform approximately whitens the interpolation-state inputs.
A finite-interval objective couples velocity supervision with consistency between decoded endpoints in physical space.
We show that exact parameterization removes residual-induced Gauss--Newton curvature, that geometric and covariance effects separate in a local conditioning bound, and that physical flow-map error bounds endpoint distributional error.
Controlled ablations examine conditioning, and experiments evaluate optimizer-update time and memory footprint.
At inference, PMosFM uses one neural transport evaluation followed by physical decoding.
Experiments across benchmarks show lower training and sampling time than the multi-step baselines at comparable physical and distributional fidelity.
Code and datasets will be released publicly.
\end{abstract}

\section{Introduction}
\label{sec:introduction}

Flow matching (FM) learns probability transport from interpolation pairs and extends to function-valued physical fields~\citep{lipman2022flow,kerrigan2023functional}.
FNO~\citep{li2021fno} and DeepONet~\citep{lu2021deeponet} approximate deterministic PDE solution operators, while physics-informed training penalizes governing equation residuals~\citep{raissi2019physics}.
In contrast, conservation-aware models and differentiable solvers project solutions onto constrained manifolds~\citep{Derek2023,negiar2023hardconstraints}.
For uncertain parameters or partial observations, physics-constrained generation aims to recover distributions of physical states consistent with the prescribed constraints.

Physical information enters generative models through training losses or sampling operations.
Physics-informed diffusion models penalize residuals during training~\citep{Bastek2025}, while PCFT adapts pretrained flows through weak-form residuals~\citep{Tauberschmidt2025}.
CoCoGen and DiffusionPDE guide sampling with physical residuals~\citep{Jacobsen2025,huang2024diffusionpde}; D-Flow optimizes source noise through differentiable generation~\citep{ben2024d}.
Projected diffusion applies constraint projections during sampling~\citep{christopher2024constrained}.

\begin{figure*}[t]
  \centering
  \includegraphics[width=\textwidth]{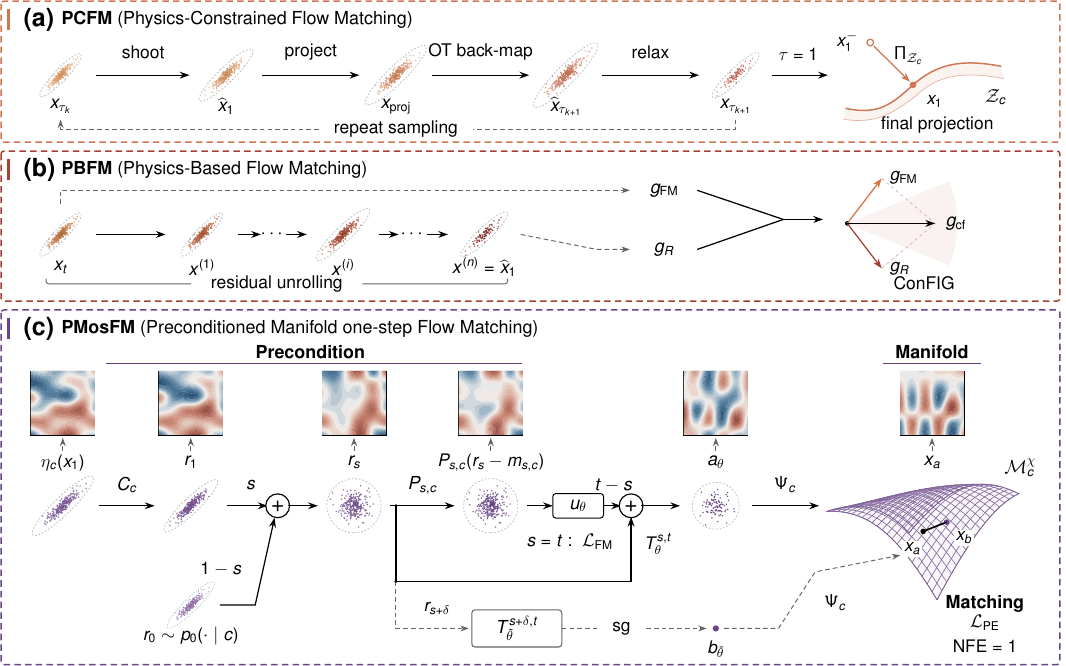}
\caption{
Overview of PMosFM and representative physics-constrained methods.
(a) PCFM projects a pre-correction endpoint $x_1^-$ onto the feasible set $\mathcal Z_c$ using $\Pi_{\mathcal Z_c}$.
(b) PBFM combines flow and residual gradients $g_{\mathrm{FM}}$ and $g_R$ into the ConFIG direction $g_{\mathrm{cf}}$.
(c) PMosFM applies precondition--manifold--matching, using $T_\theta^{0,1}$ for one-step transport and $\Psi_c$ for physical decoding.
}
 \label{fig:pmosfm-framework}
\end{figure*}

Existing methods face different computational bottlenecks: inference-time projection vs. training-time unrolling.
ECI~\citep{Cheng2025} and PCFM~\citep{utkarsh2025pcfm} can reuse pretrained models, but constraint handling remains in the online sampling loop.
PBFM~\citep{baldan2025pbfm} avoids physics-specific correction at inference by adding residual-gradient computations and terminal unrolling during training.
Studies of physics-informed learning also identify gradient imbalance and ill-conditioning as optimization difficulties~\citep{krishnapriyan2021failure,wang2020pathologies,cao2025illconditioning}.
Training-time enforcement shifts physics computation offline, while residual optimization and iterative learned transport remain.
This raises a natural question: \emph{\textbf{can physical constraint satisfaction and transport be unified to reduce both training and sampling costs?}}

Recent work reduces iterative generative transport through path straightening~\citep{liu2022rectified}, consistency~\citep{song2023consistency}, or direct finite-interval learning~\citep{han2026wflow}.
Physics-informed distillation further enables one-step physics-constrained generation~\citep{zhang2025piddm}.
However, one-step physical generation must address not only transport depth, but also the representation of feasible states and the metric in which finite-interval errors are measured.
Meanwhile, manifold generative models extend transport to non-Euclidean
state spaces~\citep{mathieu2020riemanniancnf,debortoli2022riemannian,zhong2026rmf}, while preconditioned flow matching improves
optimization under anisotropic interpolation statistics ~\citep{ahamed2026preconditioned}.
Feasible coordinates expose two distinct conditioning factors: decoder-induced metric distortion and interpolation-state covariance.

In this work, we introduce \textbf{P}reconditioned \textbf{M}anifold \textbf{o}ne-\textbf{s}tep \textbf{F}low \textbf{M}atching (\textbf{PMosFM}) to address these challenges via the proposed \emph{precondition--manifold--matching} strategy.
For an exact residual parameterization on the valid domain, PMosFM removes a separate residual penalty and terminal residual unrolling for the encoded constraints.
Manifold and interpolation-state preconditioners condition feasible coordinates and network inputs. A finite-interval objective couples velocity supervision with endpoint consistency in physical space.
Together, these components define a one-step generator evaluated by one finite-interval neural map followed by physical decoding (Fig.~\ref{fig:pmosfm-framework}).

We summarize the main contributions as follows:
\begin{itemize}
 \item \textbf{One-step physics-constrained generation.} We formulate finite-interval transport in feasible coordinates with one-step evaluation followed by physical decoding.
  For an exact parameterization on the valid domain, the representation enforces the encoded residual directly, replacing separate training penalties and repeated unrolling.
 \item \textbf{Precondition--manifold--matching strategy.} We formulate physical endpoint matching with geometric and input-covariance preconditioning. Velocity supervision at $s=t$ learns the interpolation field, while endpoint consistency at $s<t$ trains finite-interval transport in feasible coordinates.
 \item \textbf{Feasibility, conditioning, and transport.} We show that exact residual coordinates remove the Gauss--Newton curvature contributed by the encoded constraint, while geometric and statistical preconditioning control the conditioning of the remaining transport. 
 We further bound endpoint distribution error by physical flow-map error and distinguish constraint satisfaction from matching the target law.  
 \item \textbf{Efficient training and one-step sampling.} PMosFM reduces training time per update compared with training with residual unrolling and achieves faster convergence with the preconditioned residual-manifold formulation.
 Across five benchmarks, PMosFM outperforms the multi-step baselines in distributional accuracy and sampling speed, using a single network evaluation followed by physical decoding.
\end{itemize}

\section{Related Work}
\label{sec:related-work}

\paragraph{Physics-constrained generative modeling.}
Physics enters training through residual losses or virtual observables \citep{Bastek2025,rixner2021virtual}.
PBFM couples residual optimization, conflict-free gradients, and unrolling \citep{baldan2025pbfm,liu2025config}, whereas PCFT applies weak-form fine-tuning \citep{Tauberschmidt2025}.
Related approaches include residual-gradient conditioning \citep{shu2023physics}, diffusion posterior sampling \citep{chung2022dps}, and manifold-constrained diffusion \citep{chung2022manifold}.
During sampling, CoCoGen and DiffusionPDE impose physical residuals \citep{Jacobsen2025,huang2024diffusionpde}, while D-Flow optimizes source noise through differentiable generation \citep{ben2024d}.
Hard-constraint methods include ECI and PCFM \citep{Cheng2025,utkarsh2025pcfm}, while projected diffusion \citep{christopher2024constrained} and chance-constrained \citep{liang2025chance} flow matching provide alternative constraint-handling strategies.

\paragraph{One-step generative transport.}
FM learns velocity fields for finite-dimensional and function-valued data~\citep{lipman2022flow,kerrigan2023functional}.
Stochastic interpolants unify flow and diffusion objectives~\citep{albergo2023stochastic}, while minibatch optimal transport modifies source--target coupling~\citep{tong2024minibatchot}.
Rectified flows simplify transport paths~\citep{liu2022rectified}; consistency and shortcut models support one-step generation~\citep{song2023consistency,frans2024shortcut}.
SoFlow combines flow matching with solution consistency for one-step generation~\citep{luo2026soflow}.
InstaFlow distills a straightened flow~\citep{liu2024instaflow}, whereas MeanFlow learns finite-interval average velocities~\citep{geng2025mean}.
W-Flow formulates one-step generation via distributional gradient flows~\citep{han2026wflow}, whereas PIDDM enforces PDE constraints through post-hoc distillation for one-step generation~\citep{zhang2025piddm}.

\paragraph{Manifold and preconditioning.}
Riemannian continuous flows, score models, and manifold ODEs incorporate prescribed non-Euclidean geometry~\citep{mathieu2020riemanniancnf,debortoli2022riemannian,lou2020manifoldode}.
Manifold-aware FM extends vector-field regression to such spaces~\citep{chen2023flow}, and Riemannian MeanFlow learns finite-interval transport through tangent-space alignment~\citep{zhong2026rmf}.
Preconditioned Flow Matching addresses interpolation-covariance anisotropy through invertible transformations~\citep{ahamed2026preconditioned}.

Table~\ref{tab:method-capabilities} compares PMosFM with representative baselines, highlighting the preconditioned
manifold formulation for one-step physics-constrained generation.

\begin{table*}[t]
\setcitestyle{numbers,square}
\centering
\caption{Comparison between \method{} and other physics-constrained generative models.}
\label{tab:method-capabilities}
\tablefont
\setlength{\tabcolsep}{3.0pt}
\renewcommand{\arraystretch}{1.05}
\resizebox{\textwidth}{!}{%
\begin{tabular}{@{}>{\raggedright\arraybackslash}p{0.18\textwidth}*{8}{c}@{}}
\toprule
\textbf{Method} &
\shortstack{\textbf{Physics at}\\\textbf{training}} &
\shortstack{\textbf{Hard}\\\textbf{constraints}} &
\shortstack{\textbf{Gradient-free}\\\textbf{inference}} &
\shortstack{\textbf{Complex}\\\textbf{constraints}} &
\shortstack{\textbf{No manual physics--}\\\textbf{fidelity balancing}} &
\shortstack{\textbf{One step}\\\textbf{sampling}} &
\shortstack{\textbf{Residual-factorized}\\\textbf{physics training}} &
\shortstack{\textbf{Spectral/Jacobian}\\\textbf{preconditioning}} \\
\midrule
FFM~\citep{kerrigan2023functional} & \xmark & \cmark & \cmark & \xmark & \xmark & \xmark & \xmark & \xmark \\
FM-OT~\citep{lipman2022flow} & \xmark & \xmark & \cmark & \xmark & \xmark & \xmark & \xmark & \xmark \\
CoCoGen~\citep{Jacobsen2025} & \xmark & \cmark & \cmark & \cmark & \xmark & \xmark & \xmark & \xmark \\
DiffusionPDE\citep{huang2024diffusionpde} & \xmark & \xmark & \cmark & \cmark & \xmark & \xmark & \xmark & \xmark \\
PIDM~\citep{Bastek2025} & \cmark & \xmark & \cmark & \cmark & \xmark & \xmark & \xmark & \xmark \\
D-Flow~\citep{ben2024d} & \xmark & \xmark & \cmark & \cmark & \xmark & \xmark & \xmark & \xmark \\
ECI~\citep{Cheng2025} & \xmark & \cmark & \cmark & \xmark & \xmark & \xmark & \xmark & \xmark \\
PCFM~\citep{utkarsh2025pcfm} & \xmark & \cmark & \cmark & \cmark & \xmark & \xmark & \xmark & \xmark \\
CCFM~\citep{liang2025chance} & \xmark & \cmark & \cmark & \cmark & \xmark & \xmark & \xmark & \xmark \\
PCFT~\citep{Tauberschmidt2025} & \cmark & \xmark & \cmark & \cmark & \xmark & \xmark & \xmark & \xmark \\
PBFM~\citep{baldan2025pbfm} & \cmark & \xmark & \cmark & \cmark & \xmark & \xmark & \xmark & \xmark \\
\textbf{PMosFM} & \cmark & \cmark & \cmark & \cmark & \cmark & \cmark & \cmark & \cmark \\
\bottomrule
\end{tabular}
}
\end{table*}

\section{Methodology}
\label{sec:methodology}
\subsection{Preliminaries}
\label{sec:preliminaries}

Let $\mathcal X_h\subseteq\mathbb R^d$ be the discrete state space on mesh $h$, with state $x\in\mathcal X_h$ and $d$ degrees of freedom. 
And let $c$ collect the conditioning variables, including manifold, forcing, boundary data, observations, and physical time when applicable.
We use $t\in[0,1]$ for sampling time, distinct from the physical time when included in $c$.
For the physical residual $R_h(\cdot,c)$, define the feasible set
\begin{equation}
\mathcal Z_c=\{x\in\mathcal X_h:R_h(x,c)=0\}.
\label{eq:feasible-set}
\end{equation}
We model a regular component of $\mathcal Z_c$; the residual specifies feasible support and a target law specifies probability mass on it. Physical errors are measured by $\|e\|_{M_h}^2=e^\top M_h e$, where $M_h$ is a symmetric positive-definite quadrature matrix.

\subsection{Physics-based flow matching}
\label{sec:design-overview}

Let $\nu_0$ be the ambient source law and $p_{\mathrm{data}}(\cdot\mid c)$ the conditional data law. For an endpoint pair $x_0\sim\nu_0$ and $x_1\sim p_{\mathrm{data}}(\cdot\mid c)$, consider ambient flow matching along
\begin{equation}
    x_\tau=(1-\tau)x_0+\tau x_1,
    \qquad
    w_x=x_1-x_0,
\end{equation}
where $\tau\in[0,1]$. A neural velocity field $v_\phi$, with trainable parameters $\phi$, is fitted to $w_x$ using
\begin{equation}
\label{eq:ambient_fm}
    \mathcal L_{\mathrm{FM}}^{\mathrm{amb}}
    =
    \mathbb E\!\left[
    \frac{1}{2d}
    \left\|
    v_\phi(x_\tau,\tau,c)-w_x
    \right\|_2^2
    \right],
\end{equation}
The expectation averages over sampled endpoint pairs and interpolation times, and over conditions when training jointly across $c$.
Ambient interpolation may leave the feasible set, so existing methods impose physical constraints during training~\citep{baldan2025pbfm} or sampling~\citep{utkarsh2025pcfm}.
Residual penalties can make endpoint regression ill-conditioned (Lemma~\ref{lem:app-hessian-decomposition} in Appendix~\ref{app:large-step-theory}; mesh-refinement analysis in Appendix~\ref{app:normal-tangent-proof}).

\subsection{PMosFM: Preconditioned Manifold one-step Flow Matching}
\label{sec:pmosfm}

PMosFM learns one-step transport on a represented feasible manifold, applying coordinate preconditioning and endpoint matching in physical space.
Fig.~\ref{fig:pmosfm-manifolds} compares the methods in a
\begin{wrapfigure}{r}{0.46\textwidth}
 \centering
 \includegraphics[width=\linewidth]{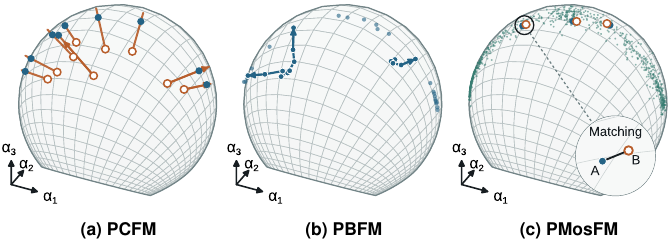}
 \caption{Manifold coordinates and endpoint matching.
 Schematic comparison of PCFM, PBFM, and PMosFM.}
 \label{fig:pmosfm-manifolds}
\end{wrapfigure}
schematic embedding space:
\textbf{(a)}~PCFM alternates full-space integration with iterative projection onto constraints;
\textbf{(b)}~PBFM steers full-space paths with residual gradients unrolling; and
\textbf{(c)}~PMosFM decodes two endpoint estimates and compares them on the represented residual manifold (inset).

\paragraph{Feasible parameterization.}

Let $y\in\mathcal Y_c\subset\mathbb R^m$ collect the $m$ independent coordinates on the represented feasible component.
On the represented regular region, assume a differentiable local parameterization $\chi_c:\mathcal Y_c\to\mathcal X_h$ whose Jacobian has $m$ linearly independent columns and satisfies
\begin{equation}
 R_h(\chi_c(y),c)=0
 \qquad\text{for every }y\in\mathcal Y_c.
 \label{eq:exact-chart}
\end{equation}

The represented residual manifold is $\mathcal M_c^\chi=\chi_c(\mathcal Y_c)\subseteq\mathcal Z_c$.
Every decoded state satisfies the encoded constraints within the valid coordinate domain.
Lemma~\ref{lem:app-tangent-space} in Appendix~\ref{app:manifold-theory} gives the differential characterization.
The decoder and encoded-residual evaluator use the same discrete operators, boundary conditions, and sign conventions.
The constant-rank and coordinate assumptions are stated in Appendix~\ref{app:mathematical-setting}.
The encoder $\eta_c:\mathcal M_c^\chi\to\mathcal Y_c$ returns coordinates satisfying $\chi_c(\eta_c(x))=x$ for represented data.
For data outside the represented set, encoding followed by decoding produces a measurable physical-metric projection (Appendix~\ref{app:encoding-error}).
We distinguish parameterization error from velocity-regression error.

Let $\mu_c\in\mathbb R^m$ be a coordinate center and $C_c\in\mathbb R^{m\times m}$ a fixed invertible transform that rescales physical error across coordinate directions.
The coordinate and physical decoder are
\begin{equation}
 r=C_c(y-\mu_c),
 \qquad
 \Psi_c(r)=\chi_c\!\left(\mu_c+C_c^{-1}r\right).
 \label{eq:preconditioned-coordinate}
\end{equation}
The valid $r$-domain is $C_c(\mathcal Y_c-\mu_c)$, and the results below assume that interpolation states and predicted endpoints remain in this domain.
For a data sample $x_1\sim p_{\mathrm{data}}(\cdot\mid c)$, set $r_1=C_c(\eta_c(x_1)-\mu_c)$.
PMosFM pairs the encoded target $r_1$ with a source sample $r_0\sim p_0(\cdot\mid c)$.
The decoded target sample is $\Psi_c(r_1)$.
Appendices~\ref{app:encoding-error} and~\ref{app:coarea-law} cover projection error and the co-area law under hard conditioning, respectively.

\paragraph{Manifold and preconditioning.}
\label{sec:metric-aware-preconditioning}

Geometric preconditioning rescales the decoder-induced pullback metric in feasible coordinates to reduce local metric distortion, as illustrated in Fig.~\ref{fig:pmosfm-preconditioned-geometry}.

\begin{wrapfigure}{r}{0.48\textwidth}
    \centering
    \includegraphics[width=0.98\linewidth]{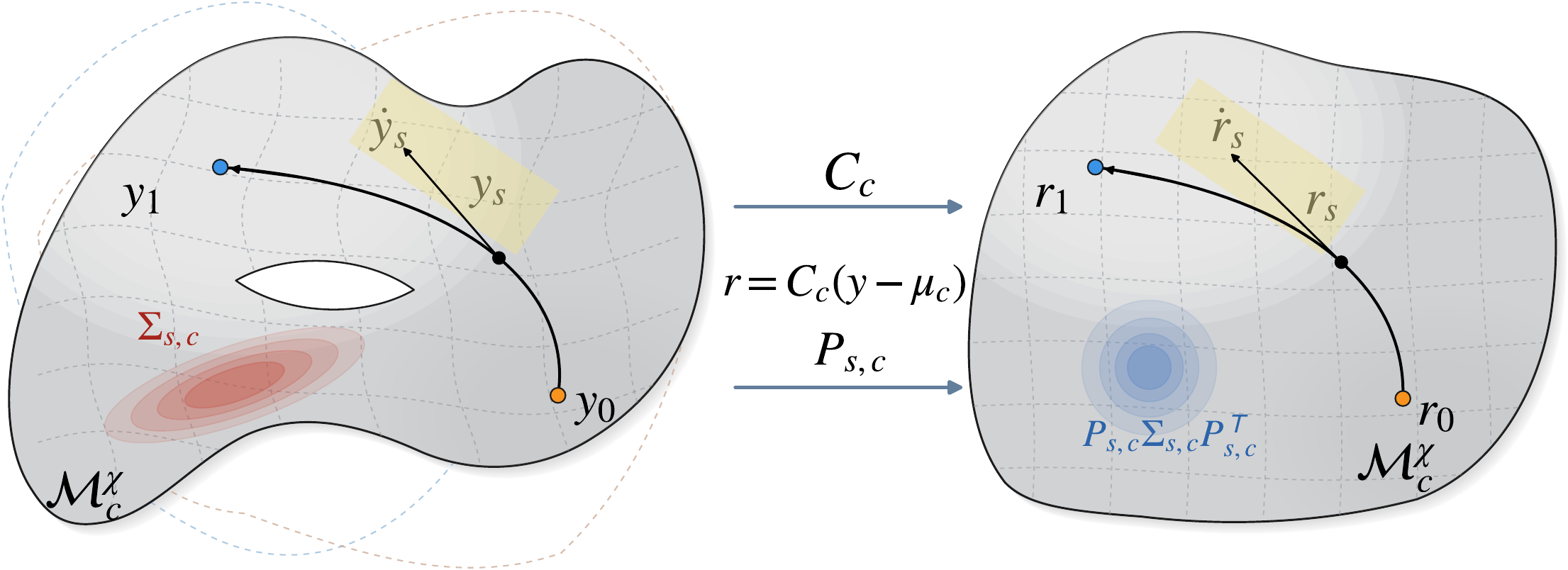}
    \caption{Preconditioned residual geometry.
    $y_i=\mu_c+C_c^{-1}r_i$; dots denote $s$-derivatives, with $\dot r_s=w$ and $\dot y_s=C_c^{-1}w$.}
    \label{fig:pmosfm-preconditioned-geometry}
\end{wrapfigure}

Let $J_\chi=D_y\chi_c$ denote the decoder Jacobian.
The physical error metric in the original coordinates is $G_c(y)=J_\chi(y,c)^\top M_hJ_\chi(y,c)$.
Under the coordinate transformation in Eq.~\ref{eq:preconditioned-coordinate}, the metric in the $r$ coordinates becomes
\begin{equation}
 \widetilde G_c(r)=C_c^{-\top}G_c(y)C_c^{-1}.
  \label{eq:pullback-physical-metric}
\end{equation}

The transformation $C_c$ rescales the pullback metric $\widetilde G_c$, while the endpoint loss in Eq.~\ref{eq:physical-endpoint-loss} is evaluated directly in physical space.
For an interpolation pair $(r_0,r_1)$, define
\begin{equation}
 \left\{
 \begin{aligned}
 r_s&=(1-s)r_0+s r_1,\\
 w&=r_1-r_0,\qquad 0\leq s\leq1.
 \end{aligned}
 \right.
 \label{eq:interpolation-pair}
\end{equation}
For each condition and generation time, define the input statistics and regularized preconditioner
\begin{equation}
 m_{s,c}=\mathbb E[r_s\mid c],
 \quad
 \Sigma_{s,c}=\mathop{\mathrm{Cov}}\nolimits(r_s\mid c),
 \quad
 P_{s,c}=(\Sigma_{s,c}+\varepsilon_P I)^{-1/2},
 \label{eq:input-preconditioner}
\end{equation}
where $m_{s,c}$ and $\Sigma_{s,c}$ are the conditional mean and covariance of $r_s$, $\varepsilon_P>0$ regularizes the covariance, and $I$ is the $m\times m$ identity.

The fixed $C_c$ changes the coordinates used by $\Psi_c$, whereas $P_{s,c}$ rescales only the network inputs; neither changes the encoded physical target law (Appendix~\ref{app:preconditioning-theory}).

\paragraph{Physical endpoint matching.}
\label{sec:intrinsic-one-step}

PMosFM couples velocity supervision with endpoint consistency in physical space.
The network $u_\theta$, with trainable parameters $\theta$, predicts an $m$-dimensional displacement rate in the $r$ coordinates from preconditioned inputs.
For $0\leq s\leq t\leq1$, define the map from generation time $s$ to $t$ by
\begin{equation}
 T_\theta^{s,t}(r,c)
 =r+(t-s)u_\theta\!\left(P_{s,c}(r-m_{s,c}),s,t,c\right).
 \label{eq:two-time-map}
\end{equation}
The losses below are averaged over sampled pairs $(r_0,r_1)$, times $(s,t,\delta)$ where used, and conditions $c$ when training jointly.
The $s=t$ branch retains ordinary flow matching:
\begin{equation}
 \mathcal L_{\mathrm{FM}}
 =\mathbb E\left[
 \frac{1}{2m}
 \norm{u_\theta\!\left(P_{s,c}(r_s-m_{s,c}),s,s,c\right)-w}_2^2
 \right].
 \label{eq:diagonal-fm-loss}
\end{equation}
This term supervises the velocity of the interpolation path and retains the flow-matching anchor.
For the $s<t$ endpoint-consistency branch, let $\delta$ be the separation between the two starting times, with $0<\delta<t-s$.
Set $r_{s+\delta}=r_s+\delta w$.
The two endpoint estimates are
\begin{equation}
 a_\theta=T_\theta^{s,t}(r_s,c),
 \qquad
 b_{\bar\theta}=T_{\bar\theta}^{s+\delta,t}(r_{s+\delta},c),
 \label{eq:two-time-endpoints}
\end{equation}
where $\bar\theta=\mathop{\mathrm{sg}}\nolimits(\theta)$.
The stopped-gradient branch takes a current-parameter copy; gradients are stopped through this branch.
Because both branches predict the endpoint at time $t$, PMosFM compares their decoded physical states:
\begin{equation}
 \mathcal L_{\mathrm{PE}}
 =\mathbb E\left[
 \frac{\norm{\Psi_c(a_\theta)-\mathop{\mathrm{sg}}\nolimits[\Psi_c(b_{\bar\theta})]}_{M_h}^{2}}
 {2m\,\delta(t-s)}
 \right].
 \label{eq:physical-endpoint-loss}
\end{equation}

In Algorithm~\ref{alg:pmosfm}, lowercase $\ell_{\mathrm{FM}}$ and $\ell_{\mathrm{PE}}$ denote minibatch estimates of the corresponding population losses.
Within the valid coordinate domain, the decoded endpoints satisfy the encoded residual and are compared in physical space using $M_h$.

\begin{wrapfigure}{r}{0.48\linewidth}
\begin{minipage}{\linewidth}
\begin{algorithm}[H]
\caption{\textsc{PMosFM}: training and sampling.}
\label{alg:pmosfm}
\small
\begin{algorithmic}[1]
\Require Data $p_{\mathrm{data}}(x\mid c)$, source $p_0(r\mid c)$, model $u_\theta$, encoder--decoder $(\eta_c,\chi_c)$, metric $M_h$, $\gamma$.
\State Estimate $(\mu_c,C_c,m_{s,c},P_{s,c})$ \Comment{Precondition}
\While{not converged}
    \State Select $c$; draw $(s,t,\delta)$
    \State $x_1\sim p_{\mathrm{data}}(\cdot\mid c),\ r_0\sim p_0(\cdot\mid c)$
    \State $r_1\leftarrow C_c(\eta_c(x_1)-\mu_c)$ \Comment{Feasible manifold}
    \State $r_s\leftarrow(1-s)r_0+sr_1,\quad w\leftarrow r_1-r_0$
    \State $r_{s+\delta}\leftarrow r_s+\delta w$
    \State Evaluate $\ell_{\mathrm{FM}}$ in Eq.~\ref{eq:diagonal-fm-loss} \Comment{Velocity anchor}
    \State $\bar\theta\leftarrow\mathop{\mathrm{sg}}\nolimits(\theta)$
    \State $a_\theta\leftarrow T_\theta^{s,t}(r_s,c)$, $b_{\bar\theta}\leftarrow T_{\bar\theta}^{s+\delta,t}(r_{s+\delta},c)$
    \State Evaluate $\ell_{\mathrm{PE}}$ in Eq.~\ref{eq:physical-endpoint-loss} \Comment{Manifold matching}
    \State $\theta\leftarrow\mathop{\mathrm{Update}}\nolimits\!\left[\gamma\ell_{\mathrm{FM}}+(1-\gamma)\ell_{\mathrm{PE}}\right]$
\EndWhile
\State $r_0\sim p_0(\cdot\mid c),\quad \widehat r_1\leftarrow T_\theta^{0,1}(r_0,c)$ \Comment{One-step}
\State $\widehat x_1\leftarrow\Psi_c(\widehat r_1)$ \Comment{Physical decode}
\State \Return $\widehat x_1$
\end{algorithmic}
\end{algorithm}
\end{minipage}
\end{wrapfigure}

The complete loss is
\begin{equation}
 \mathcal L_{\mathrm{PMosFM}}
 =\gamma\mathcal L_{\mathrm{FM}}+(1-\gamma)\mathcal L_{\mathrm{PE}}.
 \label{eq:pmosfm-loss}
\end{equation}
where the weight $\gamma\in[0,1]$ balances velocity supervision and physical-endpoint consistency.

Algorithm~\ref{alg:pmosfm} summarizes the core PMosFM training and one-step sampling procedure.
Algorithms~\ref{alg:pmosfm-training-full} and~\ref{alg:pmosfm-sampling-full} in Appendix~\ref{app:pmosfm-algorithms} provide the complete training and sampling procedures.

\paragraph{Feasibility and conditioning.}
\label{sec:training-inference}

PMosFM evaluates the neural transport map, $\widehat r_1=T_\theta^{0,1}(r_0,c)$, and applies the explicit decoder $\widehat x_1=\Psi_c(\widehat r_1)$.
\begin{proposition}[Exact feasibility]
\label{prop:residual-annihilation}
For fixed $c$ and $\widehat r_1$ in the coordinate domain, Eq.~\ref{eq:exact-chart} gives
\begin{equation}
R_h(\widehat x_1,c)=0.
\label{eq:normal-annihilation}
\end{equation}
The encoded residual vanishes on the valid domain, contributing no Gauss--Newton curvature (see Appendix~\ref{app:exact-manifold-proof}).
\end{proposition}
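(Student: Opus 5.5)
The plan is to compose the definitions and then read off the Gauss--Newton consequence from the chain rule.

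\textbf{Step 1 (exact feasibility).} By Eq.~\ref{eq:preconditioned-coordinate}, $\widehat x_1=\Psi_c(\widehat r_1)=\chi_c(\widehat y_1)$ with $\widehat y_1=\mu_c+C_c^{-1}\widehat r_1$. The hypothesis places $\widehat r_1$ in the valid $r$-domain $C_c(\mathcal Y_c-\mu_c)$. Since $C_c$ is invertible, this is equivalent to $\widehat y_1\in\mathcal Y_c$. Eq.~\ref{eq:exact-chart} then gives $R_h(\widehat x_1,c)=R_h(\chi_c(\widehat y_1),c)=0$. Nothing about $T_\theta^{0,1}$ beyond its landing point enters, so the conclusion holds for every $\theta$ and every source draw $r_0$ whose image stays in the domain.

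\textbf{Step 2 (no Gauss--Newton curvature).} Consider any residual penalty that a PBFM-type method would add, $\mathcal L_R(\theta)=\tfrac12\mathbb E\|R_h(\Psi_c(T_\theta^{0,1}(r_0,c)),c)\|^2$. By Step 1 its integrand vanishes identically in $\theta$ on the valid domain. I would derive the zero curvature in two ways.

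Directly: the penalty is the constant function $0$ on an open parameter set, so its gradient and full Hessian vanish.

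Structurally, matching the decomposition in Lemma~\ref{lem:app-hessian-decomposition}: write $F(\theta)=R_h(\Psi_c(T_\theta^{0,1}),c)$ with Jacobian $J_F=D_xR_h\,J_\chi C_c^{-1}D_\theta T$. Differentiating the identity $R_h(\chi_c(y),c)\equiv 0$ in $y$ gives $D_xR_h\,J_\chi=0$. This is the tangent-space characterization in Lemma~\ref{lem:app-tangent-space}: the columns of $J_\chi$ lie in $\ker D_xR_h$. Hence $J_F=0$, so the Gauss--Newton term $J_F^\top J_F$ vanishes. The residual-weighted second-order term $\sum_i F_i\nabla^2F_i$ also vanishes because $F\equiv0$. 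The total loss Hessian therefore reduces to the transport (FM/PE) contribution.

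\textbf{Main obstacle.} The delicate point is openness. The identity $D_xR_h\,J_\chi=0$ requires $R_h\circ\chi_c\equiv0$ on a neighborhood of $\widehat y_1$, not merely at the point itself. I would invoke the standing assumption from Appendix~\ref{app:mathematical-setting} that $\mathcal Y_c$ is open and that $\chi_c$ is $C^1$ (respectively $C^2$ for the Hessian statement).

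I would also need differentiability of $\theta\mapsto T_\theta^{0,1}$, which follows from Eq.~\ref{eq:two-time-map} whenever $u_\theta$ is smooth. Finally, I need a parameter neighborhood on which outputs remain in the domain, which holds by continuity once $\widehat r_1$ is interior. I would state the curvature claim explicitly as local, on this valid region.
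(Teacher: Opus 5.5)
Your proposal is correct and is essentially the paper's argument: Step 1 is the same composition $R_h(\chi_c(\mu_c+C_c^{-1}\widehat r_1),c)=0$ from Eq.~\ref{eq:exact-chart}, and your structural Step 2 is exactly the paper's differentiation of this identity, $D_a[R_h\circ\Psi_c\circ T_\theta^{0,1}]=J_RJ_\chi C_c^{-1}D_aT_\theta^{0,1}=0$, so the residual Gauss--Newton block vanishes. Your attention to openness makes explicit what the paper leaves implicit. The structural route needs only $\mathcal Y_c$ open around $\widehat y_1$, since $J_RJ_\chi=0$ and $R_h=0$ at that point already kill both Hessian terms; the parameter-neighborhood argument is required only for your ``direct'' constant-function version.
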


Consider a local velocity model linear in the preconditioned inputs, with coefficient matrix $A\in\mathbb R^{m\times m}$.
Let $H_A^{\mathrm{GN}}$ denote the Gauss--Newton matrix of the decoded-endpoint loss with respect to $A$, at fixed $s<t$, $c$, $\delta$, and stopped-gradient target.
Assume that the input covariance $P_{s,c}\Sigma_{s,c}P_{s,c}^{\top}$ is positive definite and that the eigenvalues of $\widetilde G_c$ lie in $[\alpha,\beta]$ throughout the evaluated endpoint region, with $0<\alpha\leq\beta$.
Proposition~\ref{prop:app-local-pe-conditioning} in Appendix~\ref{app:preconditioning-theory} gives
\begin{equation}
\kappa(H_A^{\mathrm{GN}})
\leq
\frac{\beta}{\alpha}\,
\kappa\!\left(P_{s,c}\Sigma_{s,c}P_{s,c}^\top\right),
\label{eq:method-local-conditioning}
\end{equation}
where $\kappa$ denotes the condition number.
The bound separates decoder-geometry and input-covariance effects; the appendices treat approximate decoders, endpoint error, and multiple local charts.

\begin{wrapfigure}{r}{0.53\textwidth}
\centering
\includegraphics[width=\linewidth]{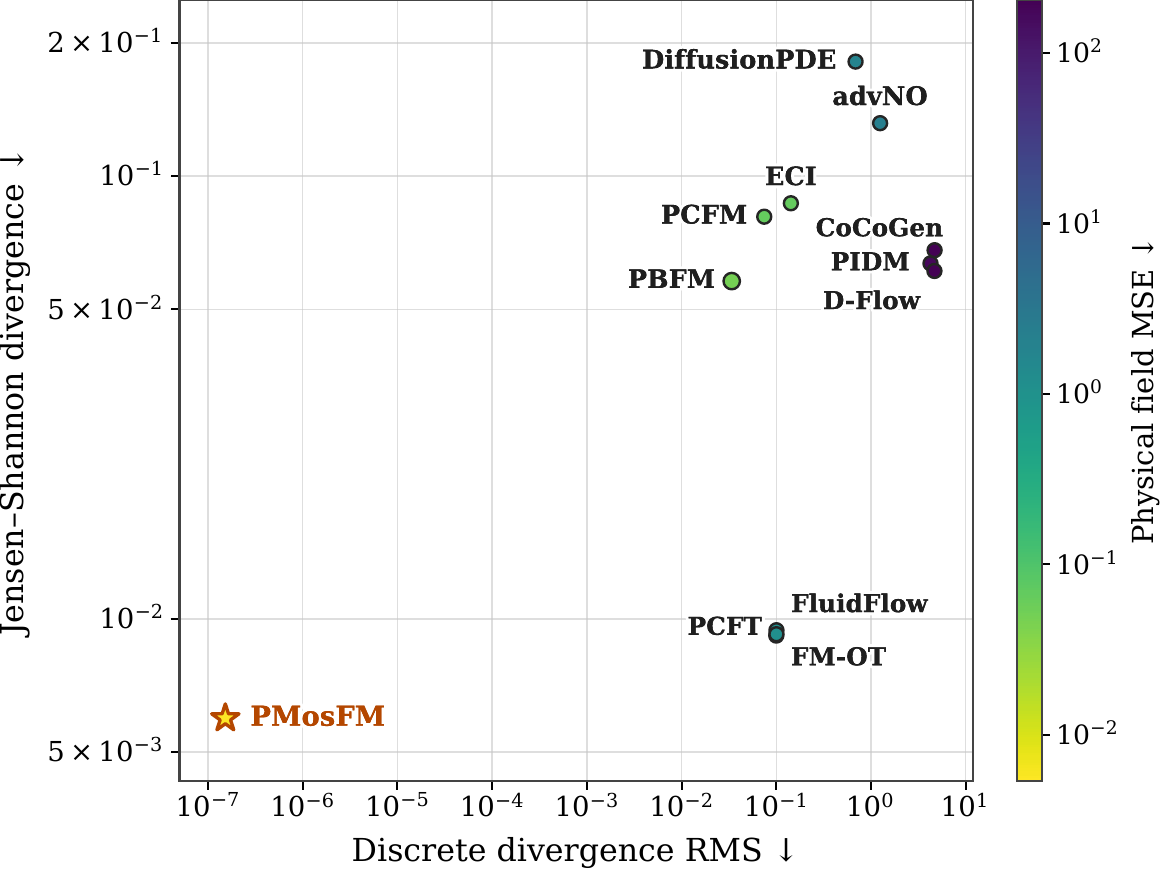}
\caption{Physical and sampling quality comparison.
Lower divergence and MSE are better.}
\label{fig:sampling-quality}
\end{wrapfigure}

\section{Experiments}
\label{sec:experiments}

\subsection{Experimental setup}
\label{sec:experimental_setup}

\paragraph{Datasets.}
We evaluate PMosFM on Darcy flow, dynamic stall, and Kolmogorov-flow generation and reconstruction~\citep{baldan2025pbfm}, together with Burgers~\citep{utkarsh2025pcfm}.
Turbulence forecasting and reconstruction use the existing turbulence
benchmark~\citep{oommen2026turbulence}.
We also evaluate a constrained-PDE reference suite with linear and nonlinear constraints.
Dataset and evaluator details are provided in Appendices~\ref{app:benchmark-taxonomy} and~\ref{app:pde-dataset-details}.

\paragraph{Baselines.}
We compare PMosFM with representative baselines that enforce physics constraints during training or sampling.
Training-time methods include PBFM~\citep{baldan2025pbfm};
sampling-time methods include ECI~\citep{Cheng2025}, PCFM~\citep{utkarsh2025pcfm},
D-Flow~\citep{ben2024d}, CoCoGen~\citep{Jacobsen2025}, and
DiffusionPDE~\citep{huang2024diffusionpde}.
Additional comparisons include PIDM~\citep{Bastek2025} and FM-OT~\citep{lipman2022flow}.
Results taken from prior work are distinguished from locally obtained results.

\begin{wraptable}{r}{0.62\textwidth}
\centering
\caption{Optimizer-update time (s) and peak CUDA memory (GB) across unrolling configurations.}
\label{tab:wall-clock-training}
\scriptsize
\setlength{\tabcolsep}{0.8pt}
\renewcommand{\arraystretch}{0.92}
\resizebox{\linewidth}{!}{%
\begin{tabular}{llcccccc}
\toprule
\multirow[c]{2}{*}{\raisebox{-0.35ex}{\textbf{Benchmark}}} & &
\multicolumn{5}{c}{\textbf{PBFM}} &
\multirow{2}{*}{\shortstack{\textbf{PMosFM}$^\dagger$\\ \textbf{0 unrolling}}} \\
\cmidrule(lr){3-7}
& & \textbf{1 step} & \textbf{2 steps} & \textbf{3 steps} & \textbf{4 steps} & \textbf{Sum} & \\
\midrule
\multirow[c]{2}{*}{Darcy} & [s] & 0.067 & 0.098 & 0.128 & 0.159 & 0.452 & \textbf{0.019} \\
& [GB] & 12.6 & 23.3 & 33.7 & 44.2 & 114 & \textbf{0.128} \\[0.7mm]
\multirow[c]{2}{*}{Kolmogorov} & [s] & 0.194 & 0.302 & 0.410 & 0.518 & 1.42 & \textbf{0.021} \\
& [GB] & 4.52 & 6.91 & 9.56 & 12.2 & 33.2 & \textbf{0.143} \\[0.7mm]
\multirow[c]{2}{*}{\shortstack[l]{Dynamic \\Stall}} & [s] & 0.081 & 0.118 & 0.155 & 0.190 & 0.544 & \textbf{0.056} \\
& [GB] & 4.41 & 7.18 & 9.90 & 12.6 & 34.1 & \textbf{0.258} \\[0.7mm]
\multirow[c]{2}{*}{\shortstack[l]{Turbulence \\reconstruction}} & [s] & 1.38 & 1.51 & 1.47 & 1.63 & 5.99 & \textbf{0.930} \\
& [GB] & 13.3 & 19.1 & 24.8 & 30.6 & 87.8 & \textbf{3.50} \\[0.7mm]
\multirow[c]{2}{*}{\shortstack[l]{Turbulence \\forecasting}} & [s] & 0.038 & 0.080 & 0.098 & 0.099 & 0.315 & \textbf{0.021} \\
& [GB] & 1.53 & 2.30 & 3.07 & 3.83 & 10.7 & \textbf{0.584} \\
\bottomrule
\end{tabular}%
}
\end{wraptable}

\subsection{Training and sampling efficiency}
\label{sec:exp-efficiency}

\paragraph{Training cost accounting.}
For each task, a common physical--distributional validation criterion defines the target quality.
Training cost includes encoding and preconditioner calibration, prerequisite training, and optimizer updates up to the first checkpoint meeting this criterion; validation overhead is reported separately unless full elapsed time is measured.
Table~\ref{tab:wall-clock-training} shows lower optimizer-update time and peak memory
for PMosFM than for every evaluated PBFM unrolling depth across all five tasks.
For PBFM, both quantities generally increase with unrolling depth \citep{baldan2025pbfm}.
PMosFM removes residual unrolling, while the endpoint branches and physical decoder remain part of the training cost.
The training trajectories in Fig.~\ref{fig:pmosfm-training-convergence} show earlier residual reduction for PMosFM, consistent with the convergence benefit sought by preconditioned manifold matching.
The optimization effects of preconditioning are evaluated in Section~\ref{sec:ablation}, with training efficiency detailed in Appendix~\ref{app:training-efficiency-cost}.

\begin{figure*}[t]
 \centering
 \includegraphics[width=\textwidth]{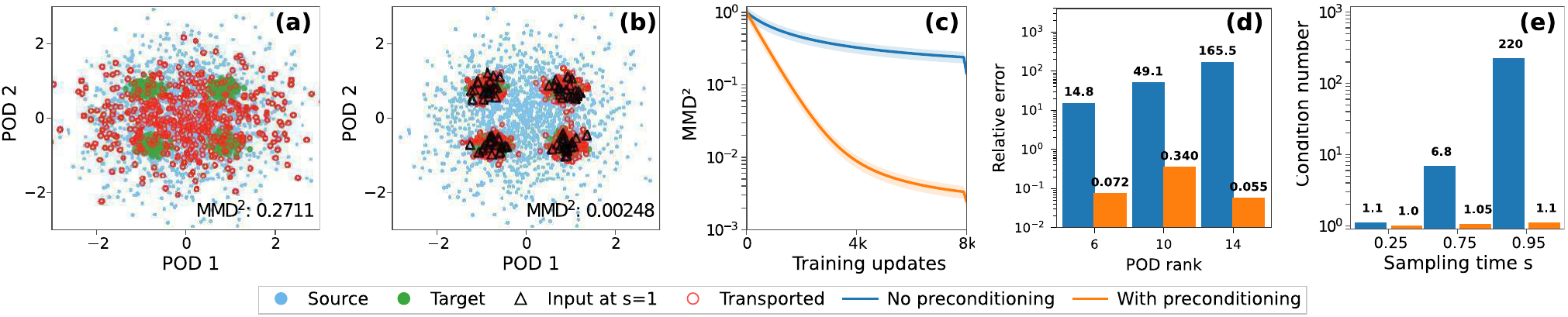}
 \caption{Preconditioning of feasible transport. (a) No preconditioning map. (b) With preconditioning map. (c) MMD$^2$ convergence. (d) Low-variance mode error. (e) Interpolation conditioning.}
 \label{fig:pmosfm-preconditioning}
\end{figure*}

\paragraph{Sampling quality and inference cost.}
We next compare sampling quality and inference time under each benchmark's evaluator.
Fig.~\ref{fig:sampling-quality} shows physical consistency, distributional accuracy, and field error across the evaluated samplers.
Further, Table~\ref{tab:sampling-quality-cost} compares physical and distributional errors, learned-network evaluations, and inference time across five benchmarks.
Baselines rely on multi-step learned transport, requiring 20--200 network evaluations, whereas PMosFM performs one-step evaluation followed by physical decoding.
In particular, PCFM enforces physical constraints during sampling and has higher inference time
than PMosFM in Table~\ref{tab:sampling-quality-cost}.
PMosFM also achieves competitive physical and distributional accuracy across
the benchmarks in Table~\ref{tab:sampling-quality-cost}.
Thus, PMosFM reduces sampling cost while maintaining competitive physical and distributional accuracy.
The sampling time includes physical decoding; decoder tolerances and costs are detailed in Appendix~\ref{app:chart-tolerance}.

\begin{figure*}[t]
\centering
\includegraphics[width=\textwidth]{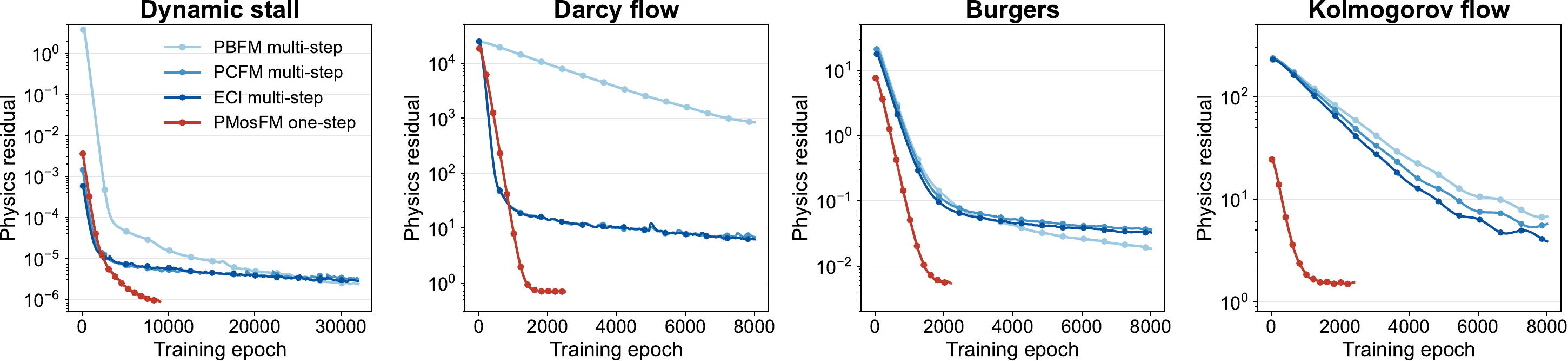}
\caption{Training-residual convergence across physics-constrained benchmarks.}
\label{fig:pmosfm-training-convergence}
\end{figure*}

\subsection{Physics-constrained generation}
\label{sec:exp-generation}

\noindent\textbf{Constraint satisfaction and field fidelity.}
Across the physics-constrained benchmarks, PMosFM preserves the principal solution
structure while keeping the encoded conservation residual near numerical
precision, whereas several baselines retain localized violations around dominant
transition regions
(see Fig.~\ref{fig:constrained-generation}).
The quantitative results further show competitive distributional accuracy
alongside constraint satisfaction
(see Table~\ref{tab:zero-shot-constrained-pde} in
Appendix~\ref{app:zero-shot-constrained-pde}),
indicating that improved physical consistency is achieved while retaining
competitive distributional accuracy.

\begin{figure*}[t]
    \centering
    \includegraphics[width=\linewidth]{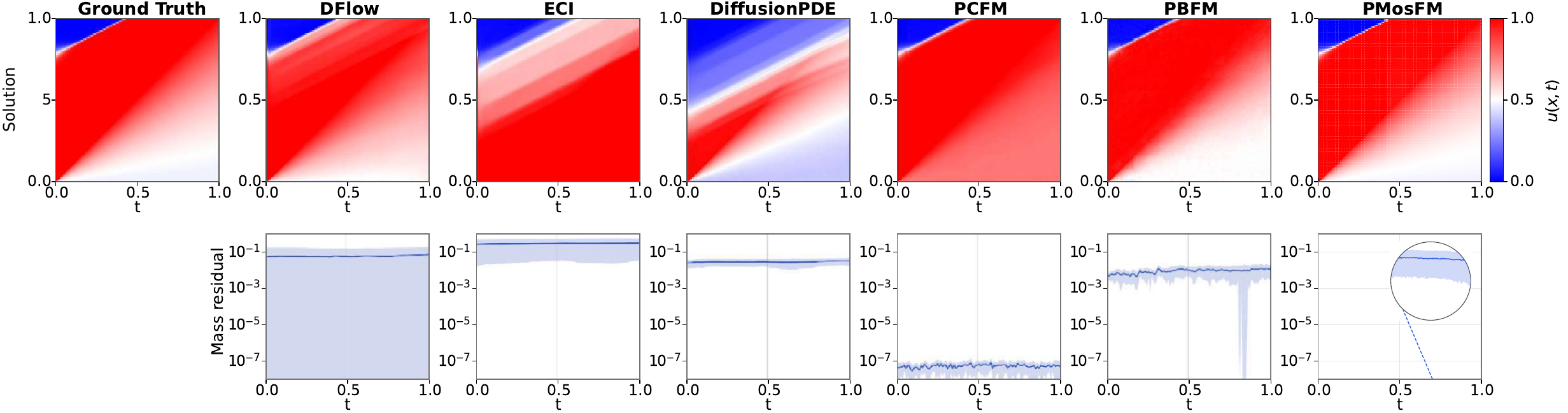}
    \caption{Generated fields and conservation residuals across methods.
    The top row compares generated solutions, while the bottom row shows the conservation-residual trajectories.
    }
    \label{fig:constrained-generation}
\end{figure*}

\noindent\textbf{Held-out turbulence diagnostics.}
On held-out turbulence, PMosFM reproduces the characteristic tear-drop topology
and non-Gaussian tails of the velocity-gradient statistics, while the baselines
underrepresent extreme-strain events
(see Fig.~\ref{fig:hit-forecasting-comparison} in
Appendix~\ref{app:heldout-turbulence-one-step}).
These statistics probe small-scale intermittency beyond the encoded constraint residual,
showing that PMosFM retains fine-scale turbulence structure in one-step generation.

\begin{table}[H]
\centering
\caption{Sampling quality and inference efficiency across benchmarks.
RE: physical residual error; WD: Wasserstein distance; JS: Jensen--Shannon divergence;
NFE: learned network evaluations; IT: inference time.}
\label{tab:sampling-quality-cost}
\small
\setlength{\tabcolsep}{3.2pt}
\renewcommand{\arraystretch}{0.8}
\resizebox{\linewidth}{!}{%
\begin{tabular}{@{}llccccccccc@{}}
\toprule
\textbf{Benchmark} & \textbf{Metric} & \textbf{FM-OT} & \textbf{CoCoGen} &
\textbf{PIDM} & \textbf{DiffusionPDE} & \textbf{D-Flow} & \textbf{ECI} &
\textbf{PCFM} & \textbf{PBFM} & \textbf{PMosFM} \\
\midrule
\multirow{5}{*}{Dynamic Stall}
& RE $\times 10^{6}$ $\downarrow$ & 11.0 & 376 & $2.96\times10^{3}$ & 12.2 & 11.3 & 40.7 & 0.143 & 0.339 & \textbf{0.0421} \\
& WD $\times 10^{4}$ $\downarrow$ & 2.71 & 132 & 179 & 2.51 & 3.48 & 11.1 & 4.01 & 1.81 & \textbf{0.842} \\
& JS $\times 10^{2}$ $\downarrow$ & 0.983 & 1.69 & 4.14 & 1.03 & 1.01 & 0.0570 & 7.46 & 0.680 & \textbf{0.0368} \\
& NFE $\downarrow$ & 20 & 100 & 100 & 20 & 20 & 200 & 20 & 20 & \textbf{1} \\
& IT [s] $\downarrow$ & 0.0598 & 0.184 & 0.0866 & 0.172 & 0.139 & 0.432 & 3.91 & 0.0605 & \textbf{0.00338} \\
\midrule
\multirow{5}{*}{Darcy Flow}
& RE $\downarrow$ & 4.16 & 1.32 & 0.0220 & 3.39 & 2.29 & 3.05 & $4.19\times10^{3}$ & 0.838 & \textbf{0.120} \\
& WD $\times 10^{2}$ $\downarrow$ & 0.0590 & 0.249 & 3.10 & 0.0890 & 0.147 & 2.89 & 3.74 & 0.138 & \textbf{0.0490} \\
& JS $\times 10^{1}$ $\downarrow$ & 0.131 & 0.360 & 3.18 & 0.139 & 0.237 & 2.82 & 0.199 & 0.256 & \textbf{0.0139} \\
& NFE $\downarrow$ & 20 & 100 & 100 & 20 & 20 & 20 & 20 & 20 & \textbf{1} \\
& IT [s] $\downarrow$ & 0.100 & 7.40 & 2.05 & 0.590 & 3.13 & 0.122 & 1.33 & 0.101 & \textbf{0.0178} \\
\midrule
\multirow{5}{*}{Burgers}
& RE $\times 10^{1}$ $\downarrow$ & 4.30 & 404 & $1.57\times10^{3}$ & 3.66 & $3.62\times10^{3}$ & 0.818 & 0.473 & 0.307 & \textbf{0.0435} \\
& WD $\times 10^{2}$ $\downarrow$ & 4.85 & 36.0 & 248 & 0.685 & 638 & 2.55 & 2.73 & 4.02 & \textbf{0.129} \\
& JS $\times 10^{2}$ $\downarrow$ & 1.92 & 12.3 & 31.5 & 0.184 & 52.4 & 0.809 & 0.884 & 1.43 & \textbf{0.0320} \\
& NFE $\downarrow$ & 20 & 20 & 20 & 20 & 20 & 20 & 20 & 20 & \textbf{1} \\
& IT [s] $\downarrow$ & 0.0349 & 0.0354 & 0.0354 & 0.227 & 0.0354 & 0.0902 & 0.134 & 0.0913 & \textbf{0.00237} \\
\midrule
\multirow{5}{*}{Kolmogorov Flow}
& RE $\times 10^{1}$ $\downarrow$ & 2.31 & 13.7 & 14.3 & 1.93 & 6.74 & 0.000 & 1.53 & 1.36 & \textbf{0.0842} \\
& WD $\times 10^{1}$ $\downarrow$ & 2.12 & 3.11 & 2.86 & 3.70 & 1.01 & 0.685 & 0.665 & 1.22 & \textbf{0.382} \\
& JS $\times 10^{2}$ $\downarrow$ & 12.5 & 29.1 & 29.2 & 19.4 & 16.8 & 7.10 & 6.87 & 7.44 & \textbf{1.84} \\
& NFE $\downarrow$ & 20 & 100 & 100 & 20 & 20 & 200 & 200 & 20 & \textbf{1} \\
& IT [s] $\downarrow$ & 0.0988 & 0.0449 & 0.0506 & 0.268 & 6.43 & 0.385 & 0.792 & 0.0990 & \textbf{0.00375} \\
\midrule
\multirow{5}{*}{\shortstack[l]{Turbulence Flow\\Reconstruction}}
& RE $\times 10^{1}$ $\downarrow$ & 5.28 & 1.52 & 12.1 & 0.767 & 9.52 & 0.521 & 0.521 & 0.192 & \textbf{0.0258} \\
& WD $\times 10^{2}$ $\downarrow$ & 2.47 & 1.35 & 3.68 & 1.01 & 2.85 & 1.54 & 1.49 & 1.19 & \textbf{0.684} \\
& JS $\times 10^{2}$ $\downarrow$ & 3.82 & 1.90 & 5.22 & 1.25 & 4.13 & 2.11 & 1.97 & 1.42 & \textbf{0.362} \\
& NFE $\downarrow$ & 20 & 20 & 20 & 20 & 20 & 20 & 20 & 20 & \textbf{1} \\
& IT [s] $\downarrow$ & 0.239 & 0.239 & 0.239 & 0.222 & 0.239 & 0.0301 & 0.0377 & 0.0268 & \textbf{0.00370} \\
\bottomrule
\end{tabular}%
}
\end{table}

\subsection{Underdetermined sparse reconstruction}
\label{sec:exp-underdetermined}

Physical constraints restrict the feasible set but leave sparse reconstruction underdetermined.
Fig.~\ref{fig:hit-mask-comparison} compares PMosFM and the baselines under identical observations and masks on a held-out state.
As missingness increases, the baselines lose spatial organization; with 99\% missing observations, they fail to recover the dominant structure of the reference kinetic-energy field, whereas PMosFM retains an identifiable large-scale pattern.
At 99\% missingness, PMosFM better preserves low-wavenumber energy and
velocity-gradient PDF tails than the baselines.
This test evaluates whether generated fields retain spatial and multiscale structure in unobserved regions beyond constraint satisfaction.
\begin{figure}[H]
\centering
\includegraphics[width=\textwidth]{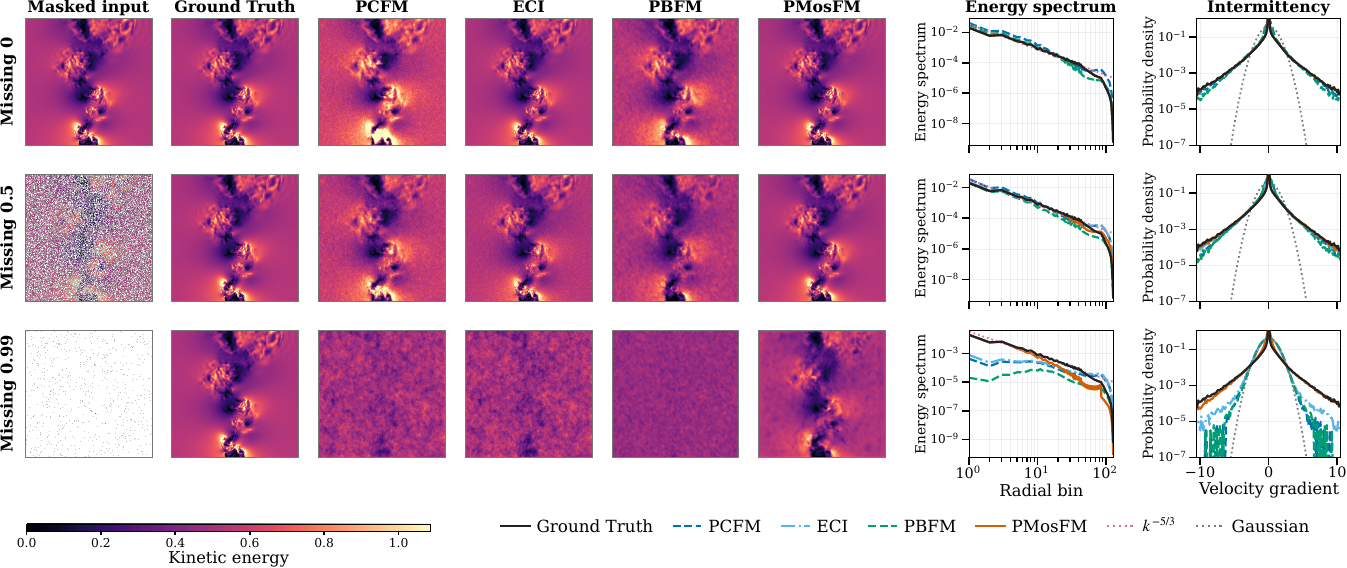}
\caption{Structural recovery under sparse observations.
PMosFM retains large-scale structure under masking across field, spectral, and velocity-gradient diagnostics.
}
\label{fig:hit-mask-comparison}
\end{figure}

\subsection{Ablation study}
\label{sec:ablation}

Ablation experiments isolate the effects of manifold parameterization, metric preconditioning, and endpoint regression.
Fig.~\ref{fig:pmosfm-preconditioning} shows that preconditioning improves distribution matching, reduces errors in low-variance modes, and stabilizes the interpolation covariance seen by the network.
Together, these diagnostics connect the improved training behavior to the statistical conditioning targeted by PMosFM.
Table~\ref{tab:pmosfm-one-step-ablation} benchmarks different one-step velocity regression formulations within PMosFM using energy distance (ED).
PMosFM gives the lowest ED on most benchmarks, whereas Shortcut is stronger on turbulence reconstruction.
The variation suggests that the preferred one-step objective depends on the physical task.
Additional ablation results are reported in Appendix~\ref{app:required-full-objective-ablation}.

\begin{table*}[t]
\centering
\caption{One-step-core ablation within PMosFM. Energy distance (ED; $\downarrow$) is reported; lower is better. KF denotes Kolmogorov flow. Bold denotes the lowest available value in each benchmark row.}
\label{tab:pmosfm-one-step-ablation}
\small
\setlength{\tabcolsep}{3.5pt}
\renewcommand{\arraystretch}{0.84}
\resizebox{\textwidth}{!}{%
\begin{tabular}{lccccccc}
\toprule
\multirow[c]{2}{*}{Benchmark}
& PMosFM- & PMosFM- & PMosFM- & PMosFM- & PMosFM- & PMosFM- & \textbf{PMosFM-} \\
& MeanFlow & iMF & W-Flow & Drifting & SoFlow & Shortcut & \textbf{native} \\
\midrule
Burgers
& 0.0758 & 0.0765 & 0.0518 & 0.0713 & 0.0714 & 0.0849 & \textbf{0.0374} \\
Dynamic Stall
& 0.1403 & 0.1315 & 0.0594 & 0.0442 & 0.1352 & 0.0455 & \textbf{0.0036} \\
Darcy Flow
& 0.0277 & 0.0341 & 0.0163 & 0.0233 & 0.0317 & 0.0158 & \textbf{0.0043} \\
KF Reconstruction
& 0.0144 & 0.0145 & 0.0574 & 0.0476 & 0.0249 & 0.0104 & \textbf{0.0037} \\
KF Generation
& 2.0645 & 2.1278 & 1.6020 & 2.1451 & 2.0887 & 1.9672 & \textbf{1.5922} \\
Turbulence Reconstruction
& 1.0776 & 1.0836 & 1.0829 & 1.0829 & 1.0727 & \textbf{0.9423} & 0.9632 \\
Turbulence Forecasting
& 1.5448 & 1.4719 & 0.8767 & 0.8658 & 1.5019 & 0.9752 & \textbf{0.7507} \\
\bottomrule
\end{tabular}%
}
\end{table*}

\section{Conclusion}
\label{sec:conclusion}

In this work, we present PMosFM, a precondition--manifold--matching framework for one-step physics-constrained generation that separates encoded feasibility from learned transport.
Feasible coordinates remove encoded residual-normal directions, while geometric and interpolation-state preconditioning condition the remaining transport, and finite-interval matching compares decoded endpoints in physical space.
Across the evaluated PDE settings, PMosFM achieves competitive physical and distributional fidelity with near-zero encoded residuals; controlled studies show better conditioning and faster distributional convergence with preconditioning.
PMosFM also reduces measured optimizer-update time relative to PBFM, and one-step sampling with physical decoding lowers inference time compared with the evaluated multi-step baselines.

\section*{AI Use Statement}
Generative AI tools were used to assist with manuscript organization,
language editing, revision, and figure preparation.
All AI-assisted content was reviewed and revised by the authors;
experimental measurements were produced by the reported computational
pipelines, and the mathematical derivations, experimental results,
references were independently verified.
The authors take full responsibility for the final content of this work,
including all AI-assisted text and artifacts.

\bibliography{iclr2027_conference}
\bibliographystyle{iclr2027_conference}

\newpage
\appendix
\onecolumn

\section{Notation}
\label{app:notation}

This table summarizes the notation adopted throughout the paper and appendix.
Symbols are grouped by residual manifold, preconditioning, and finite-interval matching.

\begin{table}[h]
\small
\centering
\renewcommand{\arraystretch}{1.4}
\caption{Summary of the notation adopted throughout the paper and appendix.}
\label{tab:notation}
\setlength{\tabcolsep}{3pt}
\begin{tabular}{|>{\centering\arraybackslash}m{0.17\textwidth}|>{\raggedright\arraybackslash}m{0.19\textwidth}|>{\raggedright\arraybackslash}m{0.54\textwidth}|}
\hline
\rowcolor{gray!30}
\textbf{Symbol} & \textbf{Name} & \textbf{Description} \\
\hline
$(x,\mathcal X_h,c,h)$ & state space, condition, mesh & $x\in\mathcal X_h\subseteq\mathbb R^d$ is a discretized physical state; $c$ is the condition; $h$ is the spatial discretization. \\
$R_h(x,c)\in\mathbb R^{n_R}$ & discrete residual & Residual that evaluates the hard physical constraint; $n_R$ is the number of residual components. \\
$(\mathcal Z_c,\mathcal M_c^\chi)$ & residual-zero set & $\mathcal Z_c=\{x:R_h(x,c)=0\}$; $\mathcal M_c^\chi=\chi_c(\mathcal Y_c)\subseteq\mathcal Z_c$ is the component represented by the model. \\
$(y,\mathcal Y_c,m)$ & intrinsic coordinates & $y\in\mathcal Y_c\subset\mathbb R^m$ represents the independent degrees of freedom; $m=d-q$ under the regular-rank assumption. \\
$(\chi_c,\eta_c)$ & constraint-satisfying parameterization and encoder & $R_h\circ\chi_c\equiv0$; $\eta_c$ encodes feasible data or provides a measurable selection. \\
$(J_\chi,J_R)$ & coordinate and residual Jacobians & $J_\chi=D_y\chi_c$, $J_R=D_xR_h$ satisfy $J_RJ_\chi=0$; regular-rank conditions identify $\mathop{\mathrm{range}}\nolimits J_\chi$ with the tangent space. \\
$(M_h,M_x)$ & physical metric & Fixed positive-definite physical or quadrature metric; $M_x$ abbreviates $M_h$ when $h$ is implicit. \\
$G_c(y)$ & pullback metric & Physical metric in intrinsic coordinates: $G_c(y)=J_\chi(y,c)^\top M_hJ_\chi(y,c)$. \\
\hline
\rowcolor{gray!15}
$(C_c,\mu_c,\Psi_c)$ & geometric setting & $r=C_c(y-\mu_c)$ and $\Psi_c(r)=\chi_c(\mu_c+C_c^{-1}r)$. \\
$(r_0,r_1,r_s,w)$ & interpolation pair & $r_s=(1-s)r_0+sr_1$ and $w=r_1-r_0$. \\
$(m_{s,c},\Sigma_{s,c},P_{s,c})$ & input preconditioner & Mean, covariance, and $P_{s,c}=(\Sigma_{s,c}+\varepsilon_P I)^{-1/2}$. \\
$(T_\theta^{s,t},u_\theta)$ & finite-interval map and velocity & $T_\theta^{s,t}(r,c)=r+(t-s)u_\theta(P_{s,c}(r-m_{s,c}),s,t,c)$. \\
$(a_\theta,b_{\bar\theta})$ & endpoint estimates & Endpoint estimates from nearby interpolation times, with gradients stopped through the second branch. \\
$(\mathcal L_{\mathrm{FM}},\mathcal L_{\mathrm{PE}})$ & matching losses & Diagonal velocity FM loss and off-diagonal decoded-endpoint loss. \\
\hline
\rowcolor{gray!15}
$(e_\theta,d_\theta)$ & map defects & Coordinate flow-map defect and the corresponding decoded physical defect. \\
$(\gamma,\delta,\varepsilon_P)$ & matching parameters & Branch weight, finite-difference interval, and covariance regularizer. \\
\hline
\rowcolor{gray!15}
$(a,J_G)$ & differentiation variable and endpoint Jacobian & $a$ is an input or parameter; $J_G=D_aG_\phi$ is the corresponding endpoint Jacobian. \\
$H_{\mathrm{GN},a}^{R}$ & residual Gauss--Newton block & $J_G^\top J_R^\top W_RJ_RJ_G$; exact residual factorization makes this block vanish for the encoded residual after composition with the decoder. \\
$(\varepsilon_{\mathrm{chart}},\delta_{\mathrm{chart}})$ & approximation error bounds & Bounds on residual magnitude and pulled-back residual differential. \\
$(\sigma_{\min}^{+},\kappa_{+})$ & active spectral quantities & Smallest nonzero singular value and condition number on the active subspace. \\
\hline
\end{tabular}
\end{table}

\section{Detailed Training and Sampling Procedures}
\label{app:pmosfm-algorithms}

The main-text algorithm exposes the computation that distinguishes PMosFM.
This appendix separates calibration and training from one-step physical sampling.

\begin{figure}[H]
\centering
\begin{minipage}[t]{0.57\linewidth}
\begin{algorithm}[H]
\caption{Full training procedure of PMosFM}
\label{alg:pmosfm-training-full}
\small
\begin{algorithmic}[1]
\Require Data $p_{\mathrm{data}}(x\mid c)$; source $p_0(r\mid c)$
\Statex Model $u_\theta$; encoder--decoder $(\eta_c,\chi_c)$
\Statex Metric $M_h$; $\gamma$; $\varepsilon_P$
\Ensure Trained $\theta$ and frozen $(\mu_c,C_c,m_{s,c},P_{s,c})$
\State Estimate $(\mu_c,C_c)$ from the training data
\State Using $r_1=C_c(\eta_c(x_1)-\mu_c)$ and
\Statex $r_s=(1-s)r_0+sr_1$, estimate $m_{s,c}\gets\mathbb E[r_s\mid c]$,
\Statex $\Sigma_{s,c}\gets\operatorname{Cov}(r_s\mid c)$, $\quad P_{s,c}\gets(\Sigma_{s,c}+\varepsilon_P I)^{-1/2}$
\State Fix $(\mu_c,C_c,m_{s,c},P_{s,c})$
\While{not converged}
    \State Select $c$; draw $0\le s<t\le1$ and $0<\delta<t-s$
    \State $x_1\sim p_{\mathrm{data}}(\cdot\mid c)$, $r_0\sim p_0(\cdot\mid c)$
    \Statex $\qquad r_1\gets C_c(\eta_c(x_1)-\mu_c)$
    \State $r_s\gets(1-s)r_0+sr_1$, $w\gets r_1-r_0$
    \Statex $\qquad r_{s+\delta}\gets r_s+\delta w$
    \State $\displaystyle
    \ell_{\mathrm{FM}}\gets
    \frac{1}{2m}
    \left\|
    u_\theta\!\left(P_{s,c}(r_s-m_{s,c}),s,s,c\right)-w
    \right\|_2^2$
    \State $\bar\theta\gets\operatorname{sg}(\theta)$
    \State $a_\theta\gets T_\theta^{s,t}(r_s,c)=r_s+(t-s)u_\theta\!\bigl($
    \Statex $\qquad P_{s,c}(r_s-m_{s,c}),s,t,c\bigr)$
    \State $b_{\bar\theta}\gets T_{\bar\theta}^{s+\delta,t}(r_{s+\delta},c)=r_{s+\delta}+(t-s-\delta)u_{\bar\theta}\!\bigl($
    \Statex $\qquad P_{s+\delta,c}(r_{s+\delta}-m_{s+\delta,c}), s+\delta,t,c\bigr)$
    \State $\displaystyle
    \ell_{\mathrm{PE}}\gets
    \frac{
    \left\|
    \Psi_c(a_\theta)
    -\operatorname{sg}\!\left[\Psi_c(b_{\bar\theta})\right]
    \right\|_{M_h}^{2}}
    {2m\,\delta(t-s)}$
    \State $\displaystyle
    \theta\gets
    \operatorname{Update}\!\left[
    \theta,\,
    \gamma\ell_{\mathrm{FM}}
    +(1-\gamma)\ell_{\mathrm{PE}}
    \right]$
\EndWhile
\State \Return $\theta$ and $(\mu_c,C_c,m_{s,c},P_{s,c})$
\end{algorithmic}
\end{algorithm}
\end{minipage}
\hfill
\begin{minipage}[t]{0.39\linewidth}
\begin{algorithm}[H]
\caption{Full one-step sampling procedure of PMosFM}
\label{alg:pmosfm-sampling-full}
\small
\begin{algorithmic}[1]
\Require Trained $\theta$; source $p_0(r\mid c)$; condition $c$;
frozen $(\mu_c,C_c,m_{0,c},P_{0,c})$;
decoder $\Psi_c$
\Ensure Physics-consistent sample $\widehat x_1$
\State Sample $r_0\sim p_0(\cdot\mid c)$
\State Form the centered source coordinate
\[
r_0-m_{0,c}
\]
\State Apply the frozen input preconditioner
\[
P_{0,c}(r_0-m_{0,c})
\]
\State Evaluate the full-interval map once
\[
\widehat r_1
\gets
T_\theta^{0,1}(r_0,c)
\]
\State Expand the one-step map as
\[
\widehat r_1
=
r_0+
u_\theta\!\left(
P_{0,c}(r_0-m_{0,c}),0,1,c
\right)
\]
\State Apply the inverse geometric transform
\[
\mu_c+C_c^{-1}\widehat r_1
\]
\State Decode on the feasible manifold
\[
\widehat x_1
\gets
\chi_c\!\left(
\mu_c+C_c^{-1}\widehat r_1
\right)
=
\Psi_c(\widehat r_1)
\]
\State \Return $\widehat x_1$
\end{algorithmic}
\end{algorithm}
\end{minipage}
\label{fig:pmosfm-detailed-procedures}
\end{figure}

The calibration quantities are estimated from the training split and then held fixed.
Algorithm~\ref{alg:pmosfm-training-full} expands the two-time objective in Eq.~\ref{eq:pmosfm-loss}, while Algorithm~\ref{alg:pmosfm-sampling-full} evaluates $T_\theta^{0,1}$ once before physical decoding.
Iterations in a solver-based decoder belong to physical decoding rather than learned transport and are accounted for separately in Appendix~\ref{app:chart-tolerance}.

\section{Dataset Details}
\label{app:pde-dataset-details}

This section records the PDE system, data split, conditioning variables, normalization, and physical diagnostic for each benchmark.
The evaluated tasks cover elliptic flow, nonlinear conservation laws, compressible aerodynamics, turbulence, and inverse problems.
All normalization statistics are estimated on the training partition and remain fixed for validation and test data.
Generated fields are mapped back to physical units before residual evaluation whenever inverse normalization is available.
Each benchmark-specific comparison shares one split and one evaluator.
When the benchmark provides the state, forcing, manifold, and boundary variables required by the governing equations, we evaluate the full prescribed residual; otherwise, we report the identifiable residual components.

\subsection{Darcy Flow}
Each sample contains a pressure field \(p(\vx)\) and permeability field \(K(\vx)\) on a two-dimensional grid.
The data follow the steady Darcy system
\begin{equation*}\vu=-K\nabla p,
\qquad
\nabla\!\cdot\vu=f,
\qquad
\vu\!\cdot\vn=0\ \text{on }\partial\Omega,
\qquad
\int_\Omega p\,\mathrm{d}\vx=0.
\end{equation*}
The benchmark pairs log-Gaussian permeability with steady Darcy pressure, following PBFM~\citep{baldan2025pbfm}.
Both fields are standardized by channel.
For common evaluation, predictions are de-normalized before applying the prescribed residual operator.

\subsection{Dynamic Stall}
The data are obtained from unsteady compressible two-dimensional RANS simulations of a sinusoidally pitching NACA0012 airfoil~\citep{baldan2025pbfm}.
The operating condition contains the free-stream Mach number, mean angle of attack, pitching amplitude, and reduced frequency.
The dataset contains 128 nominal training conditions with 32 realizations per condition and 16 held-out conditions.
The six generated channels encode pressure, two tangential-velocity-gradient components, temperature, density, and signed wall shear stress on a \(128\times128\) spatio-temporal surface grid; two additional channels provide the local surface direction.
After de-normalization, the two algebraic residuals are
\begin{equation*}r_{\mathrm{gas}}=p-\rho R T,
\qquad
r_{\tau}=\tau_w-\mathop{\mathrm{sign}}\nolimits(\vmu\!\cdot\vs)
\lVert\vmu\rVert_2,
\end{equation*}
where \(\vmu=\mu(T)(\partial_x u_s,\partial_y u_s)\), \(\vs\) is the local surface direction, and \(\mu(T)\) is given by Sutherland's law.
The manifold-conditioned implementation follows the same local direction as the external residual, so the parameterization and evaluator share a sign convention.

\subsection{Burgers}
The Burgers benchmark consists of trajectories at viscosity \(\nu=10^{-3}\).
The first 1,000 trajectories form the training partition, indices 1,000--1,499 form validation, and the remaining trajectories form the test partition.
We remove the duplicated terminal slice and standardize the resulting space--time field with training statistics.
For a generated trajectory \(u(t,x)\), the diagnostic is
\begin{equation*}r_{\mathrm{B}}=\partial_tu+u\,\partial_xu-\nu\,\partial_{xx}u,
\end{equation*}
with centered differences, periodic padding in space, and replicated temporal endpoints.
We also apply periodic RK4 pseudospectral evolution; because its discretization differs from the finite-difference evaluator, we report solver results separately from residual RMS.

\subsection{Turbulence Mass Transfer}
Each sample contains horizontal velocity, vertical velocity, pressure, and signed distance to the immersed manifold as four aligned channels.
The first 900 samples form the training set, and the remainder forms the validation set with channel-wise training normalization.
From these variables, we define a field-level manifold-consistency evaluator comprising discrete incompressibility, normal velocity near the zero level set, signed-distance Eikonal error, and weak pressure smoothness:
\begin{equation*}R_{\mathrm{TMT}}=
\bigl[\nabla\!\cdot\vu,
e^{-(\phi/\delta)^2}\vu\!\cdot\vn_\phi,
0.2(\lVert\nabla\phi\rVert_2-1),
0.02\Delta p\bigr].
\end{equation*}

\subsection{Kolmogorov Flow}
The conditional dataset is generated at Reynolds numbers in \([100,500]\) on a \(128\times128\) periodic grid.
The training set covers 32 Reynolds numbers and the held-out set 16 Reynolds numbers, with 1,024 statistically stationary velocity snapshots per condition.
The two output channels are the planar velocity components, and the Reynolds number is the conditioning variable.
Both the encoded residual and the independent parameterization comparison measure incompressibility,
\begin{equation*}r_{\mathrm{KF}}=\partial_xu+\partial_yv,
\end{equation*}
after restoring channel scales.
The streamfunction parameterization and Fourier evaluator use compatible periodic derivatives.
Central differences reveal grid sensitivity in the residual.

\subsection{Turbulence Forecasting}
For turbulence forecasting, we adopt the homogeneous-isotropic-turbulence benchmark of \citet{oommen2026turbulence}.
Each state contains four physical channels on a three-dimensional periodic grid.
Each target is conditioned on the preceding four frames, with a one-frame forecast horizon.
After the four-frame source offset, frames 0--159 form the training set, frames 160--179 form the validation set, and the remaining complete windows form the test set.
Normalization follows the benchmark means and standard deviations when available.
The physical consistency diagnostic is the periodic three-dimensional divergence of the first three channels,
\begin{equation*}r_{\mathrm{div}}=\partial_xu+\partial_yv+\partial_zw,
\end{equation*}
A Fourier Helmholtz projection acts on the three velocity channels; the fourth is unchanged.

\subsection{Turbulence Flow Reconstruction}
This task contains four-channel, three-dimensional Gen4Turbulence fields and constructs random point masks on each frame.
Frames 0--149, 150--167, and 168 onward define the training, validation, and test partitions, respectively.
The condition concatenates the normalized observed field \(\vy=\mathbf{M}\odot\vx\) and the binary mask \(\mathbf{M}\); validation and test masks are fixed functions of the frame index.
The common residual stacks measurement consistency with velocity divergence,
\begin{equation*}R_{\mathrm{rec}}(\vx;\vy,\mathbf{M})=
\bigl[\tfrac14(\mathbf{M}\odot\vx-\vy),\ \nabla\!\cdot\vu\bigr].
\end{equation*}
The one-pass decoder first performs a Fourier divergence-free projection and then overwrites observed entries.
Because the observation overwrite can reintroduce divergence after the Fourier projection, we evaluate measurement consistency and incompressibility separately.

\subsection{Kolmogorov-flow Generation}
The KF generation benchmark consists of planar Kolmogorov-flow velocity fields \((u,v)\), represented as trajectories or individual snapshots.
Trajectory and time axes are flattened into a common sample axis, followed by a deterministic 90/10 train--validation partition and channel-wise training normalization.
The diagnostic concatenates periodic divergence, a weak Laplacian penalty on vorticity, and a weak Laplacian penalty on kinetic energy:
\begin{equation}R_{\mathrm{KFG}}=
\bigl[\nabla\!\cdot\vu,\ 0.02\Delta\omega,\ 0.01\Delta(u^2+v^2)\bigr].
\label{eq:app-kfg-residual}
\end{equation}
Divergence checks incompressibility; the other terms capture high-frequency artifacts.

\subsection{Kolmogorov-flow Reconstruction}
The target fields, split, and normalization are identical to Kolmogorov-flow Generation; the model is conditioned on a low-resolution observation.
During training, a factor is drawn from the set \(\{1,2,4\}\); the velocity is subsampled by that factor and bilinearly returned to the target grid before conditioning the model.
The evaluation residual is the same \(R_{\mathrm{KFG}}\) as in Eq.~\ref{eq:app-kfg-residual}, while reconstruction error is computed against the full-resolution held-out target.
Separating observation fidelity from flow diagnostics keeps reconstruction accuracy and physical consistency as distinct criteria.

\section{Additional Experiments}
\label{app:main-result-tables}

This appendix reports supplementary evaluation protocols and results for the
PMosFM objective in Eq.~\ref{eq:pmosfm-loss}.
Validation data are used for model and checkpoint selection, and all
normalization statistics are estimated from the training partition.
PMosFM preconditioners are calibrated on the training data and held fixed for
validation and test evaluation.

\paragraph{Measures and comparison protocol.}
Residual checks are applied in physical units whenever inverse normalization is available.
The analytic study of the feasible set and target distribution reports residual RMS, KL divergence, and total variation against the specified co-area law.
The conditioning study reports active condition numbers and the iterations required by a fixed stable-step rule.
The constrained-PDE reference protocol reports MMSE, SMSE, constraint error, and Fr\'echet Poseidon distance; lower values indicate closer agreement with the reference distribution or constraint.
Physics-constrained generation evaluations report task-specific field errors, prescribed residuals or named proxies, and distributional statistics.
Timing records separate learned-network evaluations from preconditioning and physical decoding costs.
Each table caption identifies the common evaluator and whether the comparison shares training budgets or matched quality.

\begin{table}[H]
\centering
\caption{Summary of constraint types in the experimental suite, categorized by mathematical form and scope.}
\label{tab:pmosfm-constraint-charts}
\small
\setlength{\tabcolsep}{4pt}
\renewcommand{\arraystretch}{1.13}
\begin{tabularx}{\linewidth}{@{}>{\raggedright\arraybackslash}p{0.24\linewidth}>{\centering\arraybackslash}p{0.49\linewidth}>{\raggedright\arraybackslash}X@{}}
\toprule
\textbf{Constraint type} & \textbf{Representative form} & \textbf{Linearity} \\
\midrule
Affine IC / BC / observations & \(\begin{gathered}R_h(x,c)=Ax-b=0,\\ x=x_p+Ny,\; AN=0\end{gathered}\) & Linear \\
Global conservation & \(\begin{gathered}\ell^{\top}x-C=0,\\ \text{conservation-preserving or mean-free coordinates}\end{gathered}\) & Linear \\
Differential compatibility & \(\begin{gathered}D_xu+D_yv=0,\\ (u,v)=(D_y\psi,-D_x\psi)\end{gathered}\) & Linear \\
Algebraic constitutive relation & \(\begin{gathered}g(x,c)=0,\\ \text{eliminate dependent variables}\end{gathered}\) & Nonlinear \\
Implicit PDE constraint & \(\begin{gathered}F_h(z^\star(y,c),y,c)=0,\\ z=z^\star(y,c)\end{gathered}\) & Potentially nonlinear \\
General nonlinear coupled constraint & \(\begin{gathered}R_h(x,c)=0,\\ \text{local parameterization or approximate decoder}\end{gathered}\) & Nonlinear \\
\bottomrule
\end{tabularx}
\end{table}

\subsection{Comprehensive Training Cost and Memory Footprint}
\label{app:training-efficiency-cost}

For a complete method, training and sampling costs are accounted for separately as
\begin{align*}T_{\mathrm{train}}(\epsilon)&=T_{\mathrm{calibration}}+N_{\mathrm{update}}(\epsilon)\,\overline t_{\mathrm{update}},
\\
T_{\mathrm{sample}}&=T_{\mathrm{network}}+T_{\mathrm{precondition}}+T_{\mathrm{decode}}.
\end{align*}
where $\epsilon$ denotes a common quality criterion, $N_{\mathrm{update}}(\epsilon)$ is the number of optimizer updates required to first meet this criterion, and $\overline t_{\mathrm{update}}$ is the mean update time.
$T_{\mathrm{calibration}}$ includes estimation of the fixed preconditioner, while $T_{\mathrm{network}}$, $T_{\mathrm{precondition}}$, and $T_{\mathrm{decode}}$ denote learned-network evaluation, preconditioning, and physical decoding costs, respectively.

For a curriculum of PBFM unrolling depths, the optimizer-update component is computed from the actual number of updates executed at each depth:
\begin{equation*}T_{\mathrm{updates}}=\sum_n U_n\,\overline t_n,
\end{equation*}
where $U_n$ counts executed updates and $\overline t_n$ is their mean time at depth $n$.
The single-update measurements in Table~\ref{tab:wall-clock-training} represent separate unrolling configurations, not additive components; prerequisite training and calibration count once, and validation overhead is included only in full elapsed time.

\paragraph{Implementation details.}
Training-cost profiling used an NVIDIA RTX PRO 6000 Blackwell Server Edition
(96~GB VRAM) with PyTorch 2.7.0+cu128 and batch size 8. We recorded 80
optimizer updates and discarded the first ten as warmup; CUDA timing was
synchronized, and peak memory was measured with
\texttt{torch.cuda.max\_memory\_allocated}.

Table~\ref{tab:full-aligned-training-cost} compares optimizer-update time and
peak memory across five physical benchmarks. PMosFM has the lowest peak memory
on four benchmarks and the second-lowest on turbulence reconstruction; its
update time is lowest on Burgers and second-lowest on turbulence
reconstruction. This pattern is consistent with separating feasibility from
learned transport: feasible coordinates and the removal of terminal residual
unrolling reduce memory and update cost relative to PBFM.

\begin{table*}[t]
\centering
\caption{Training time (s per optimizer update) and peak allocated GPU memory (GB) across the benchmarks and baselines in Table~\ref{tab:sampling-quality-cost}. Lower is better; bold marks PMosFM.}
\label{tab:full-aligned-training-cost}
\small
\renewcommand{\arraystretch}{1.25}
\resizebox{\linewidth}{!}{
\begin{tabular}{llccccccccc}
\toprule
\textbf{Benchmark} & \textbf{Metric} & \textbf{FM-OT} & \textbf{CoCoGen} & \textbf{PIDM} & \textbf{DiffusionPDE} & \textbf{D-Flow} & \textbf{ECI} & \textbf{PCFM} & \textbf{PBFM} & \textbf{PMosFM} \\
\midrule
\multirow{2}{*}{\textbf{Dynamic Stall}} 
 & Train s/iter $\downarrow$ & 0.0183 & 0.0090 & 0.0094 & 0.0511 & 0.0470 & 0.0169 & 0.0183 & 0.5440 & \textbf{0.0563} \\
 & Peak GB $\downarrow$      & 0.4027 & 0.9467 & 0.9465 & 2.1603 & 0.9475 & 0.4027 & 0.4027 & 34.1000 & \textbf{0.2580} \\
\midrule
\multirow{2}{*}{\textbf{Darcy Flow}} 
 & Train s/iter $\downarrow$ & 0.0289 & 0.0073 & 0.0065 & 0.0517 & 0.0170 & 0.0276 & 0.0289 & 0.4520 & \textbf{0.0185} \\
 & Peak GB $\downarrow$      & 0.5621 & 0.2412 & 0.2412 & 2.1569 & 0.2418 & 0.5621 & 0.5621 & 114.0000 & \textbf{0.1280} \\
\midrule
\multirow{2}{*}{\textbf{Burgers}} 
 & Train s/iter $\downarrow$ & 0.0246 & 0.0249 & 0.0245 & 0.0513 & 0.0240 & 0.0091 & 0.0091 & 0.0126 & \textbf{0.0057} \\
 & Peak GB $\downarrow$      & 2.2491 & 2.2465 & 2.2465 & 2.1565 & 2.2465 & 0.9487 & 0.9487 & 0.7245 & \textbf{0.5676} \\
\midrule
\multirow{2}{*}{\textbf{Kolmogorov Flow}} 
 & Train s/iter $\downarrow$ & 0.0438 & 0.0088 & 0.0092 & 0.0940 & 0.0146 & 0.0420 & 0.0438 & 1.4200 & \textbf{0.0208} \\
 & Peak GB $\downarrow$      & 1.5317 & 0.9359 & 0.9359 & 2.1569 & 0.9350 & 1.5317 & 1.5317 & 33.2000 & \textbf{0.1430} \\
\midrule
\multirow{2}{*}{\textbf{Turbulence Flow Rec.}} 
 & Train s/iter $\downarrow$ & 0.9988 & 2.0501 & 1.0805 & 0.0766 & 1.0249 & 1.0076 & 0.9988 & 5.9900 & \textbf{0.9300} \\
 & Peak GB $\downarrow$      & 9.7720 & 16.5738 & 16.5738 & 2.1596 & 16.5738 & 9.7720 & 9.7720 & 87.8000 & \textbf{3.5000} \\
\bottomrule
\end{tabular}
}
\begin{minipage}{0.98\linewidth}
\footnotesize
PBFM entries use the four-step Sum and PMosFM entries the one-step measurement.
\end{minipage}
\end{table*}

\subsection{Benchmark summary}
\label{app:benchmark-taxonomy}

Table~\ref{tab:pmosfm-constraint-charts} summarizes the constraint forms covered by the experimental suite.
The constrained-PDE reference suite combines affine initial or boundary constraints with linear or nonlinear conservation laws.
The physics-constrained generation studies additionally cover compatible flow fields, algebraic closures, solver-embedded PDE decoders, and coupled constraints.
A parameterization specifies feasible support, while each task separately declares the target law and physical evaluator.

\begin{table*}[t]
\centering
\caption{Benchmark summary.
Physical checks assess conservation; statistical metrics assess coverage.}
\label{tab:benchmark-matrix}
\tablefont
\setlength{\tabcolsep}{3.8pt}
\begin{tabularx}{\textwidth}{@{}l l Y Y@{}}
\toprule
Benchmark & Task & Physical check & Distributional check \\
\midrule
Darcy flow & Field generation & PDE/boundary residual & WD/JS; field moments \\
Dynamic stall & Conditional field & Wall-shear residual & MSE; shock statistics \\
Burgers & Nonlinear field & Conservation/boundary residual & WD/JS; shock location \\
\midrule
Kolmogorov flow generation & Turbulence field & Incompressibility/flow statistics & MMD/WD; spectra; diversity \\
Kolmogorov flow reconstruction & Conditional inverse & Observation/flow consistency & MSE; CRPS/coverage \\
\midrule
Turbulence forecast & Forecasting & Temporal/flow diagnostics & Forecast MSE; spectra; long-horizon stats \\
Turbulence reconstruction & Sparse inverse & Observation/flow consistency & MSE; CRPS/coverage; spectra \\
\bottomrule
\end{tabularx}
\end{table*}

\subsection{Zero-shot constrained-PDE reference}
\label{app:zero-shot-constrained-pde}

We compare sampling methods across PDEs with linear and nonlinear constraints under the common pretrained FFM backbone of the constrained-PDE protocol~\citep{utkarsh2025pcfm}.
Table~\ref{tab:benchmark-matrix} pairs each task's physical checks with
field or distributional metrics, separating constraint satisfaction from sample quality.
Following \citet{kerrigan2023functional} for pointwise functional statistics and \citet{Cheng2025} for constraint-aware feature-space evaluation, Table~\ref{tab:zero-shot-constrained-pde} reports pointwise mean MSE (MMSE), 
standard-deviation MSE (SMSE), Fr\'echet Poseidon distance (FPD), and constraint error (CE).
MMSE and SMSE compare the generated and reference field moments, whereas FPD compares their feature distributions with a pretrained Poseidon encoder~\citep{herde2024poseidon}.
For a constraint type $*\in\{\mathrm{IC},\mathrm{BC},\mathrm{CL}\}$, CE is the residual $\ell_2$ norm over the relevant domain, averaged over $N$ generated samples:
\[
  \mathrm{CE}(*) = \frac{1}{N}\sum_{n=1}^{N}
  \left\lVert \mathcal{R}_{*}\!\left(\hat{u}^{(n)}\right) \right\rVert_2.
\]
Lower values indicate closer agreement with the reference distribution or constraint.
The shared baselines follow the left-to-right family order in Table~\ref{tab:method-capabilities}, ending with \method{}.
Baselines FFM through PCFM report values from the established benchmark suite~\citep{utkarsh2025pcfm}.
PBFM~\citep{baldan2025pbfm} was retrained and evaluated under the identical benchmark protocol and data splits; its Navier--Stokes entry is unavailable because the upstream setup does not support the canonical 3D configuration.
The final column reports \method{} evaluated under the same independent test sets and protocol with one-step ($NFE=1$) deterministic physical decoding.

\begin{table*}[t]
\centering
\caption{Zero-shot constrained-PDE benchmark. Heat and Navier--Stokes: linear IC/CL; Reaction--Diffusion and Burgers: nonlinear CL with IC/BC. Lower is better; best baseline in bold.}
\label{tab:zero-shot-constrained-pde}
\tablefont
\setlength{\tabcolsep}{0.25pt}
\renewcommand{\arraystretch}{1.02}
\begin{tabularx}{\textwidth}{@{}>{\raggedright\arraybackslash}p{0.14\textwidth}>{\raggedright\arraybackslash}p{0.17\textwidth}Z>{\centering\arraybackslash}p{0.10\textwidth}Z>{\centering\arraybackslash}p{0.07\textwidth}*{3}{Z}>{\centering\arraybackslash}p{0.15\textwidth}@{}}
\toprule
Dataset & Metric & FFM & \shortstack{Diffusion\\PDE} & PDM & D-Flow & ECI & PCFM & PBFM & PMosFM \\
\midrule
\multirow{5}{*}{Heat Equation}
& MMSE / $10^{-2}$ & 4.56 & 4.49 & 0.45 & 1.97 & 0.697 & 0.241 & 0.043 & \textbf{0.016} \\
& SMSE / $10^{-2}$ & 3.51 & 3.93 & \textbf{0.02} & 1.14 & 0.973 & 0.937 & 0.242 & \textbf{0.003} \\
& CE (IC) / $10^{-2}$ & 579 & 599 & \textbf{0} & 102 & \textbf{0} & \textbf{0} & 24.2 & \textbf{0} \\
& CE (CL) / $10^{-2}$ & 2.11 & 2.06 & \textbf{0} & 64.8 & \textbf{0} & \textbf{0} & 57.5 & \textbf{0} \\
& FPD & 1.77 & 1.70 & 0.17 & 2.70 & 1.34 & 1.22 & -- & \textbf{0.0806} \\
\midrule
\multirow{5}{*}{Navier--Stokes}
& MMSE / $10^{-2}$ & 16.5 & 17.4 & 12.21 & \pending & 5.23 & 4.59 & -- & \textbf{4.41} \\
& SMSE / $10^{-2}$ & 7.90 & 9.48 & 6.61 & \pending & 7.28 & 4.17 & -- & \textbf{3.24} \\
& CE (IC) / $10^{-2}$ & 328 & 288 & \textbf{0} & \pending & \textbf{0} & \textbf{0} & -- & \textbf{0} \\
& CE (CL) / $10^{-2}$ & 18.6 & 21.4 & \textbf{0} & \pending & \textbf{0} & \textbf{0} & -- & \textbf{0} \\
& FPD & 2.81 & 3.70 & 2.01 & \pending & 1.04 & \textbf{1.00} & -- & 1.01 \\
\midrule
\multirow{5}{*}{\shortstack[l]{Reaction--\\Diffusion IC}}
& MMSE / $10^{-2}$ & 2.92 & 3.16 & 1.74 & 0.318 & 0.324 & 0.026 & \textbf{0.018} & 0.042 \\
& SMSE / $10^{-2}$ & 2.54 & 2.54 & 0.43 & 6.86 & 0.060 & 0.583 & 0.017 & \textbf{0.014} \\
& CE (IC) / $10^{-2}$ & 445 & 451 & \textbf{0} & 215 & \textbf{0} & \textbf{0} & 33.4 & \textbf{0} \\
& CE (CL) / $10^{-2}$ & 3.87 & 3.82 & \textbf{0} & 29.7 & 6.00 & \textbf{0} & 25.3 & \textbf{0} \\
& FPD & 24.9 & 44.1 & 109 & 28.3 & 136 & \textbf{15.7} & -- & 26.3 \\
\midrule
\multirow{5}{*}{Burgers BC}
& MMSE / $10^{-2}$ & 4.86 & 5.42 & 11.8 & 0.224 & 0.359 & 0.335 & 1.49 & \textbf{0.120} \\
& SMSE / $10^{-2}$ & 1.38 & 1.30 & 2.67 & 0.948 & 0.089 & 0.123 & 4.63 & \textbf{0.0396} \\
& CE (BC) / $10^{-2}$ & 409 & 426 & \textbf{0} & 95.7 & 20.3 & \textbf{0} & 102.3 & \textbf{0} \\
& CE (CL) / $10^{-2}$ & 6.91 & 6.20 & \textbf{0} & 15.0 & 15.7 & \textbf{0} & 35.5 & \textbf{0} \\
& FPD & 24.7 & 25.9 & 6.41 & 1.44 & 0.307 & \textbf{0.292} & 39.7 & 0.409 \\
\midrule
\multirow{5}{*}{Burgers IC}
& MMSE / $10^{-2}$ & 13.7 & 14.3 & 153 & 9.97 & 10.0 & 0.052 & 0.112 & \textbf{0.030} \\
& SMSE / $10^{-2}$ & 7.90 & 8.06 & 2.62 & 7.91 & 6.65 & 0.272 & 0.439 & \textbf{0.019} \\
& CE (IC) / $10^{-2}$ & 462 & 471 & \textbf{0} & 397 & \textbf{0} & \textbf{0} & 47.4 & \textbf{0} \\
& CE (CL) / $10^{-2}$ & 6.91 & 6.22 & \textbf{0} & 8.66 & 205 & \textbf{0} & 18.6 & \textbf{0} \\
& FPD & 33.5 & 35.8 & 99.7 & 22.1 & 1.31 & \textbf{0.101} & 2.20 & 0.283 \\
\bottomrule
\end{tabularx}
\parbox{\textwidth}{\footnotesize\raggedright
D-Flow and PBFM Navier--Stokes and PBFM Heat/RD FPD are omitted (instabilities / no 1D protocol).}
\end{table*}

\paragraph{Exact physical constraint preservation.}
Across five benchmark configurations and ten constraint evaluations, \method{} reports numerical residuals below $10^{-5}$ for initial, boundary, and conservation constraints.
For an exact decoder, the chart $\chi_c$ enforces the encoded discrete constraint within its valid domain.
In contrast, physics-loss and penalty-based methods such as PBFM accumulate substantial residual violations along their trajectories; unconstrained flow and diffusion baselines drift above $1$.

\paragraph{Distributional fidelity beyond feasibility.}
Across the evaluated equations, \method{} maintains strong agreement with the
reference field statistics while preserving the encoded physical constraints.
The consistent behavior of the moment-based metrics indicates that feasible-coordinate
generation improves residual accuracy while allowing the learned transport to
recover the dominant variability of the target distribution. Feature-space
distances are more task dependent, so exact feasibility and distributional
matching provide complementary measures of sample quality.

\paragraph{Task-dependent trade-offs.}
Reaction--Diffusion reveals complementary behavior across metrics: \method{}
achieves improved constraint satisfaction and variance statistics alongside
competitive mean-field accuracy. Similar variation in feature-space distances
across tasks reinforces this metric dependence. Overall, \method{} preserves
strict feasibility while remaining competitive across complementary measures
of distributional fidelity.

\subsubsection{Burgers with fixed initial condition}
\label{app:zero-shot-burgers-ic}

The Burgers IC rows of Table~\ref{tab:zero-shot-constrained-pde} are complemented by two visual diagnostics.
Fig.~\ref{fig:burgers-fixed-ic-conservation} compares field moments with mass and boundary residuals, while Fig.~\ref{fig:burgers-fixed-ic-solution} shows the evolving mean profiles and spread.
Both PCFM and PMosFM have small mass and boundary residuals in
Fig.~\ref{fig:burgers-fixed-ic-conservation}, but their field statistics differ.
PMosFM more closely follows the reference mean profiles and
standard-deviation pattern in Fig.~\ref{fig:burgers-fixed-ic-solution}.

\begin{figure*}[t]
\centering
\includegraphics[width=\textwidth]{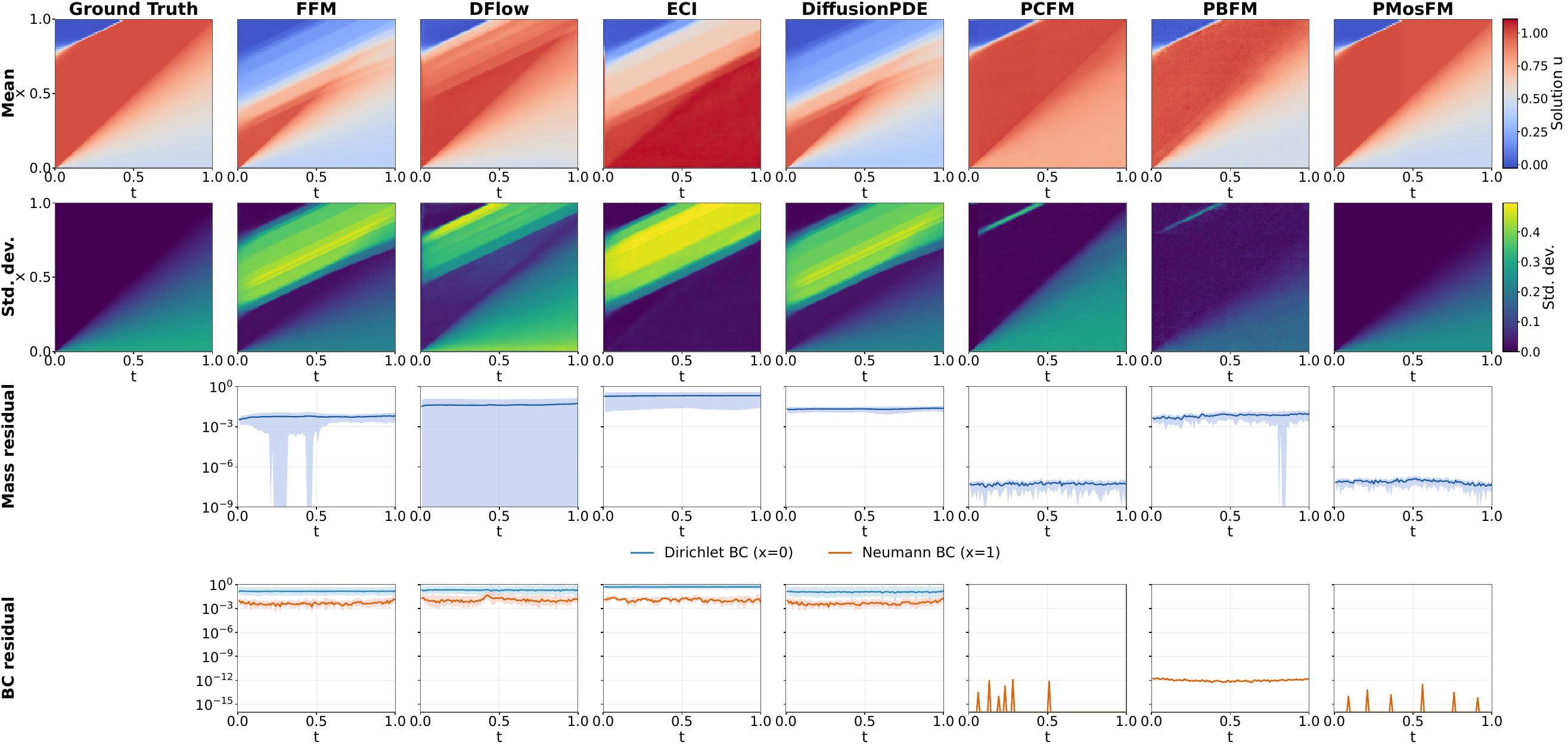}
\caption{Fixed-IC Burgers conservation diagnostics. Rows compare solution means and standard deviations, followed by mass and boundary residuals for the displayed methods.}
\label{fig:burgers-fixed-ic-conservation}
\end{figure*}

\begin{figure*}[t]
\centering
\includegraphics[width=\textwidth]{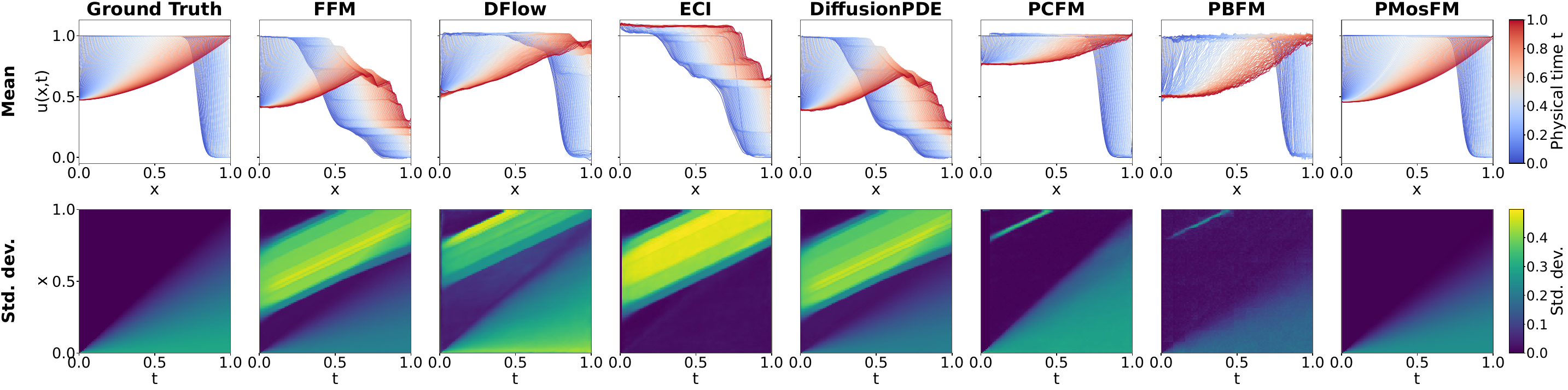}
\caption{Fixed-IC Burgers solution statistics. Mean solution profiles across time and pointwise standard deviations for the reference and displayed methods.}
\label{fig:burgers-fixed-ic-solution}
\end{figure*}

\subsection{Feasible-set and target-law study}
\label{app:controlled-support-measure}

We evaluate the closed-form ellipse construction of Section~\ref{sec:ablation} over five independent seeds and $24{,}000$ samples per seed.
In Table~\ref{tab:residual-manifold}, both exact parameterizations give
residual RMS near $10^{-16}$, but the co-area law lowers KL from
$0.369$ to $0.00207$ relative to the volume law.
Thus, exact feasibility alone does not recover the target law.

\begin{table}[H]
\centering
\caption{Residual-manifold support and measure over five independent seeds (24,000 samples each). Exact parameterizations yield near-zero residual; co-area correction recovers the target measure.}
\label{tab:residual-manifold}
\tablefont
\setlength{\tabcolsep}{4pt}
\begin{adjustbox}{max width=\textwidth}
\begin{tabular}{lccc}
\toprule
Method & Residual RMS $\downarrow$ & KL$(q\Vert q^\star)\downarrow$ & TV$(q,q^\star)\downarrow$ \\
\midrule
Ambient soft residual & $7.120\!\times\!10^{-2}\pm2.264\!\times\!10^{-4}$ & $2.077\!\times\!10^{-3}\pm1.548\!\times\!10^{-4}$ & $2.358\!\times\!10^{-2}\pm8.234\!\times\!10^{-4}$ \\
Exact parameterization, volume law & $1.917\!\times\!10^{-16}\pm1.326\!\times\!10^{-18}$ & $3.689\!\times\!10^{-1}\pm2.069\!\times\!10^{-3}$ & $3.723\!\times\!10^{-1}\pm1.229\!\times\!10^{-3}$ \\
Exact parameterization, co-area law & $1.195\!\times\!10^{-16}\pm1.646\!\times\!10^{-18}$ & $2.071\!\times\!10^{-3}\pm1.448\!\times\!10^{-4}$ & $2.468\!\times\!10^{-2}\pm1.634\!\times\!10^{-3}$ \\
\bottomrule
\end{tabular}
\end{adjustbox}
\end{table}

\paragraph{Construction and representation scope.}
Exact support fixes the feasible set, whereas the constrained measure specifies the probability law on that set.
The controlled feasible set is the ellipse $\chi(\theta)=(1.65\cos\theta,0.72\sin\theta)$ with scalar residual
\begin{equation*} R(x)=\exp(1.2\cos\theta(x))
 \left(\frac{x_1^2}{1.65^2}+\frac{x_2^2}{0.72^2}-1\right),
\end{equation*}
where $\theta(x)$ is the ellipse angle.
The residual scaling preserves the zero set while making $\norm{J_R}$ nonuniform.
For a tubular ambient density constant on the ellipse, the co-area target is $q^\star(\theta)\propto\norm{\partial_\theta\chi}/\norm{J_R}$, while a volume-only exact sampler has $q_{\mathrm{vol}}(\theta)\propto\norm{\partial_\theta\chi}$.
For an encoded exact residual, the controlled objective in Eq.~\ref{eq:pmosfm-loss} retains the physical matching terms after residual factorization removes the corresponding residual-normal correction.
The held-out physical studies separately examine encode--decode error, encoded residual, and independently evaluated residual.

\begin{wraptable}[8]{r}{0.56\textwidth}
\centering
\caption{Population tangent-regression conditioning with 64 eigenmodes and the same stable step rule. Lower is better.}
\label{tab:preconditioning-convergence}
\tablefont
\setlength{\tabcolsep}{2pt}
\renewcommand{\arraystretch}{1.0}
\resizebox{\linewidth}{!}{%
\begin{tabular}{lcc}
\toprule
Coordinates & Active $\kappa_+$ & Steps to $90\%$ $\downarrow$ \\
\midrule
Raw covariance coordinates & $10^{6}$ & $2{,}558{,}427$ \\
Exact state-whitened coordinates & $1$ & $1$ \\
\bottomrule
\end{tabular}
}%
\end{wraptable}

\subsection{Held-out diagnostics}
\label{app:chart-diagnostics}

This held-out pre-training study reports encode--decode field error together with decoder-side and independently evaluated residuals. The evaluation is performed with an evaluator that is independent of the coordinate decoder.
The manifold-conditioned Dynamic Stall case closes the specified algebraic constraint under both evaluators.
For Kolmogorov flow, the central-difference implementation has external RMS $1.353\times10^{-1}$, whereas the spectral implementation has external RMS $2.422\times10^{-6}$, with both implementations retaining relative $\ell_2$ error near $0.92$.

\subsection{Tangent-regression conditioning}
\label{app:controlled-conditioning}

Table~\ref{tab:controlled-mesh-conditioning} and the left panel of
Fig.~\ref{fig:preconditioning-convergence} show that grid refinement
worsens raw conditioning, while the ideal residual-space metric keeps
the active condition number near one.
The population study uses $64$ eigenmodes with a fixed step rule.
Table~\ref{tab:preconditioning-convergence} and the right panel of
Fig.~\ref{fig:preconditioning-convergence} show that exact whitening
removes covariance anisotropy and accelerates convergence in the
stated population-regression setting.

\subsection{Dynamic Stall field and ensemble results}
\label{app:dynamic-stall-results}

Fig.~\ref{fig:dynamic-stall-flow} compares the methods for one held-out Dynamic Stall condition.
The PMosFM row places the shock-associated transition at the same location across the pressure, velocity-gradient, temperature, density, and wall-shear channels as the reference field.

\begin{figure*}[t]
  \centering
  \includegraphics[height=.85\textheight,keepaspectratio]{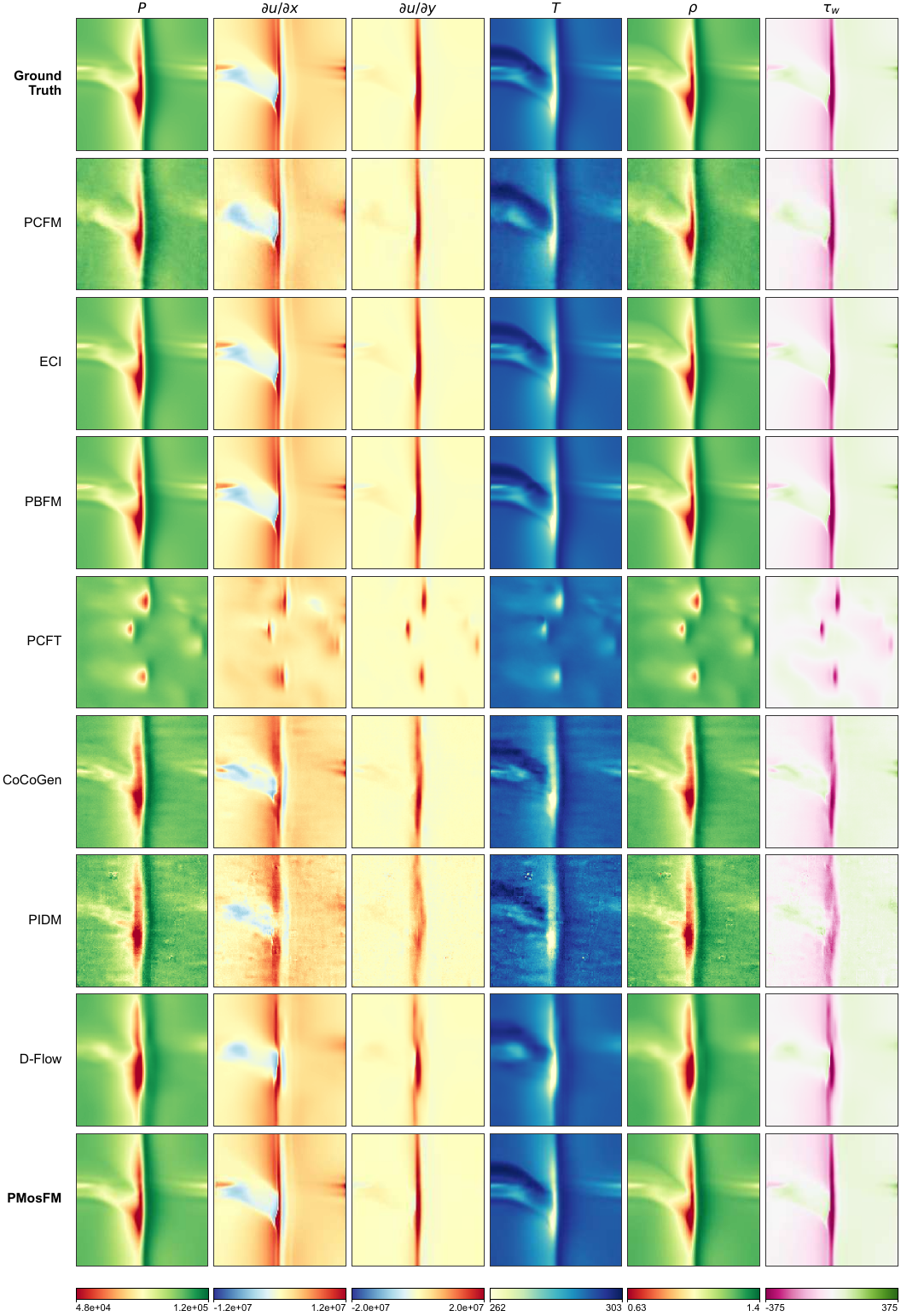}
  \caption{Qualitative Dynamic Stall comparison. One held-out condition is shown. Columns show pressure, velocity gradients, temperature, density, and wall shear. Rows compare the reference and model predictions; PMosFM retains the shock-transition location and cross-channel structure.}
  \label{fig:dynamic-stall-flow}
\end{figure*}

Fig.~\ref{fig:dynamic-stall-ensemble-moments} compares the
conditional ensemble means and standard deviations of the six generated channels
with the reference statistics.
The PMosFM mean preserves the dominant shock-associated structure,
but its standard-deviation maps show weaker fluctuations than
the reference under the shared color scales.

\begin{figure*}[t]
  \centering
  \includegraphics[width=0.98\textwidth]{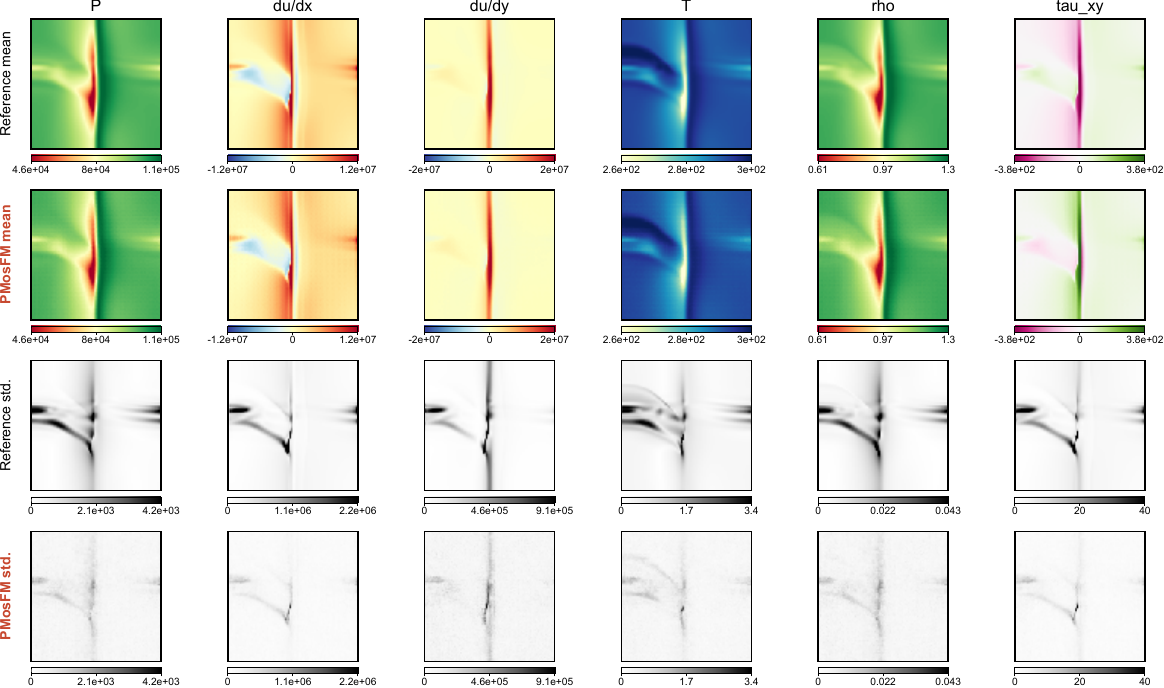}
  \caption{Dynamic Stall ensemble moments. Conditional means and standard deviations of pressure, tangential velocity gradients, temperature, density, and wall shear for one held-out condition.}
  \label{fig:dynamic-stall-ensemble-moments}
\end{figure*}

\subsection{Held-out one-step turbulence}
\label{app:heldout-turbulence-one-step}

The $Q$--$R$ evaluator uses periodic Fourier differentiation followed by trace removal.
Let $A$ denote the resulting velocity-gradient tensor, with
$Q=-\frac12\mathop{\mathrm{tr}}\nolimits(A^2)$ and $R=-\det(A)$.
Fixed histogram bins and the same normalization are used for all methods.
Table~\ref{tab:hit-charted} shows that PMosFM has the lowest field MSE,
divergence RMS, and all three $Q$--$R$ distribution distances among
the evaluated methods.

\begin{figure*}[t]
\centering
\includegraphics[width=\textwidth]{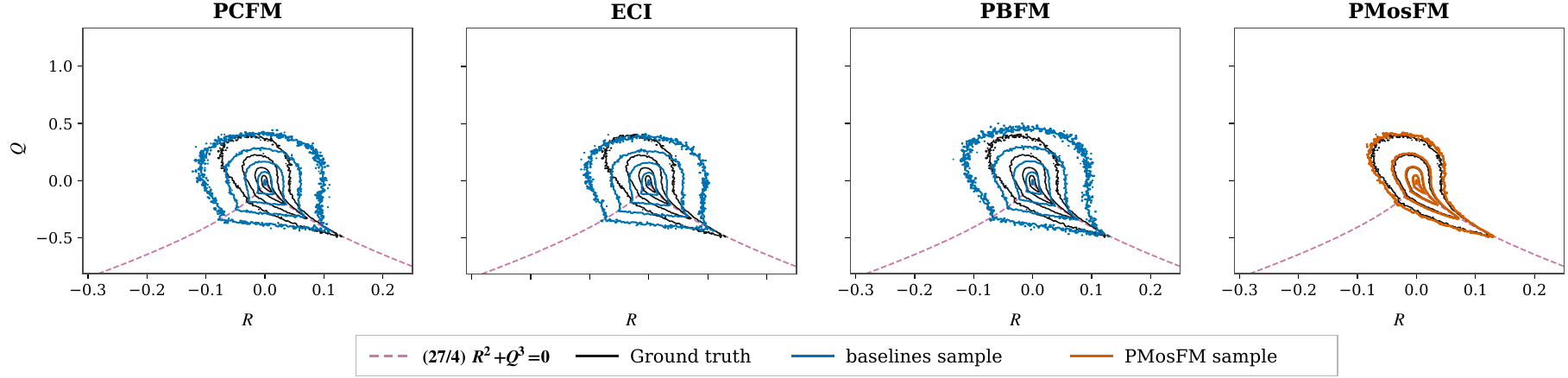}
\caption{Focused $Q$--$R$ comparison for turbulence forecasting.
Selected physics-constrained baselines are compared with the held-out ground truth.}
\label{fig:hit-forecasting-comparison}
\end{figure*}

\begin{table}[t]
\centering
\caption{Held-out turbulence evaluation for physics-constrained generation uses a common evaluator with fixed $Q$--$R$ bins. All methods follow the main-text objective; lower is better.}
\label{tab:hit-charted}
\small
\setlength{\tabcolsep}{4.1pt}
\begin{tabularx}{\textwidth}{@{}Y Z Z Z Z Z@{}}
\toprule
Method & Field MSE $\downarrow$ & \makebox[0pt][c]{Divergence RMS $\downarrow$} & $Q$--$R$ JS $\downarrow$ & $Q$--$R$ TV $\downarrow$ & Hellinger $\downarrow$ \\
\midrule
FM-OT & $1.097$ & $1.000\times10^{-1}$ & $9.165\times10^{-3}$ & $7.499\times10^{-2}$ & $1.048\times10^{-1}$ \\
advNO & $2.205$ & $1.248$ & $1.317\times10^{-1}$ & $4.171\times10^{-1}$ & $3.798\times10^{-1}$ \\
FluidFlow & $1.102$ & $1.004\times10^{-1}$ & $9.422\times10^{-3}$ & $7.609\times10^{-2}$ & $1.062\times10^{-1}$ \\
CoCoGen & $2.006\times10^{2}$ & $4.693$ & $6.803\times10^{-2}$ & $2.857\times10^{-1}$ & $2.727\times10^{-1}$ \\
DiffusionPDE & $1.785$ & $6.874\times10^{-1}$ & $1.815\times10^{-1}$ & $4.653\times10^{-1}$ & $4.617\times10^{-1}$ \\
PIDM & $1.695\times10^{2}$ & $4.248$ & $6.351\times10^{-2}$ & $2.721\times10^{-1}$ & $2.638\times10^{-1}$ \\
D-Flow & $2.048\times10^{2}$ & $4.676$ & $6.107\times10^{-2}$ & $2.664\times10^{-1}$ & $2.588\times10^{-1}$ \\
ECI & $6.472\times10^{-2}$ & $1.426\times10^{-1}$ & $8.684\times10^{-2}$ & $3.360\times10^{-1}$ & $3.041\times10^{-1}$ \\
PCFM & $6.357\times10^{-2}$ & $7.464\times10^{-2}$ & $8.099\times10^{-2}$ & $3.227\times10^{-1}$ & $2.936\times10^{-1}$ \\
PCFT & $1.096$ & $1.004\times10^{-1}$ & $9.230\times10^{-3}$ & $7.338\times10^{-2}$ & $1.053\times10^{-1}$ \\
PBFM & $4.892\times10^{-2}$ & $3.387\times10^{-2}$ & $5.793\times10^{-2}$ & $2.708\times10^{-1}$ & $2.478\times10^{-1}$ \\
\midrule
\textbf{PMosFM} & $\bm{5.359\times10^{-3}}$ & $\bm{1.527\times10^{-7}}$ & $\bm{5.961\times10^{-3}}$ & $\bm{5.293\times10^{-2}}$ & $\bm{8.513\times10^{-2}}$ \\
\bottomrule
\end{tabularx}
\end{table}

Figs.~\ref{fig:hit-forecasting-comparison} and~\ref{fig:hit-forecasting-qr}
compare generated and held-out $Q$--$R$ laws under the common evaluator.
PMosFM follows the reference contours more closely than PCFM, ECI, and PBFM,
while FM-OT, FluidFlow, and PCFT also retain much of the contour structure
in the full-baseline comparison.

\begin{figure}[t]
\centering
\includegraphics[width=\textwidth]{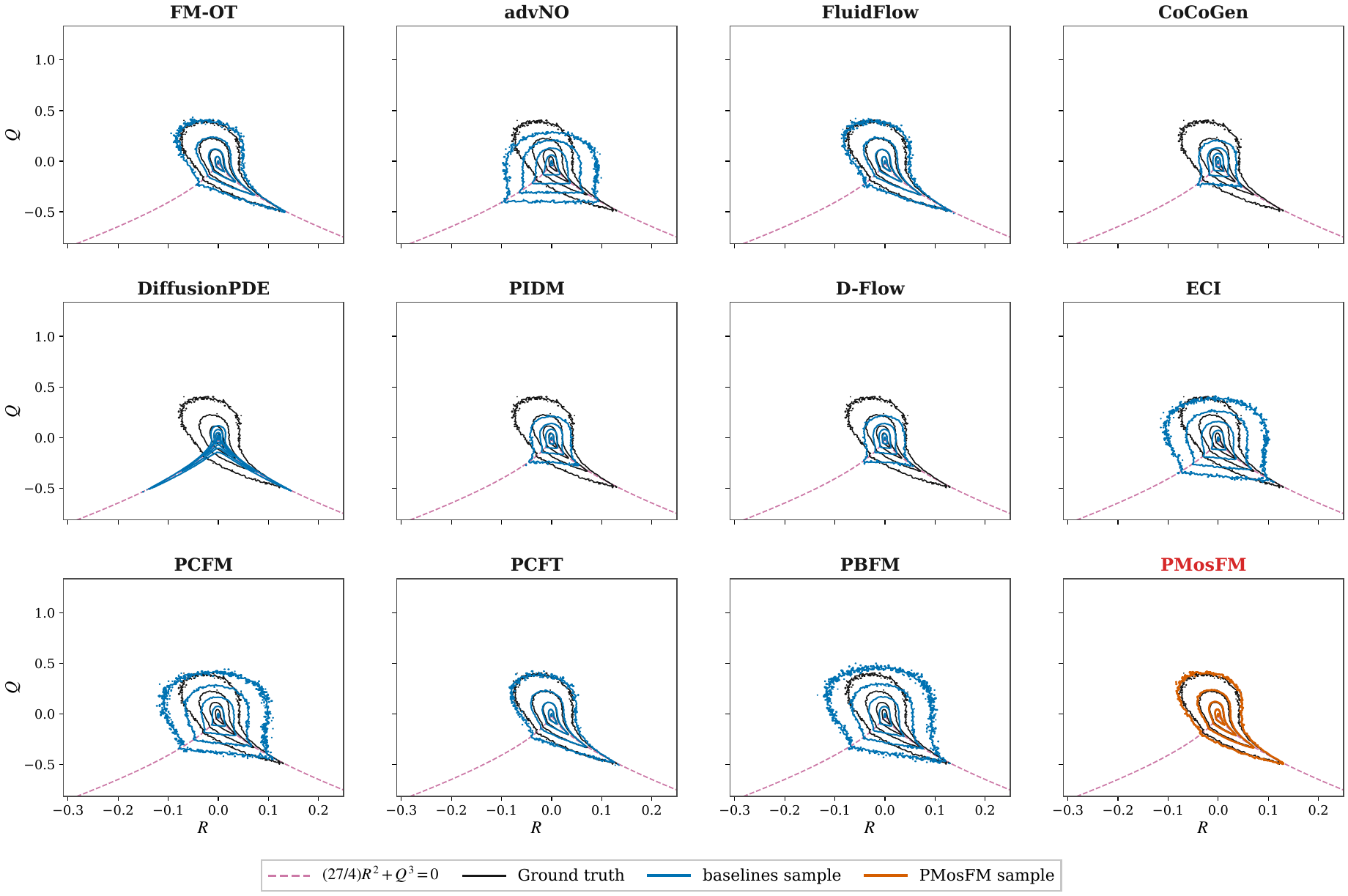}
\caption{Full $Q$--$R$ comparison for turbulence forecasting.
All evaluated methods are compared with the held-out ground truth.}
\label{fig:hit-forecasting-qr}
\end{figure}

\subsection{Qualitative turbulence diagnostics}

\makeatletter
\setlength{\@dblfptop}{0pt}
\makeatother
\begin{table*}[!t]
\centering
\caption{TFR sparse-reconstruction metrics for held-out frames 168--186 (32 samples per condition). Lower is better except interval width; bold marks the minimum at each missing rate.}
\label{tab:tfr-sparse-reconstruction}
\setlength{\tabcolsep}{2.4pt}
\renewcommand{\arraystretch}{1.05}
\resizebox{0.75\linewidth}{!}{%
\begin{tabular}{@{}clccccc@{}}
\toprule
\multirow[c]{2}{*}{Missing} & \multirow[c]{2}{*}{Method} & Missing-region & Physical & \multirow[c]{2}{*}{CRPS $\downarrow$} & Coverage & Interval \\
& & RMSE $\downarrow$ & residual $\downarrow$ & & error $\downarrow$ & width \\
\midrule
\multirow[c]{5}{*}{50\%}
& PBFM & 0.063159 & 1.102126 & 0.024882 & 0.075232 & 0.084228 \\
& ECI & 0.045158 & 1.910683 & 0.019637 & 0.035143 & 0.126545 \\
& PCFM & 0.067615 & 1.520453 & 0.030269 & 0.099867 & 0.112164 \\
& PCFT & 0.069091 & 1.081061 & 0.027396 & 0.095131 & 0.082924 \\
& PMosFM & \textbf{0.029653} & \textbf{0.620392} & \textbf{0.008742} & \textbf{0.010382} & 0.036809 \\
\midrule
\multirow[c]{5}{*}{90\%}
& PBFM & 0.151450 & 0.677789 & 0.075305 & 0.371066 & 0.090451 \\
& ECI & 0.066441 & 1.867086 & 0.028625 & 0.004689 & 0.158808 \\
& PCFM & 0.089082 & 1.211505 & 0.038495 & 0.070901 & 0.148736 \\
& PCFT & 0.152720 & 0.698189 & 0.075118 & 0.351275 & 0.088116 \\
& PMosFM & \textbf{0.045076} & \textbf{0.628506} & \textbf{0.014650} & \textbf{0.036005} & \textbf{0.059326} \\
\midrule
\multirow[c]{5}{*}{99\%}
& PBFM & 0.187398 & 1.594082 & 0.095594 & 0.430716 & 0.092738 \\
& ECI & 0.165424 & 1.695730 & 0.078446 & 0.173864 & 0.192676 \\
& PCFM & 0.177361 & 1.103668 & 0.085315 & 0.219686 & 0.171779 \\
& PCFT & 0.187990 & 0.622136 & 0.095010 & 0.416529 & 0.090029 \\
& PMosFM  & \textbf{0.075462} & \textbf{0.53680} & \textbf{0.026655} & \textbf{0.021873} & 0.119706 \\
\bottomrule
\end{tabular}%
}
\end{table*}

Fig.~\ref{fig:tfr-mask-sensitivity} shows that reconstruction becomes
harder as observations become sparser.
At $99\%$ missingness, PMosFM retains a coarse spatial pattern, but
its spectra and gradient PDFs still differ from the reference.

\begin{figure*}[h]
\centering
\includegraphics[width=.95\textwidth]{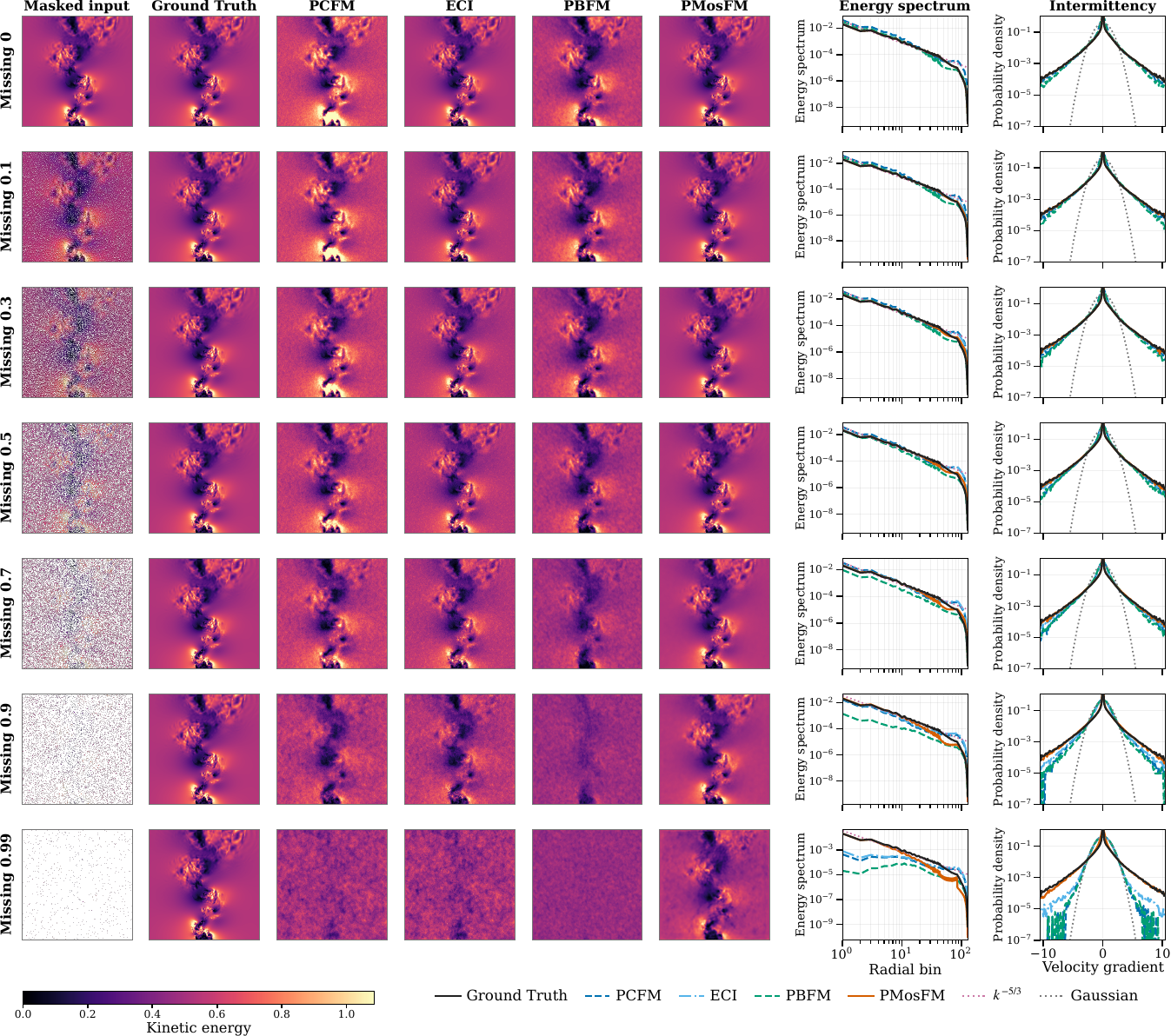}
\caption{Random-mask reconstruction. One held-out turbulence frame is tested with shared exact-count velocity masks (0--99\% missing). Columns show the input, ground truth, PCFM, ECI, PBFM, and PMosFM; lower panels show energy spectra and standardized $\partial_z w$ PDFs.}
\label{fig:tfr-mask-sensitivity}
\end{figure*}

The held-out HIT benchmark evaluates one-step generation, while the random-mask test assesses sparse reconstruction.
PMosFM attains the lowest field MSE, divergence RMS, and $Q$--$R$ distribution distances in the HIT table; in reconstruction, it has the lowest missing-region RMSE at all three rates and retains an identifiable large-scale pattern at $99\%$ missingness (Tables~\ref{tab:hit-charted} and~\ref{tab:tfr-sparse-reconstruction}; Fig.~\ref{fig:tfr-mask-sensitivity}).

\section{Additional Ablations}
\label{app:required-full-objective-ablation}

\paragraph{Loss balance.}
Table~\ref{tab:additional-ablation-gamma} tests whether velocity supervision and endpoint matching play complementary roles. On both Kolmogorov and Darcy flow, the balanced weight gives the lowest field error and physical sliced $W_2$, together with a near-zero reported residual. Moving toward either single-term objective worsens all three metrics, suggesting that the two losses are complementary.

\begin{table}[H]
\centering
\caption{Velocity--endpoint loss balance on Kolmogorov and Darcy flow.}
\label{tab:additional-ablation-gamma}
\small
\setlength{\tabcolsep}{3pt}
\resizebox{\textwidth}{!}{%
\begin{tabular}{lllccc}
\toprule
\textbf{Benchmark} & \textbf{Configuration} & \textbf{Loss Formulation} & \textbf{Field MSE} $\downarrow$ & \textbf{Residual / Div. RMS} $\downarrow$ & \textbf{Physical Sliced $W_2$} $\downarrow$ \\
\midrule
\multirow{5}{*}{\shortstack[l]{\textbf{Kolmogorov Flow}\\($128\times128$)}}
& PE only ($\gamma=0.0$) & Direct Endpoint $\mathcal L_{\mathrm{PE}}$ & $0.4241 \pm 0.0182$ & $1629.9 \pm 52.4$ & $0.3852 \pm 0.0124$ \\
& PE-heavy ($\gamma=0.25$) & Two-Time Joint & $0.1853 \pm 0.0118$ & $850.4 \pm 38.1$ & $0.2821 \pm 0.0092$ \\
& \textbf{PMosFM Balanced ($\gamma=0.50$)} & \textbf{Two-Time Joint ($\mathcal L_{\mathrm{FM}}+\mathcal L_{\mathrm{PE}}$)} & $\mathbf{0.0531 \pm 0.0019}$ & $\mathbf{3.45\times10^{-9}}$ & $\mathbf{0.2522 \pm 0.0014}$ \\
& FM-heavy ($\gamma=0.75$) & Two-Time Joint & $0.2184 \pm 0.0145$ & $1453.1 \pm 46.2$ & $0.2789 \pm 0.0101$ \\
& FM only ($\gamma=1.0$) & Velocity Flow Matching $\mathcal L_{\mathrm{FM}}$ & $0.4714 \pm 0.0210$ & $7258.3 \pm 112.5$ & $0.3621 \pm 0.0153$ \\
\midrule
\multirow{5}{*}{\shortstack[l]{\textbf{Darcy Flow}\\($64\times64$)}}
& PE only ($\gamma=0.0$) & Direct Endpoint $\mathcal L_{\mathrm{PE}}$ & $0.0382 \pm 0.0024$ & $870.8 \pm 42.1$ & $0.0284 \pm 0.0018$ \\
& PE-heavy ($\gamma=0.25$) & Two-Time Joint & $0.0115 \pm 0.0008$ & $128.4 \pm 9.5$ & $0.0098 \pm 0.0006$ \\
& \textbf{PMosFM Balanced ($\gamma=0.50$)} & \textbf{Two-Time Joint ($\mathcal L_{\mathrm{FM}}+\mathcal L_{\mathrm{PE}}$)} & $\mathbf{0.0044 \pm 0.0001}$ & $\mathbf{(1.41 \pm 0.03)\times10^{-12}}$ & $\mathbf{0.0044 \pm 0.0001}$ \\
& FM-heavy ($\gamma=0.75$) & Two-Time Joint & $0.0092 \pm 0.0007$ & $240.1 \pm 14.2$ & $0.0087 \pm 0.0005$ \\
& FM only ($\gamma=1.0$) & Velocity Flow Matching $\mathcal L_{\mathrm{FM}}$ & $0.0415 \pm 0.0031$ & $1577.2 \pm 68.3$ & $0.0312 \pm 0.0021$ \\
\bottomrule
\end{tabular}
}
\par\smallskip
{\footnotesize\raggedright\textit{Note.} Results are mean $\pm$ standard deviation over five independent seeds.\par}
\end{table}

\paragraph{Endpoint-gradient term.}
The held-out turbulence implementation is averaged over five independently trained model states and five independent inference seeds, whereas external methods are single-state estimates.
The held-out turbulence implementation has $948{,}868$ parameters and performs one neural evaluation followed by physical decoding.
It trains with direct velocity MSE, decoded-endpoint MSE, and a $0.2$-weighted endpoint-gradient MSE under the hard projector.
Table~\ref{tab:hit-gradient-ablation} isolates the endpoint-gradient term in this one-step implementation.
Velocity and Endpoint denote direct velocity MSE and decoded-endpoint MSE, respectively; $\lambda_{\nabla}$ is the endpoint-gradient MSE coefficient, and ``--'' denotes omission.
Both variants share a residual-consistent projector and are evaluated over $16$ validation windows, five independent training seeds, and five independent inference seeds.
Bold marks the lower value in each diagnostic column.
Table~\ref{tab:hit-gradient-ablation} shows a small reduction in physical MSE
($5.631\times10^{-3}$ to $5.524\times10^{-3}$) and lower $Q$--$R$
distributional distances, while divergence RMS remains near
$1.5\times10^{-7}$.

\begin{table}[t]
\centering
\caption{One-step objective ablation for one-step turbulence forecasting.
Lower is better.}
\label{tab:hit-gradient-ablation}
\small
\setlength{\tabcolsep}{3pt}
\renewcommand{\arraystretch}{1.05}
\begin{tabularx}{\textwidth}{@{}>{\raggedright\arraybackslash}p{0.12\textwidth} c c c Z Z Z Z Z@{}}
\toprule
& \multicolumn{3}{c}{Training terms} & \multicolumn{5}{c}{Held-out diagnostic} \\
\cmidrule(lr){2-4}\cmidrule(l){5-9}
Variant & Velocity & Endpoint & $\lambda_{\nabla}$ & Physical MSE $\downarrow$ & Divergence RMS $\downarrow$ & $Q$--$R$ JS $\downarrow$ & $Q$--$R$ TV $\downarrow$ & Hellinger $\downarrow$ \\
\midrule
No gradient & yes & yes & -- & $5.631\times10^{-3}$ & \textbf{$1.507\times10^{-7}$} & $6.257\times10^{-3}$ & $5.634\times10^{-2}$ & $8.698\times10^{-2}$ \\
Full objective & yes & yes & $0.2$ & \textbf{$5.524\times10^{-3}$} & $1.508\times10^{-7}$ & \textbf{$6.202\times10^{-3}$} & \textbf{$5.565\times10^{-2}$} & \textbf{$8.664\times10^{-2}$} \\
\bottomrule
\end{tabularx}
\end{table}

\paragraph{Preconditioning.}
Table~\ref{tab:additional-ablation-operator} compares identity, diagonal standardization, and preconditioning. In the controlled regression, preconditioning gives the lowest condition number and iteration count; it also gives the lowest reported Kolmogorov divergence error.

\begin{table}[H]
\centering
\caption{Geometric preconditioning effects on conditioning and Kolmogorov-flow convergence.}
\label{tab:additional-ablation-operator}
\small
\setlength{\tabcolsep}{3pt}
\resizebox{\textwidth}{!}{%
\begin{tabular}{lcccc}
\toprule
\textbf{Transform} & \textbf{Condition $\kappa(H)$} $\downarrow$ & \textbf{90\% Reduction} $\downarrow$ & \textbf{Kolmogorov Div Err} $\downarrow$ & \textbf{Iteration ratio} \\
\midrule
Identity (Vanilla) & $1.00\times10^6$ & $2{,}558{,}427$ & $625.17$ & $1.0\times$ (baseline) \\
Standardization (Diagonal) & $4.82\times10^3$ & $1{,}240$ & $485.30$ & $2{,}063\times$ \\
\textbf{Preconditioning (PMosFM)} & $\mathbf{1.00}$ & $\mathbf{1\text{ step}}$ & $\mathbf{234.01}$ & $\mathbf{>10^6\times}$ \\
\bottomrule
\end{tabular}%
}
\end{table}

\paragraph{Decoder iterations.}
Table~\ref{tab:additional-ablation-darcy-solver} examines the Darcy decoder's accuracy--cost trade-off. Higher $K$ lowers both errors but raises decoding latency, which is separate from one network evaluation.

\begin{table}[H]
\centering
\caption{Darcy decoder accuracy and latency across iteration counts.}
\label{tab:additional-ablation-darcy-solver}
\small
\setlength{\tabcolsep}{3pt}
\resizebox{0.85\linewidth}{!}{%
\begin{tabular}{cccc}
\toprule
\textbf{Iterations $K$} & \textbf{Relative Pressure Error} $\downarrow$ & \textbf{Independent Residual RE} $\downarrow$ & \textbf{Latency / sample (ms)} $\downarrow$ \\
\midrule
16 & $0.0124 \pm 0.0012$ & $0.0385 \pm 0.0035$ & \textbf{0.62 ms} \\
32 & $0.0031 \pm 0.0004$ & $0.0084 \pm 0.0009$ & 0.98 ms \\
64 & $0.0008 \pm 0.0001$ & $0.0019 \pm 0.0002$ & 1.65 ms \\
128 & $0.0002 \pm 0.00003$ & $0.00048 \pm 0.00005$ & 2.84 ms \\
256 & $\mathbf{0.00004 \pm 0.00001}$ & $\mathbf{0.00012 \pm 0.00002}$ & 5.12 ms \\
\bottomrule
\end{tabular}%
}
\end{table}

\section{Mathematical Setting and Assumptions}
\label{app:mathematical-setting}

This appendix records assumptions, proofs, and constructions for the discretized residual $R_h$, proceeding from residual factorization through preconditioning to two-time matching.

Fix a condition $c$ and suppress the condition index when no ambiguity arises.
Let $a\in\mathbb{R}^{s}$ denote either a generator input, a latent variable, or a vector of model parameters.
For a positive semidefinite matrix $M$, write $\norm{v}_{M}^{2}=v^\top Mv$.
For a matrix $A$, let $\sigma_{\min}^{+}(A)$ denote the smallest nonzero singular value of $A$ and define the active-subspace condition number
\begin{equation*}    \kappa_{+}(A)
    \coloneqq
    \frac{\sigma_{\max}(A)}{\sigma_{\min}^{+}(A)}.
\end{equation*}
This convention separates spectral stiffness from null directions imposed by a physical constraint.

\begin{assumption}[Regular residual factorization]
\label{ass:regular-manifold}
On the represented target region, $R_h(\cdot,c)$ is $C^1$ and $D_xR_h(x,c)$ has constant rank $q$.
The map $\chi_c:\mathcal{Y}_c\rightarrow\mathcal{X}_h$ is a $C^1$ local embedding of dimension $m=d-q$ on each active coordinate patch; the union of represented feasible sets covers the target region, and
\begin{equation}    R_h(\chi_c(y),c)=0
    \qquad\text{for every }y\in\mathcal{Y}_c.
    \label{eq:app-exact-chart}
\end{equation}
The represented component is $\mathcal M_c^\chi=\chi_c(\mathcal Y_c)\subseteq\mathcal Z_c$ from Eq.~\ref{eq:feasible-set} on each patch.
Every intrinsic model path remains inside a valid coordinate domain or is covered by compatible local parameterizations.
\end{assumption}

\begin{assumption}[Differentiability and finite moments]
\label{ass:differentiability}
The two-time map $T_\theta^{s,t}$ and the decoder $\Psi_c$ are differentiable in the variables under analysis.
The interpolation covariance defining $P_{s,c}$ is finite, symmetric, and positive semidefinite.
Whenever a second-order expansion is used, each map appearing in the expansion is $C^2$ on the corresponding line segment.
\end{assumption}

\subsection{Baseline constraint mechanisms}
\label{app:baseline-mechanisms}

Relative to the ambient objective in Eq.~\ref{eq:ambient_fm}, PBFM defines separate flow-matching and physical losses, denoted $\mathcal J_{\mathrm{FM}}$ and $\mathcal J_R$, and coordinates their gradients before terminal residual evaluation~\citep{baldan2025pbfm}:
\begin{equation*} g_{\mathrm{FM}}=\nabla_\phi\mathcal J_{\mathrm{FM}},\qquad
 g_R=\nabla_\phi\mathcal J_R,\qquad
 g_{\mathrm{update}}=\mathop{\mathrm{ConFIG}}\nolimits(g_{\mathrm{FM}},g_R).
\end{equation*}
PCFT instead applies weak-form fine-tuning~\citep{Tauberschmidt2025}.
DiffusionPDE adds sampling-time physics guidance~\citep{huang2024diffusionpde}; CCFM applies a chance-constrained projection~\citep{liang2025chance}.
ECI extrapolates a clean endpoint and applies a correction $\mathcal Q_c$ before mapping back to the current noise level~\citep{Cheng2025}:
\begin{equation*} \widehat x_{1\mid\tau}=x_\tau+(1-\tau)v_\phi(x_\tau,\tau,c),\qquad
 x_{1\mid\tau}^{c}=\mathcal Q_c(\widehat x_{1\mid\tau}).
\end{equation*}
PCFM applies nonlinear corrections to estimated terminal states during sampling~\citep{utkarsh2025pcfm}.
For a full-row-rank residual Jacobian $J_R$, a local Gauss--Newton correction contains
\begin{equation*} x_1^{\mathrm{GN}}=\widetilde x_1-J_R^\top(J_RJ_R^\top)^{-1}R_h(\widetilde x_1,c),
\end{equation*}
followed by method-specific back-mapping, relaxed updates, or a constrained solve when required.

\section{One-Step Sensitivity and Curvature}
\label{app:large-step-theory}

\subsection{Three notions of a step}

Training-time unrolling~\citep{baldan2025pbfm} evaluates a nonlinear residual after intermediate updates; inference-time integration counts network evaluations used to solve a learned ODE.
PMosFM learns a finite-interval map evaluated once at inference.
The Euler rollout in Eq.~\ref{eq:app-unrolled-map} serves as a conditioning diagnostic.

For an $n$-step Euler rollout from sampling time $t$, define
\begin{equation}\begin{aligned}
    \mathcal U_i(x)&=x+\Delta\tau\,v_\theta(x,\tau_i,c),
    &\Delta\tau&=\frac{1-t}{n},
    &\tau_i&=t+i\Delta\tau,\\
    T_{\theta,t\rightarrow1}^{(n)}
    &=\mathcal U_{n-1}\circ\cdots\circ\mathcal U_0.
\end{aligned}
\label{eq:app-unrolled-map}
\end{equation}
Let $A_i=D_xv_\theta(x^{(i)},\tau_i,c)$.
The chain rule gives
\begin{equation}    D_xT_{\theta,t\rightarrow1}^{(n)}
    =(I+\Delta\tau A_{n-1})\cdots(I+\Delta\tau A_0).
    \label{eq:app-unrolled-jacobian}
\end{equation}

\begin{proposition}[Finite-interval sensitivity]
\label{prop:app-flow-sensitivity}
Let $x(s)$ solve $\dot x(s)=v(x(s),s,c)$ for $s\in[t,1]$, and assume $A(s)=D_xv(x(s),s,c)$ is continuous.
The endpoint sensitivity $J_{t,s}=D_{x(t)}x(s)$ satisfies
\begin{equation}    \frac{dJ_{t,s}}{ds}=A(s)J_{t,s},
    \qquad J_{t,t}=I.
    \label{eq:app-variational-equation}
\end{equation}
The sensitivity also obeys
\begin{equation}    \norm{J_{t,1}}_2
    \leq
    \exp\!\left(\int_t^1\norm{A(s)}_2\,ds\right).
    \label{eq:app-sensitivity-bound}
\end{equation}
If the flow is locally invertible, then
\begin{equation}    \kappa_2(J_{t,1})
    \leq
    \exp\!\left(2\int_t^1\norm{A(s)}_2\,ds\right).
    \label{eq:app-sensitivity-condition-bound}
\end{equation}
\end{proposition}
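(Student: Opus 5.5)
The plan is the standard variational-equation argument, followed by a Grönwall estimate and a bound on the inverse flow.

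\textbf{Step 1: the variational equation.} Write the flow map $\Phi_{t,s}(\xi)$ for the solution with $x(t)=\xi$, so that $\partial_s\Phi_{t,s}(\xi)=v(\Phi_{t,s}(\xi),s,c)$ and $\Phi_{t,t}(\xi)=\xi$. Under the $C^1$ regularity of $v$ in $x$ (Assumption~\ref{ass:differentiability}), the map $\Phi_{t,s}$ is differentiable in $\xi$, and differentiation in $\xi$ commutes with $\partial_s$; this is the classical differentiable-dependence theorem. Differentiating both sides in $\xi$ and applying the chain rule gives $\tfrac{d}{ds}J_{t,s}=D_xv(x(s),s,c)\,J_{t,s}=A(s)J_{t,s}$, with $J_{t,t}=D_\xi\xi=I$. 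This is Eq.~\ref{eq:app-variational-equation}. The Euler product in Eq.~\ref{eq:app-unrolled-jacobian} is the discrete analogue, and one could alternatively obtain the ODE as its $n\to\infty$ limit, but the direct argument is cleaner.

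\textbf{Step 2: the norm bound.} Set $\phi(s)=\norm{J_{t,s}}_2$. From the integral form $J_{t,s}=I+\int_t^s A(\sigma)J_{t,\sigma}\,d\sigma$ and submultiplicativity we get $\phi(s)\le 1+\int_t^s\norm{A(\sigma)}_2\phi(\sigma)\,d\sigma$. Continuity of $A$ makes $\norm{A}_2$ integrable, so Grönwall's inequality yields $\phi(1)\le\exp(\int_t^1\norm{A}_2)$. This is Eq.~\ref{eq:app-sensitivity-bound}.

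\textbf{Step 3: the inverse and the condition number.} This step needs the most care. Let $K(s)=J_{t,s}^{-1}$; local invertibility of the flow guarantees it exists. Alternatively, Liouville's formula $\det J_{t,s}=\exp\int_t^s\mathrm{tr}A$ shows $J_{t,s}$ is always nonsingular. Differentiating $KJ=I$ gives $\dot K=-KA$, with $K(t)=I$. Repeating the Grönwall argument on the integral form of this equation bounds $\norm{J_{t,1}^{-1}}_2\le\exp(\int_t^1\norm{A}_2)$; the sign of $A$ is irrelevant because only the norm enters. Multiplying the two bounds gives $\kappa_2(J_{t,1})=\norm{J_{t,1}}_2\norm{J_{t,1}^{-1}}_2\le\exp(2\int_t^1\norm{A}_2)$, which is Eq.~\ref{eq:app-sensitivity-condition-bound}.

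The main points to watch are applying Grönwall to the right-multiplied adjoint equation and justifying differentiability of the flow. Both are routine under the stated continuity assumptions.
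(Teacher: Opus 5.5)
Your proposal is correct and follows the paper's proof essentially step for step: you differentiate the flow in the initial state to get the variational equation, apply Gr\"onwall for the norm bound, and bound the inverse through its own linear equation before multiplying the two bounds. Your $\dot K=-KA$, obtained by differentiating $KJ=I$, is equivalent to the paper's backward variational equation for $J_{t,1}^{-1}$. One small nit: Assumption~\ref{ass:differentiability} concerns $T_\theta^{s,t}$ and $\Psi_c$, not $v$, so it does not supply the $C^1$ dependence of $v$ on $x$ near the trajectory. That regularity is instead an implicit hypothesis of the statement, which already presupposes that $J_{t,s}$ exists.
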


\begin{proof}
Differentiating with respect to the initial state gives Eq.~\ref{eq:app-variational-equation}; Gronwall's inequality yields Eq.~\ref{eq:app-sensitivity-bound}.
Eq.~\ref{eq:app-unrolled-jacobian} discretizes Proposition~\ref{prop:app-flow-sensitivity}; the controlled study evaluates the conditioning of the one-step endpoint Jacobian.
The inverse sensitivity satisfies the corresponding backward variational equation, and the two norm bounds give Eq.~\ref{eq:app-sensitivity-condition-bound}.
\end{proof}

\subsection{Residual-normal curvature}

Let $G_\phi(a,c)$ be any differentiable endpoint generator and define
\begin{equation}    r_\phi(a,c)=R_h(G_\phi(a,c),c),
    \qquad
    \ell_R(a)=\frac{1}{2}\norm{r_\phi(a,c)}_{W_R}^{2},
    \quad W_R\succeq0.
    \label{eq:app-residual-objective}
\end{equation}
The residual $R_h$ has $n_R$ components.

\begin{lemma}[Exact Hessian decomposition]
\label{lem:app-hessian-decomposition}
Assume $R_h$ and $G_\phi$ are $C^2$.
Let $J_G=D_aG_\phi$ and $J_R=D_xR_h\vert_{G_\phi(a,c)}$.
Then
\begin{equation}\begin{aligned}
    \nabla_a\ell_R
    &=(J_RJ_G)^\top W_Rr_\phi,\\
    \nabla_a^2\ell_R
    &=(J_RJ_G)^\top W_R(J_RJ_G)
      +\sum_{j=1}^{n_R}(W_Rr_\phi)_j\,
        \nabla_a^2(r_\phi)_j.
\end{aligned}
\label{eq:app-exact-residual-hessian}
\end{equation}
At a feasible output, $r_\phi=0$, the second term vanishes and the exact Hessian equals the Gauss--Newton block
\begin{equation}    H_{\mathrm{GN},a}^{R}
    =J_G^\top J_R^\top W_RJ_RJ_G.
    \label{eq:app-composite-gn}
\end{equation}
\end{lemma}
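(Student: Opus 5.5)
The plan is a direct chain-rule computation on the composite $r_\phi=R_h\circ G_\phi$, followed by specialization at a feasible point. Because $R_h(\cdot,c)$ and $G_\phi(\cdot,c)$ are $C^2$, the composite $r_\phi(\cdot,c)$ is $C^2$ in $a$. Its Jacobian is $D_a r_\phi=J_RJ_G$, with $J_R$ evaluated at $G_\phi(a,c)$.

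\textbf{Gradient.} Write $\ell_R(a)=\tfrac12 r_\phi^\top W_R r_\phi$, which is legitimate since $W_R$ is symmetric positive semidefinite. Differentiating once gives $D_a\ell_R=r_\phi^\top W_R\,D_ar_\phi$. Transposing yields $\nabla_a\ell_R=(J_RJ_G)^\top W_R r_\phi$, the first line of Eq.~\ref{eq:app-exact-residual-hessian}.

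\textbf{Hessian.} It is cleanest to expand the gradient in components:
\[
\nabla_a\ell_R=\sum_{j=1}^{n_R}(W_Rr_\phi)_j\,\nabla_a(r_\phi)_j .
\]
Apply the product rule to each summand. Differentiating the coefficient $(W_Rr_\phi)_j=\sum_k (W_R)_{jk}(r_\phi)_k$ gives $\sum_{j,k}(W_R)_{jk}\nabla_a(r_\phi)_j\nabla_a(r_\phi)_k^\top$. This equals $(D_ar_\phi)^\top W_R(D_ar_\phi)=(J_RJ_G)^\top W_R(J_RJ_G)$. Differentiating $\nabla_a(r_\phi)_j$ instead gives the term $\sum_j (W_Rr_\phi)_j\nabla_a^2(r_\phi)_j$. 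Symmetry of $W_R$ is used here to identify the two index contractions. Together these reproduce the second line of Eq.~\ref{eq:app-exact-residual-hessian}.

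For completeness, one could further expand $\nabla_a^2(r_\phi)_j$ by the chain rule into terms involving $\nabla_x^2(R_h)_j$ and $D_a^2G_\phi$. This is not needed, because only the stated aggregated form is claimed.

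\textbf{Feasible point.} If $r_\phi(a,c)=0$, then $W_Rr_\phi=0$, so every coefficient $(W_Rr_\phi)_j$ vanishes. The second-order term therefore drops regardless of the (finite, by $C^2$) Hessians $\nabla_a^2(r_\phi)_j$. The exact Hessian then equals $J_G^\top J_R^\top W_RJ_RJ_G$, which is Eq.~\ref{eq:app-composite-gn}.

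The main point requiring care is the bookkeeping in the product rule: the weighted cross term must be assembled into $(J_RJ_G)^\top W_R(J_RJ_G)$ using $W_R=W_R^\top$, with all Jacobians evaluated at the current output $G_\phi(a,c)$. No approximation enters anywhere, so the decomposition is exact. The Gauss--Newton identification holds precisely at feasible outputs, where the curvature-of-residual term carries a zero weight.
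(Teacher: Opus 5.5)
Your proposal is correct and follows the same route as the paper: the chain rule gives the gradient of $\tfrac12\norm{r_\phi}_{W_R}^2$, the product rule on that gradient splits the Hessian into the Gauss--Newton block and the residual-weighted curvature term, and feasibility ($r_\phi=0$) removes the latter. Your component-wise bookkeeping, using $W_R=W_R^\top$, simply spells out the step that the paper's three-line proof leaves implicit.
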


\begin{proof}
Apply the chain rule to Eq.~\ref{eq:app-residual-objective}.
Differentiating the gradient gives the two terms in Eq.~\ref{eq:app-exact-residual-hessian}.
Feasibility eliminates the latter.
\end{proof}

\begin{corollary}[Squared active spectral spread]
\label{cor:app-squared-conditioning}
Let $B=W_R^{1/2}J_RJ_G$.
On the active subspace of $B$,
\begin{equation}    H_{\mathrm{GN},a}^{R}=B^\top B,
    \qquad
    \kappa_+(H_{\mathrm{GN},a}^{R})
    =\kappa_+(B)^2.
\label{eq:app-squared-condition}
\end{equation}
In addition,
\begin{equation*}    \lambda_{\max}(H_{\mathrm{GN},a}^{R})
    \leq
    \norm{W_R}_2\,
    \norm{J_R}_2^2\,
    \norm{J_G}_2^2.
\end{equation*}
\end{corollary}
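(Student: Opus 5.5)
The plan is to set $B=W_R^{1/2}J_RJ_G$, where $W_R^{1/2}$ is the symmetric positive semidefinite square root, which exists since $W_R\succeq0$. With this choice, $B^\top B=J_G^\top J_R^\top W_R^{1/2}W_R^{1/2}J_RJ_G$, which is exactly the Gauss--Newton block in Eq.~\ref{eq:app-composite-gn}. So the first identity in Eq.~\ref{eq:app-squared-condition} is a one-line algebraic check using $(W_R^{1/2})^\top=W_R^{1/2}$.

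For the condition-number identity, I will take a singular value decomposition $B=U S V^\top$ with nonzero singular values $\sigma_1\ge\cdots\ge\sigma_k>0$. Then $B^\top B=V S^\top S V^\top$. This shows that $H_{\mathrm{GN},a}^{R}$ is symmetric positive semidefinite and that its nonzero eigenvalues, which are also its nonzero singular values, are exactly $\sigma_1^2,\dots,\sigma_k^2$. The eigenvectors for these values span the active subspace $\mathrm{range}(B^\top)=\ker(B)^\perp$, and on the complementary null directions both $B$ and $H$ vanish. Consequently $\sigma_{\max}(H)=\sigma_1^2$ and $\sigma_{\min}^{+}(H)=\sigma_k^2$, so $\kappa_+(H)=\sigma_1^2/\sigma_k^2=\kappa_+(B)^2$. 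The only point needing care is to state that "active subspace" means the orthogonal complement of $\ker B$, which coincides with that of $\ker H$ because $\ker(B^\top B)=\ker B$. I will verify this kernel identity via $v^\top B^\top Bv=\norm{Bv}_2^2$.

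For the eigenvalue bound, I will use $\lambda_{\max}(H)=\norm{B}_2^2$ together with submultiplicativity of the spectral norm: $\norm{B}_2\le\norm{W_R^{1/2}}_2\norm{J_R}_2\norm{J_G}_2$. Since $\norm{W_R^{1/2}}_2^2=\lambda_{\max}(W_R)=\norm{W_R}_2$ for a positive semidefinite matrix, squaring gives the claimed bound.

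I do not expect a real obstacle here. The main subtlety is the degenerate case $B=0$, where $\kappa_+$ is undefined; I would exclude this case explicitly or adopt a convention for it.
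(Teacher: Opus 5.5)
Your proposal is correct and follows the same route as the paper: the identity $\kappa_+(H_{\mathrm{GN},a}^{R})=\kappa_+(B)^2$ comes from squaring the singular values of $B$ (via $B^\top B=VS^\top SV^\top$), and the eigenvalue bound comes from submultiplicativity together with $\norm{W_R^{1/2}}_2^2=\norm{W_R}_2$. Your checks of the symmetric square root, the kernel identity $\ker(B^\top B)=\ker B$, and the degenerate case $B=0$ fill in details the paper's one-line proof leaves implicit.
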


\begin{proof}
Eq.~\ref{eq:app-squared-condition} follows by squaring the singular values of $B$; submultiplicativity bounds the norm.
\end{proof}

\begin{remark}[Condition-number product]
\label{rem:app-no-product-law}
The common heuristic $\kappa(H_{\mathrm{GN}})\approx\kappa(J_G)^2\kappa(J_R^\top J_R)$ depends on alignment between the relevant singular subspaces.
Corollary~\ref{cor:app-squared-conditioning} gives the unconditional result: the condition number of the composed residual differential is squared.
After generator composition, exact residual factorization also removes the residual block in Eq.~\ref{eq:app-composite-gn}.
\end{remark}

\section{Manifold Parameterization}
\label{app:manifold-theory}

\subsection{Residual-factorization differential}

\begin{lemma}[Residual-factorization differential]
\label{lem:app-tangent-space}
Under Assumption~\ref{ass:regular-manifold}, for every $y\in\mathcal{Y}_c$ and $x=\chi_c(y)$,
\begin{equation}    \begin{aligned}
    J_R(x,c)J_\chi(y,c)&=0,
    &\mathop{\mathrm{range}}\nolimits J_\chi(y,c)&\subseteq\mathop{\mathrm{ker}}\nolimits J_R(x,c),\\
    \mathop{\mathrm{range}}\nolimits J_\chi(y,c)&=\mathop{\mathrm{ker}}\nolimits J_R(x,c)
    =T_x\mathcal{M}_c^\chi.
    \end{aligned}
\label{eq:app-tangent-kernel}
\end{equation}
\end{lemma}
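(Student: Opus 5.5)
The argument has three parts: differentiate the identity \eqref{eq:app-exact-chart}, use it to get the inclusion of ranges, and then upgrade the inclusion to equality by a dimension count from Assumption~\ref{ass:regular-manifold}.

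\emph{Step 1: the differential identity.} Fix $y\in\mathcal Y_c$ and $x=\chi_c(y)$. Since $\mathcal Y_c$ is open within each active patch, the map $y'\mapsto R_h(\chi_c(y'),c)$ is $C^1$ near $y$ and vanishes identically there by \eqref{eq:app-exact-chart}. Its derivative is therefore zero. The chain rule, using that $R_h$ is $C^1$ and $\chi_c$ is $C^1$, then gives $J_R(x,c)J_\chi(y,c)=0$. Applying this identity to an arbitrary vector $J_\chi v$ shows immediately that $\mathop{\mathrm{range}} J_\chi\subseteq\ker J_R$.

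\emph{Step 2: equality by dimension count.} Because $\chi_c$ is a local embedding of dimension $m$, $J_\chi$ has $m$ linearly independent columns, so $\dim\mathop{\mathrm{range}} J_\chi=m$. Because $D_xR_h$ has constant rank $q$ on the represented region, rank--nullity gives $\dim\ker J_R=d-q$. Assumption~\ref{ass:regular-manifold} sets $m=d-q$, so the inclusion from Step 1 is between subspaces of equal dimension and hence is an equality. The one point to check is that $x=\chi_c(y)$ lies in the region where the constant-rank hypothesis holds; the assumption states this by taking $\mathcal M_c^\chi$ inside the represented target region.

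\emph{Step 3: identification with the tangent space.} Since $\chi_c$ is a $C^1$ immersion, and a local embedding on each patch, $\mathcal M_c^\chi$ is locally an embedded $m$-dimensional submanifold. Its tangent space at $x$ is by definition the image $dJ_\chi(y)=\mathop{\mathrm{range}} J_\chi(y,c)$. Combining this with Step 2 yields the full chain \eqref{eq:app-tangent-kernel}.

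Equivalently, one could invoke the constant-rank theorem: it makes $\mathcal Z_c$ locally a $(d-q)$-dimensional manifold with tangent space $\ker J_R$, and $\mathcal M_c^\chi$ is an open piece of it of the same dimension. I will use the direct dimension count, since it avoids having to identify $\mathcal M_c^\chi$ as an open subset of $\mathcal Z_c$.

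The main obstacle is not computational. It is making sure the hypotheses line up, namely that the constant-rank condition holds at the decoded points and that the immersion is injective on the patch, so that $T_x\mathcal M_c^\chi$ is well defined. I will state both explicitly as consequences of Assumption~\ref{ass:regular-manifold}, working patch by patch when several charts are used.
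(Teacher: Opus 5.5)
Your proposal is correct and follows the paper's proof. You differentiate $R_h(\chi_c(y),c)=0$ to obtain $J_RJ_\chi=0$ and the inclusion $\mathop{\mathrm{range}}\nolimits J_\chi\subseteq\mathop{\mathrm{ker}}\nolimits J_R$, then upgrade it to equality because both spaces have dimension $d-q$; your Step~3, which identifies $\mathop{\mathrm{range}}\nolimits J_\chi$ with $T_x\mathcal M_c^\chi$ via the embedding property, just makes explicit a point the paper's two-line proof leaves implicit.
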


\begin{proof}
Differentiating $R_h(\chi_c(y),c)=0$ gives $J_RJ_\chi=0$, which implies $\mathop{\mathrm{range}}\nolimits J_\chi\subseteq\mathop{\mathrm{ker}}\nolimits J_R$.
Both spaces in Eq.~\ref{eq:app-tangent-kernel} have dimension $d-q$, so the inclusion is equality.
\end{proof}

\subsection{Exact endpoint feasibility}
\label{app:exact-manifold-proof}

Let $T_\phi^{0,1}(r_0,c)$ lie in the valid coordinate domain and set
$G_{\phi,c}^{\mathrm{RF}}=\Psi_c\circ T_\phi^{0,1}$.
Assumption~\ref{ass:regular-manifold} gives
\begin{equation}    R_h(G_{\phi,c}^{\mathrm{RF}}(r_0,c),c)
    =R_h\!\left(\chi_c\!\left(\mu_c+C_c^{-1}T_\phi^{0,1}(r_0,c)\right),c\right)=0.
    \label{eq:app-composed-feasibility}
\end{equation}
The left-hand side of Eq.~\ref{eq:app-composed-feasibility} is identically zero as a function of $r_0$ and of every model parameter entering $T_\phi^{0,1}$.
Differentiation with respect to any such variable $a$ yields
\begin{equation}    D_a[R_h\circ G_{\phi,c}^{\mathrm{RF}}]
    =J_RJ_\chi C_c^{-1}D_aT_\phi^{0,1}=0.
\label{eq:app-composed-differential}
\end{equation}
Eq.~\ref{eq:app-composed-differential} shows that the encoded residual contributes no Gauss--Newton block on the valid
coordinate domain.
This local identity does not control the full endpoint Jacobian or global
optimization.

\begin{corollary}[Feasibility of a decoded intrinsic path]
\label{cor:app-path-feasibility}
Let $y:[0,1]\rightarrow\mathcal{Y}_c$ be any path and define $x(s)=\chi_c(y(s))$.
Under Assumption~\ref{ass:regular-manifold},
\begin{equation}    R_h(x(s),c)=0
    \qquad\text{for every }s\in[0,1].
\label{eq:app-path-feasibility}
\end{equation}
If $y$ is differentiable, then $\dot x(s)=J_\chi(y(s),c)\dot y(s)\in T_{x(s)}\mathcal{M}_c^\chi$.
\end{corollary}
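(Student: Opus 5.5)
The claim has two parts: pointwise feasibility of the decoded path, and the tangency of its velocity. Both follow from the exact-chart identity in Assumption~\ref{ass:regular-manifold} together with Lemma~\ref{lem:app-tangent-space}.

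For feasibility, fix $s\in[0,1]$. Since $y(s)\in\mathcal Y_c$ by hypothesis, Eq.~\ref{eq:app-exact-chart} evaluated at the point $y=y(s)$ gives directly $R_h(x(s),c)=R_h(\chi_c(y(s)),c)=0$. This uses no regularity of $y$, so it holds for an arbitrary path. This is the same pointwise argument as in Eq.~\ref{eq:app-composed-feasibility}, applied to each $s$ rather than only to the endpoint. If the path leaves a single patch, I will instead apply the compatible-patch clause of the assumption on each subinterval.

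For the velocity statement, assume $y$ is differentiable. The map $\chi_c$ is $C^1$, so the chain rule gives $\dot x(s)=J_\chi(y(s),c)\,\dot y(s)$. This vector lies in $\mathop{\mathrm{range}} J_\chi(y(s),c)$. By Lemma~\ref{lem:app-tangent-space}, that range equals $\ker J_R(x(s),c)=T_{x(s)}\mathcal M_c^\chi$, which concludes the proof. As a consistency check, differentiating the identity $R_h(x(s),c)\equiv0$ in $s$ yields $J_R\dot x=0$, which matches the kernel characterization.

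I expect no real obstacle here. The only delicate point is the identification of the range with the tangent space. That identification needs the constant-rank hypothesis, which gives kernel dimension $d-q=m$, and the embedding property, which gives $J_\chi$ rank $m$. Both are already supplied by the Lemma, so I will cite it rather than re-derive the dimension count.
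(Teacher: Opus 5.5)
Your proposal is correct and matches the paper's proof: feasibility follows pointwise from the exact-chart identity in Eq.~\ref{eq:app-exact-chart}, and tangency follows from the chain rule together with Lemma~\ref{lem:app-tangent-space}. The patch-switching remark is unnecessary, since the path is assumed to lie in $\mathcal Y_c$ throughout.
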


\begin{proof}
Eq.~\ref{eq:app-exact-chart} gives the feasibility in Eq.~\ref{eq:app-path-feasibility}; differentiating the path and applying Lemma~\ref{lem:app-tangent-space} gives tangency.
\end{proof}

\subsection{Intrinsic-coordinate penalty block}
\label{app:normal-tangent-proof}
\label{app:penalty-quotient-proof}

For an ambient linear residual $R_h(x)=A_hx-b$, the penalized objective and the Hessian are
\begin{equation} \mathcal L_\lambda(x)=\mathcal L_{\mathrm{gen}}(x)
 +\frac{\lambda}{2}\norm{A_hx-b}_{W}^{2},\qquad
 \nabla^2\mathcal L_\lambda=H_{\mathrm{gen}}+\lambda A_h^\top WA_h.
 \label{eq:ambient-composite-curvature}
\end{equation}

\begin{theorem}[Penalty-block scaling under exact intrinsic coordinates]
\label{thm:penalty-quotient}
Write $H_{\lambda,h}=H_{\mathrm{gen}}+\lambda A_h^\top WA_h$.
Let $x=x_p+N_hy$ with $A_hx_p=b$, $A_hN_h=0$, and $N_h^\top N_h=I$.
Then the intrinsic Hessian is $N_h^\top H_{\mathrm{gen}}N_h$ and is independent of $\lambda$.
If $A_h$ discretizes an order-$p$ operator with $\sigma_{\max}(A_h)=\Theta(h^{-p})$, $\sigma_{\min}^+(A_h)=\Theta(1)$, and $W$ is uniformly spectrally equivalent to the identity on the active residual space, the largest eigenvalue contributed by the residual penalty block is $\Theta(\lambda h^{-2p})$.
For an ambient condition-number statement, fix a declared subspace $V_h$ and interpret restriction with an orthonormal basis of $V_h$.
Suppose $H_{\mathrm{gen}}\succeq0$, the restriction $H_{\lambda,h}|_{V_h}$ is positive definite, $\sigma_{\max}(A_h|_{V_h})=\Theta(h^{-p})$, and $V_h\cap\mathop{\mathrm{ker}}\nolimits A_h$ contains a unit vector $z_h$ with $0<z_h^\top H_{\mathrm{gen}}z_h\leq L_{\mathrm{tan}}$ for an $h$- and $\lambda$-independent constant $L_{\mathrm{tan}}$.
Then
\begin{equation} \kappa(H_{\lambda,h}|_{V_h})
 \geq \frac{c_W\lambda\sigma_{\max}(A_h|_{V_h})^2}{L_{\mathrm{tan}}}
 =\Omega(\lambda h^{-2p}),
 \label{eq:ambient-condition-lower-bound}
\end{equation}
where $c_WI\preceq W$ on the active residual space.
Writing $\kappa_+$ for the ratio of the largest to the smallest positive eigenvalue, if a tangent preconditioner $S$ satisfies
\begin{equation} (1-\delta)I\preceq S^\top N_h^\top H_{\mathrm{gen}}N_hS\preceq(1+\delta)I,
 \qquad 0\leq\delta<1,
 \label{eq:tangent-spectral-equivalence}
\end{equation}
then
\begin{equation} \kappa_+\!\left(S^\top N_h^\top H_{\mathrm{gen}}N_hS\right)
 \leq\frac{1+\delta}{1-\delta},
 \label{eq:normal-tangent-condition-bound}
\end{equation}
independently of $\lambda$.
\end{theorem}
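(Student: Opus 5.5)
The proof has four parts: the intrinsic Hessian, the penalty-block eigenvalue, the ambient lower bound on $V_h$, and the preconditioned tangent bound.

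\textbf{Intrinsic Hessian.} Substitute $x=x_p+N_hy$ into $\mathcal L_\lambda$. Since $A_hx=A_hx_p+A_hN_hy=b$, the penalty term vanishes identically in $y$. The chain rule, with $D_yx=N_h$ constant, then gives
\[
\nabla_y^2\mathcal L_\lambda=N_h^\top H_{\mathrm{gen}}N_h,
\]
which does not involve $\lambda$. This is the linear special case of Eq.~\ref{eq:app-composed-differential}: $A_hN_h=0$ kills the residual Gauss--Newton block.

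\textbf{Penalty-block eigenvalue.} Write $\lambda_{\max}(\lambda A_h^\top WA_h)=\lambda\norm{W^{1/2}A_h}_2^2$. The upper bound follows from $\norm{W}_2=O(1)$ on the active residual space (spectral equivalence). The lower bound takes the top right singular vector $v$ of $A_h$: then $A_hv$ lies in the active residual space, so
\[
v^\top A_h^\top WA_hv\ge c_W\sigma_{\max}(A_h)^2 .
\]
Combined with $\sigma_{\max}(A_h)=\Theta(h^{-p})$, this gives $\Theta(\lambda h^{-2p})$.

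\textbf{Ambient lower bound.} Work with $H=H_{\lambda,h}|_{V_h}$, which is positive definite. Use Rayleigh quotients on both ends of the spectrum.
\begin{itemize}
\item \emph{Large end.} Take the unit $v\in V_h$ attaining $\sigma_{\max}(A_h|_{V_h})$. Because $H_{\mathrm{gen}}\succeq0$, we get $\lambda_{\max}(H)\ge v^\top Hv\ge c_W\lambda\sigma_{\max}(A_h|_{V_h})^2$.
\item \emph{Small end.} For $z_h\in V_h\cap\ker A_h$, the penalty contributes nothing, so $\lambda_{\min}(H)\le z_h^\top Hz_h=z_h^\top H_{\mathrm{gen}}z_h\le L_{\mathrm{tan}}$.
\end{itemize}
Dividing gives Eq.~\ref{eq:ambient-condition-lower-bound}, and positive definiteness guarantees the ratio is the condition number. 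The one thing to check is that $A_hv$ lies in the active residual space where $c_WI\preceq W$ holds. This is true because $A_hv\in\operatorname{range}A_h$, provided "active residual space" is read as $\operatorname{range}A_h$. I expect this interpretive point to be the main obstacle, and I would state this reading explicitly.

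\textbf{Preconditioned tangent bound.} By Eq.~\ref{eq:tangent-spectral-equivalence}, every eigenvalue of the symmetric matrix $S^\top N_h^\top H_{\mathrm{gen}}N_hS$ lies in $[1-\delta,1+\delta]$. Since $1-\delta>0$, all eigenvalues are positive, so $\kappa_+$ is at most $(1+\delta)/(1-\delta)$. The matrix involves only $H_{\mathrm{gen}}$, so the bound is independent of $\lambda$ by the first part.
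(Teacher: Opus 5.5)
Your proposal is correct and follows the paper's proof essentially step for step. The shared steps are: substituting $x=x_p+N_hy$ to kill the penalty term, using spectral equivalence of $W$ on $\mathop{\mathrm{range}}\nolimits A_h$ to get the $\Theta(\lambda h^{-2p})$ penalty eigenvalue, taking Rayleigh quotients at the top singular vector of $A_h|_{V_h}$ and at $z_h$ for the ambient bound, and applying the Loewner sandwich for the tangent bound. Your reading of the ``active residual space'' as $\mathop{\mathrm{range}}\nolimits A_h$ is exactly what the paper uses implicitly when it writes $c_W\|A_hz\|_2^2\le z^\top A_h^\top WA_hz\le C_W\|A_hz\|_2^2$ for all $z$.
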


Substituting $x=x_p+N_hy$ into Eq.~\ref{eq:ambient-composite-curvature} gives $A_hx-b=A_hN_hy=0$.
Differentiating the restricted generative objective twice gives the intrinsic Hessian $N_h^\top H_{\mathrm{gen}}N_h$; the residual term contributes $\lambda N_h^\top A_h^\top WA_hN_h=0$ for every $\lambda$.

Uniform spectral equivalence of $W$ implies $0<c_W\leq C_W<\infty$, independent of $h$, such that
\begin{equation*} c_W\|A_hz\|_2^2\leq z^\top A_h^\top WA_hz\leq C_W\|A_hz\|_2^2.
\end{equation*}
On the active normal subspace, the assumed singular-value scaling gives
\begin{equation*} \lambda_{\max}(\lambda A_h^\top WA_h)=\Theta(\lambda h^{-2p}),
 \qquad
 \lambda_{\min}^{+}(\lambda A_h^\top WA_h)=\Theta(\lambda).
\end{equation*}
This establishes the claimed normal stiffness scaling.
Under the additional hypotheses of Theorem~\ref{thm:penalty-quotient}, positive semidefiniteness gives
\begin{equation*} \lambda_{\max}(H_{\lambda,h}|_{V_h})
 \geq c_W\lambda\sigma_{\max}(A_h|_{V_h})^2.
\end{equation*}
For the unit vector $z_h\in V_h\cap\mathop{\mathrm{ker}}\nolimits A_h$ in the theorem,
\begin{equation*} \lambda_{\min}(H_{\lambda,h}|_{V_h})
 \leq z_h^\top H_{\lambda,h}z_h
 =z_h^\top H_{\mathrm{gen}}z_h\leq L_{\mathrm{tan}}.
\end{equation*}
Dividing proves Eq.~\ref{eq:ambient-condition-lower-bound}.
The ambient bound applies on the declared subspace and does not remove
generator-dependent tangential curvature.
Finally, the Loewner bounds in Eq.~\ref{eq:tangent-spectral-equivalence} place every active eigenvalue of the preconditioned intrinsic Hessian in $[1-\delta,1+\delta]$; taking their ratio proves Eq.~\ref{eq:normal-tangent-condition-bound}.
The matrix $S$ is an abstract reference transform satisfying Eq.~\ref{eq:tangent-spectral-equivalence}; the implemented transforms $C_c$ and $P_{s,c}$ are analyzed separately in Proposition~\ref{prop:app-local-pe-conditioning}.

For the ideal residual-space Riesz choice $W_h=(A_hA_h^\top+\varepsilon I)^{-1}$, the nonzero eigenvalues of $A_h^\top W_hA_h$ are $\sigma_i^2/(\sigma_i^2+\varepsilon)$.
They approach one when $\varepsilon$ is small relative to the active spectrum.
For small $\varepsilon$ relative to the active spectrum, this balances the residual-normal block, while intrinsic parameterization and tangent conditioning control the remaining feasible directions.

\subsection{Approximate parameterizations and error bounds}
\label{app:approximate-chart-proof}

For an approximate decoder, define two independent approximation errors
\begin{equation}\begin{aligned}
    \varepsilon_{\mathrm{chart}}
    &\coloneqq
    \sup_{y\in\mathcal{Y}_c}
    \norm{R_h(\chi_c(y),c)}_2,\\
    \delta_{\mathrm{chart}}
    &\coloneqq
    \sup_{y\in\mathcal{Y}_c}
    \norm{D_y[R_h(\chi_c(y),c)]}_2.
\end{aligned}
    \label{eq:app-two-chart-errors}
\end{equation}

where $\varepsilon_{\mathrm{chart}}$ controls residual value, whereas
$\delta_{\mathrm{chart}}$ controls the pulled-back residual differential.

\begin{theorem}[Approximate feasibility and residual Gauss--Newton-block bound]
\label{thm:app-approximate-chart}
Suppose the quantities in Eq.~\ref{eq:app-two-chart-errors} are finite and
$T_\phi^{0,1}(r_0,c)$ maps every evaluated source coordinate into the coordinate domain.
For $G_{\phi,c}^{\mathrm{RF}}=\Psi_c\circ T_\phi^{0,1}$,
\begin{equation}    \norm{R_h(G_{\phi,c}^{\mathrm{RF}}(r_0,c),c)}_2
    \leq\varepsilon_{\mathrm{chart}}.
    \label{eq:app-approx-feasibility}
\end{equation}
For any differentiable variable $a$ and $W_R\succeq0$,
\begin{equation}    \norm{H_{\mathrm{GN},a}^{R}}_2
    \leq
    \norm{W_R}_2\,
    \delta_{\mathrm{chart}}^2
    \norm{C_c^{-1}D_aT_\phi^{0,1}}_2^2.
    \label{eq:app-approx-gn-bound}
\end{equation}
\end{theorem}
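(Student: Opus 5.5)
The plan is to treat the two claims separately. Both reduce to evaluating the approximate-chart quantities in Eq.~\ref{eq:app-two-chart-errors} at the particular coordinate $y=\mu_c+C_c^{-1}T_\phi^{0,1}(r_0,c)$, and then applying the chain rule exactly as in Appendix~\ref{app:exact-manifold-proof}.

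For the feasibility bound in Eq.~\ref{eq:app-approx-feasibility}, I would write
\[
G_{\phi,c}^{\mathrm{RF}}(r_0,c)=\Psi_c\bigl(T_\phi^{0,1}(r_0,c)\bigr)=\chi_c(y),
\qquad
y=\mu_c+C_c^{-1}T_\phi^{0,1}(r_0,c).
\]
By hypothesis $T_\phi^{0,1}(r_0,c)$ lies in the valid $r$-domain $C_c(\mathcal Y_c-\mu_c)$, so $y\in\mathcal Y_c$. Therefore
\[
\norm{R_h(\chi_c(y),c)}_2\le\sup_{y'\in\mathcal Y_c}\norm{R_h(\chi_c(y'),c)}_2=\varepsilon_{\mathrm{chart}}.
\]
This step is immediate; the only thing to verify is the domain membership, which is exactly what the hypothesis supplies.

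For the Gauss--Newton bound, I would first identify the residual block. Following Lemma~\ref{lem:app-hessian-decomposition} and Eq.~\ref{eq:app-composite-gn}, applied with generator $G_\phi=G_{\phi,c}^{\mathrm{RF}}$, the block is $H_{\mathrm{GN},a}^{R}=B^\top W_R B$ with $B=D_a[R_h\circ G_{\phi,c}^{\mathrm{RF}}]$. The point is to factor $B$ through the pulled-back residual $y\mapsto R_h(\chi_c(y),c)$, not through $J_R$ and $J_\chi$ separately. By the chain rule,
\[
B=D_y\bigl[R_h(\chi_c(y),c)\bigr]\,C_c^{-1}\,D_aT_\phi^{0,1},
\]
which mirrors Eq.~\ref{eq:app-composed-differential}. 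In the exact case the first factor vanishes; here it is bounded in operator norm by $\delta_{\mathrm{chart}}$, again because $y\in\mathcal Y_c$.

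Submultiplicativity then gives
\[
\norm{B^\top W_RB}_2\le\norm{W_R}_2\norm{B}_2^2\le\norm{W_R}_2\,\delta_{\mathrm{chart}}^2\,\norm{C_c^{-1}D_aT_\phi^{0,1}}_2^2,
\]
where I keep $C_c^{-1}D_aT_\phi^{0,1}$ grouped as a single factor rather than splitting it.

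The main obstacle is a notational one: the Gauss--Newton block must be defined with the residual Jacobian of the composed map. If one instead writes $J_G^\top J_R^\top W_RJ_RJ_G$ with $J_G=J_\chi C_c^{-1}D_aT$, then $J_RJ_\chi$ is exactly $D_y[R_h\circ\chi_c]$. Under the standing differentiability assumption (Assumption~\ref{ass:differentiability}) these two expressions coincide, so the bound follows. I would state this identification explicitly so that $\delta_{\mathrm{chart}}$ legitimately controls the product $J_RJ_\chi$.
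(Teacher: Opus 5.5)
Your proposal is correct and takes essentially the same route as the paper's proof. The paper reads the feasibility bound off the supremum defining $\varepsilon_{\mathrm{chart}}$, and obtains the Gauss--Newton bound from the chain-rule factorization $D_y[R_h\circ\chi_c]\,C_c^{-1}D_aT_\phi^{0,1}$ followed by $\norm{A^\top W_RA}_2\le\norm{W_R}_2\norm{A}_2^2$; your explicit checks that $y=\mu_c+C_c^{-1}T_\phi^{0,1}(r_0,c)\in\mathcal Y_c$ and that $J_RJ_\chi=D_y[R_h\circ\chi_c]$ simply make visible steps the paper leaves implicit.
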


\begin{proof}
Eq.~\ref{eq:app-approx-feasibility} follows from the first supremum in Eq.~\ref{eq:app-two-chart-errors}.
The chain rule gives
\begin{equation*}    D_a[R_h\circ\Psi_c\circ T_\phi^{0,1}]
    =D_y[R_h\circ\chi_c]C_c^{-1}D_aT_\phi^{0,1}.
\end{equation*}
The operator norm is at most
$\delta_{\mathrm{chart}}\norm{C_c^{-1}D_aT_\phi^{0,1}}_2$.
Applying $\norm{A^\top W_RA}_2\leq\norm{W_R}_2\norm{A}_2^2$ proves Eq.~\ref{eq:app-approx-gn-bound}.
\end{proof}

The two errors are independent: a small residual value need not imply a small
pulled-back residual differential.
Both quantities are needed to characterize approximate decoders.
Under differentiability, exact parameterization corresponds to
$\varepsilon_{\mathrm{chart}}=\delta_{\mathrm{chart}}=0$.

\subsection{Co-area law}
\label{app:coarea-law}

For a density $\rho(x\mid c)$ relative to $dV_{M_h}$, hard conditioning by a full-row-rank residual is defined by a co-area target~\citep{xu2026coarea}.
On an injective chart, with $G_c(y)=J_\chi(y,c)^\top M_hJ_\chi(y,c)$ and $J_R=D_xR_h\vert_{\chi_c(y)}$, the intrinsic density is
\begin{equation} \pi_Y^{\mathrm{coa}}(y\mid c)
 =\frac{\rho(\chi_c(y)\mid c)}{Z_c}
 \frac{\sqrt{\det G_c(y)}}
      {\sqrt{\det\!\left(J_R(M_h)^{-1}J_R^\top\right)}}.
 \label{eq:coarea-coordinate-law}
\end{equation}
where $Z_c$ normalizes the density; the numerator is the tangent volume element and the denominator is the residual-normal co-area Jacobian in the same metric.

\begin{proposition}[Coordinate invariance of the co-area target]
\label{prop:coarea-chart-invariance}
Let $\chi:\mathcal Y\to\mathcal M^\chi$ satisfy the hypotheses of Eq.~\ref{eq:coarea-coordinate-law}, let $\varphi:\widetilde{\mathcal Y}\to\mathcal Y$ be a $C^1$ diffeomorphism, and set $\widetilde\chi=\chi\circ\varphi$.
If $\pi_Y^{\mathrm{coa}}$ is the density in Eq.~\ref{eq:coarea-coordinate-law}, then the density computed from $\widetilde\chi$ obeys
\begin{equation} \pi_{\widetilde Y}^{\mathrm{coa}}(\widetilde y)
 =\pi_Y^{\mathrm{coa}}(\varphi(\widetilde y))
   \left|\det D\varphi(\widetilde y)\right|,
 \label{eq:coarea-chart-change}
\end{equation}
The pushforward measures coincide: $\widetilde\chi_\#(\pi_{\widetilde Y}^{\mathrm{coa}}\,d\widetilde y)=\chi_\#(\pi_Y^{\mathrm{coa}}\,dy)$.
Reparameterization changes coordinate density but preserves the induced physical measure on the feasible component.
\end{proposition}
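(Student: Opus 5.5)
We prove Proposition~\ref{prop:coarea-chart-invariance} by computing how each factor in Eq.~\ref{eq:coarea-coordinate-law} transforms under $\widetilde\chi=\chi\circ\varphi$, and then applying the change-of-variables formula. The density $\rho$ evaluated at the decoded state is unchanged, since $\widetilde\chi(\widetilde y)=\chi(\varphi(\widetilde y))$. The residual-normal denominator $\sqrt{\det(J_R M_h^{-1}J_R^\top)}$ depends only on the physical point $x=\chi(\varphi(\widetilde y))$ and not on the chart, so it is also unchanged. The only factor that moves is the tangent volume element.

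By the chain rule, $J_{\widetilde\chi}(\widetilde y)=J_\chi(\varphi(\widetilde y))\,D\varphi(\widetilde y)$. Hence
\[
\widetilde G(\widetilde y)=D\varphi(\widetilde y)^\top G(\varphi(\widetilde y))\,D\varphi(\widetilde y).
\]
Because $D\varphi$ is square and invertible, multiplicativity of the determinant gives
\[
\sqrt{\det\widetilde G(\widetilde y)}=\sqrt{\det G(\varphi(\widetilde y))}\,\left|\det D\varphi(\widetilde y)\right|.
\]

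Next we handle the normalizing constant. We write $\widetilde Z$ as an integral over $\widetilde{\mathcal Y}$ of the unnormalized density. Substituting $y=\varphi(\widetilde y)$, the factor $|\det D\varphi|$ is exactly the change-of-variables Jacobian, so $\widetilde Z=Z$. This requires $\varphi$ to be a diffeomorphism onto $\mathcal Y$, which is the stated hypothesis. Combining the transformed factors then yields Eq.~\ref{eq:coarea-chart-change}.

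For the pushforward statement, take a test function $f$ on $\mathcal M^\chi$ and write
\[
\int f(\widetilde\chi(\widetilde y))\,\pi_{\widetilde Y}^{\mathrm{coa}}(\widetilde y)\,d\widetilde y .
\]
Inserting Eq.~\ref{eq:coarea-chart-change} and changing variables $y=\varphi(\widetilde y)$ turns this into $\int f(\chi(y))\,\pi_Y^{\mathrm{coa}}(y)\,dy$. The two pushforward measures therefore agree. Injectivity of $\chi$ guarantees that both sides describe the same measure on the feasible component. The final sentence of the proposition is a restatement of this equality.

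None of these steps is difficult. The point that needs the most care is confirming that the denominator is genuinely chart-independent. One has to observe that $J_R$ is the ambient residual differential evaluated at the same physical point, with no composition by the chart. One also has to check that the normalizer is defined over the full chart domain, so that $\widetilde Z=Z$ holds.
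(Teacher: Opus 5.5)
Your proposal is correct and follows essentially the same route as the paper's proof: the chain rule $J_{\widetilde\chi}=J_\chi D\varphi$, the identity $\det\widetilde G=\det(D\varphi)^2\det G$, the observation that $\rho$ and the residual-normal determinant are evaluated at the same physical point, and then change of variables for the pushforward. Your explicit check that $\widetilde Z=Z$, using that $\varphi$ maps onto $\mathcal Y$, fills in a step the paper leaves implicit in ``substitution yields''; your appeal to injectivity of $\chi$ is harmless but is not needed for the pushforward identity itself.
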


\begin{proof}
The chain rule gives $J_{\widetilde\chi}=J_\chi D\varphi$ and yields
\begin{equation*} \det(J_{\widetilde\chi}^\top M_xJ_{\widetilde\chi})
 =\det(D\varphi)^2\det(J_\chi^\top M_xJ_\chi).
\end{equation*}
The ambient density and residual-normal determinant are evaluated at the same physical point $\chi(\varphi(\widetilde y))$.
Substitution into Eq.~\ref{eq:coarea-coordinate-law} yields Eq.~\ref{eq:coarea-chart-change}; the ordinary change-of-variables theorem then proves equality of the pushforward measures.
\end{proof}

\subsection{Encoding-induced representation error}
\label{app:encoding-error}

For data outside the parameterization range, assume that the minimum below
is attained and choose a measurable encoder selection
\begin{equation} \eta_c(x)\in\arg\min_{y\in\mathcal Y_c}
 \norm{\chi_c(y)-x}_{M_h}^2.
\label{eq:declared-encoder}
\end{equation}

For the encoder in Eq.~\ref{eq:declared-encoder}, let $X\sim p_{\mathrm{data}}(\cdot\mid c)$ and define the feasible projection $\bar X=\chi_c(\eta_c(X))$.
The joint law of $(X,\bar X)$ is a valid coupling of $p_{\mathrm{data}}$ and $\bar p_{\mathrm{data}}=(\chi_c\circ\eta_c)_\#p_{\mathrm{data}}$.
By the definition of the Wasserstein distance,
\begin{equation}    W_{2,M_x}^2(\bar p_{\mathrm{data}},p_{\mathrm{data}})
    \leq
    \mathbb{E}\norm{\bar X-X}_{M_x}^2.
\label{eq:app-wasserstein-projection}
\end{equation}
Eq.~\ref{eq:app-wasserstein-projection} bounds the representation error; we track it separately from generative error.

\section{Support, Measure, and Transport Errors}
\label{app:support-measure-transport}

Residual factorization fixes feasible support, and the target-law construction
specifies probability mass; matching determines how closely the learned
pushforward follows that law.
The following bound separates transport, measure, and representation error.

\begin{proposition}[Three-way error bookkeeping bound]
\label{prop:three-way-error}
Let $\widehat\pi$, $\pi_{\mathrm{enc}}$, $\pi_{\mathrm{law}}$, and $\pi_{\mathrm{data}}$ be probability measures with finite second moments under the metric induced by $M_x$.
Then
\begin{equation} W_{2,M_x}(\widehat\pi,\pi_{\mathrm{data}})
 \leq W_{2,M_x}(\widehat\pi,\pi_{\mathrm{enc}})
 +W_{2,M_x}(\pi_{\mathrm{enc}},\pi_{\mathrm{law}})
 +W_{2,M_x}(\pi_{\mathrm{law}},\pi_{\mathrm{data}}).
 \label{eq:app-three-way-error}
\end{equation}
\end{proposition}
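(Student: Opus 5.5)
This bound is the triangle inequality for $W_{2,M_x}$ on the space of probability measures with finite second moments, applied twice. The plan is therefore to verify that $W_{2,M_x}$ is a genuine metric on that space and then chain the inequality.

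First, I note that $\norm{v}_{M_x}=\sqrt{v^\top M_xv}$ is a norm on $\mathbb R^d$, because $M_x=M_h$ is symmetric positive definite. Hence $(\mathbb R^d,\norm{\cdot}_{M_x})$ is a complete separable metric space, isometric to Euclidean space through $v\mapsto M_x^{1/2}v$. Finite second moments under this norm are equivalent to finite Euclidean second moments, since the two norms are equivalent. Therefore all four measures lie in $\mathcal P_2$, and every pairwise distance in the statement is finite.

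Second, I establish the triangle inequality $W_{2,M_x}(\mu_1,\mu_3)\le W_{2,M_x}(\mu_1,\mu_2)+W_{2,M_x}(\mu_2,\mu_3)$ by the standard gluing argument. I take optimal couplings $\gamma_{12}$ and $\gamma_{23}$; these exist by tightness and lower semicontinuity, or else one can use $\epsilon$-optimal couplings and let $\epsilon\to0$. I glue them along the common marginal $\mu_2$, using disintegration on the Polish space, to obtain random variables $(Z_1,Z_2,Z_3)$ with $(Z_1,Z_2)\sim\gamma_{12}$ and $(Z_2,Z_3)\sim\gamma_{23}$. Then $(Z_1,Z_3)$ is a coupling of $\mu_1$ and $\mu_3$. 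Minkowski's inequality in $L^2$ gives
\[
\bigl(\mathbb E\norm{Z_1-Z_3}_{M_x}^2\bigr)^{1/2}\le\bigl(\mathbb E\norm{Z_1-Z_2}_{M_x}^2\bigr)^{1/2}+\bigl(\mathbb E\norm{Z_2-Z_3}_{M_x}^2\bigr)^{1/2}.
\]
Taking the infimum on the left yields the inequality. Alternatively, I can push all measures forward by the isometry $M_x^{1/2}$ and cite the Euclidean $W_2$ triangle inequality directly.

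Third, I apply the triangle inequality to $(\widehat\pi,\pi_{\mathrm{enc}},\pi_{\mathrm{data}})$, and then again to $(\pi_{\mathrm{enc}},\pi_{\mathrm{law}},\pi_{\mathrm{data}})$. Combining the two applications gives Eq.~\ref{eq:app-three-way-error}.

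The only step with real content is the gluing lemma. It is classical, so I would cite it (e.g.\ Villani) rather than reprove it. Reducing to Euclidean $W_2$ through $M_x^{1/2}$ avoids any extra work.

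Finally, I would remark on how the three terms are interpreted. The last term can be bounded by Eq.~\ref{eq:app-wasserstein-projection} when $\pi_{\mathrm{law}}=\bar p_{\mathrm{data}}$. The first term is the transport error, which is controlled by the flow-map error.
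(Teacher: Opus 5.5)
Your proposal is correct and matches the paper's proof, which applies the triangle inequality for $W_{2,M_x}$ twice, first through $\pi_{\mathrm{enc}}$ and then through $\pi_{\mathrm{law}}$. You also verify that $W_{2,M_x}$ is a genuine metric, either via the gluing lemma or by reducing to Euclidean $W_2$ through the isometry $v\mapsto M_x^{1/2}v$; the paper leaves this implicit.
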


\begin{proof}
Apply the triangle inequality for $W_{2,M_x}$ first through $\pi_{\mathrm{enc}}$ and then through $\pi_{\mathrm{law}}$.
\end{proof}

Eq.~\ref{eq:app-three-way-error} separates transport, target-law, and representation error.
The latter two vanish for exactly feasible sample-defined targets; density-defined targets use the co-area law in Eq.~\ref{eq:coarea-coordinate-law}.

\section{Physical and Statistical Preconditioning}
\label{app:preconditioning-theory}

After residual factorization, $C_c$ rescales the decoder-induced physical
metric in Eq.~\ref{eq:pullback-physical-metric}, while $P_{s,c}$ conditions the interpolation statistics seen by the
network.
We analyze their effects on local regression conditioning separately.
Writing $J_{\Psi_c}=D_r\Psi_c$ and $y=\mu_c+C_c^{-1}r$, the chain rule gives
$J_{\Psi_c}(r)=J_\chi(y,c)C_c^{-1}$, and the transformed metric is
$\widetilde G_c(r)=J_{\Psi_c}(r)^\top M_hJ_{\Psi_c}(r)$.

\subsection{Regularized input covariance}

\begin{lemma}[Spectrum of regularized whitening]
\label{lem:app-regularized-whitening}
Let $C\succeq0$ and $P=(C+\varepsilon I)^{-1/2}$ with $\varepsilon>0$.
If $C=U\mathop{\mathrm{diag}}\nolimits(\lambda_i)U^\top$, then
\begin{equation}    PCP^\top
    =U\mathop{\mathrm{diag}}\nolimits\!\left(
      \frac{\lambda_i}{\lambda_i+\varepsilon}
    \right)U^\top.
\label{eq:app-regularized-spectrum}
\end{equation}
On the active subspace of $C$,
\begin{equation}    \kappa_+(PCP^\top)
    =
    \frac{\lambda_{\max}(C)/(\lambda_{\max}(C)+\varepsilon)}
         {\lambda_{\min}^+(C)/(\lambda_{\min}^+(C)+\varepsilon)}.
\label{eq:app-regularized-condition}
\end{equation}
Exact whitening is recovered when $\varepsilon=0$ and $C$ is nonsingular.
\end{lemma}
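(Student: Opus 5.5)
The proof is a direct spectral computation that uses the eigendecomposition of $C$ and the functional calculus for symmetric matrices. Since $C\succeq0$ is symmetric, write $C=U\mathop{\mathrm{diag}}(\lambda_i)U^\top$ with $U$ orthogonal and $\lambda_i\geq0$. Then $C+\varepsilon I=U\mathop{\mathrm{diag}}(\lambda_i+\varepsilon)U^\top$ with all eigenvalues at least $\varepsilon>0$. The symmetric positive-definite inverse square root is therefore well defined and unique:
\begin{equation*}
P=U\mathop{\mathrm{diag}}\!\left((\lambda_i+\varepsilon)^{-1/2}\right)U^\top .
\end{equation*}
In particular $P^\top=P$.

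The first step is to substitute these three factors into $PCP^\top$. The inner products $U^\top U=I$ collapse, leaving a product of three commuting diagonal matrices. The diagonal entries are $(\lambda_i+\varepsilon)^{-1/2}\lambda_i(\lambda_i+\varepsilon)^{-1/2}=\lambda_i/(\lambda_i+\varepsilon)$, which is Eq.~\ref{eq:app-regularized-spectrum}.

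For Eq.~\ref{eq:app-regularized-condition}, observe that $f(\lambda)=\lambda/(\lambda+\varepsilon)$ satisfies $f(0)=0$. It is strictly increasing on $[0,\infty)$, since $f'(\lambda)=\varepsilon/(\lambda+\varepsilon)^2>0$.
\begin{itemize}
\item Because $f(0)=0$ and $f$ is positive elsewhere, the zero eigenvalues of $PCP^\top$ are exactly the zero eigenvalues of $C$, with the same eigenvectors. So the active subspaces of $C$ and $PCP^\top$ coincide.
\item Because $f$ is monotone, the largest positive eigenvalue of $PCP^\top$ is $f(\lambda_{\max}(C))$ and the smallest positive one is $f(\lambda_{\min}^+(C))$.
\item Since $PCP^\top$ is symmetric positive semidefinite, its singular values equal its eigenvalues. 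Hence $\kappa_+$ is the ratio of these two quantities, which is the claimed formula.
\end{itemize}

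The final remark follows by the same computation at $\varepsilon=0$. When $C$ is nonsingular, all $\lambda_i>0$, so $P=C^{-1/2}$ is defined and every diagonal entry becomes $\lambda_i/\lambda_i=1$. This gives $PCP^\top=I$, i.e.\ exact whitening.

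There is no real obstacle. The only points needing care are uniqueness of the inverse square root, so that $P$ shares the eigenbasis $U$, and the identification of the active subspace through $f(0)=0$; both are immediate from the spectral theorem.
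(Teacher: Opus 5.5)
Your proposal is correct and follows the same route as the paper: apply $(\lambda+\varepsilon)^{-1/2}$ to the eigendecomposition of $C$, read off the spectrum, and obtain the active condition number. Your added details make explicit what the paper's one-line proof leaves implicit, namely that the SPD inverse square root shares the eigenbasis $U$, that $\lambda/(\lambda+\varepsilon)$ is increasing with value $0$ at $\lambda=0$ so the active subspace is preserved, and that singular values equal eigenvalues for a positive semidefinite matrix.
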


\begin{proof}
Applying $(\lambda+\varepsilon)^{-1/2}$ to the eigendecomposition of $C$ gives Eq.~\ref{eq:app-regularized-spectrum} and the condition number in Eq.~\ref{eq:app-regularized-condition}.
\end{proof}

For the interpolation states in Eq.~\ref{eq:interpolation-pair}, the transform in Eq.~\ref{eq:input-preconditioner} gives
\begin{equation} P_{s,c}\Sigma_{s,c}P_{s,c}^\top
 =\Sigma_{s,c}(\Sigma_{s,c}+\varepsilon_P I)^{-1}.
\label{eq:app-state-preconditioner}
\end{equation}
Eq.~\ref{eq:app-state-preconditioner} shows how $P_{s,c}$ conditions network inputs while preserving the specified physical target law.

\subsection{Local physical-space conditioning}

Fix $s<t$, $c$, and a stopped-gradient target branch.
Let $\zeta=P_{s,c}(r_s-m_{s,c})$ and consider a local linear velocity model $\bar u_A=A\zeta$.
The endpoint depending on $A$ is $a_A=r_s+(t-s)A\zeta$.
\begin{proposition}[Local decoded-endpoint conditioning]
\label{prop:app-local-pe-conditioning}
The Gauss--Newton matrix of the decoded-endpoint loss with respect to $A$ is
\begin{equation} H_A^{\mathrm{GN}}
 =\frac{t-s}{m\delta}\,
 \mathbb E\left[(\zeta\zeta^\top)\otimes\widetilde G_c(a_A)\right].
\label{eq:app-local-pe-hessian}
\end{equation}
Assume $\alpha I\preceq\widetilde G_c(a_A)\preceq\beta I$ in the stated region and let $S_{s,c}=\mathbb E[\zeta\zeta^\top]$ be positive definite.
Then
\begin{equation} \kappa\!\left(H_A^{\mathrm{GN}}\right)
 \leq\frac{\beta}{\alpha}\kappa(S_{s,c}),
 \qquad
 S_{s,c}=P_{s,c}\Sigma_{s,c}P_{s,c}^\top
\label{eq:app-local-pe-condition}
\end{equation}
under exact centering.
\end{proposition}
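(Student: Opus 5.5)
I would compute the Gauss--Newton matrix directly from Eq.~\ref{eq:physical-endpoint-loss} with the local linear model, and then bound the spectrum of the resulting Kronecker-type expectation with Loewner-order arguments.

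\emph{Step 1: the linearized residual.} With $\bar u_A=A\zeta$ and the target branch frozen, the per-sample loss is
\[
\ell(A)=\frac{\norm{\Psi_c(a_A)-b}_{M_h}^2}{2m\delta(t-s)},\qquad a_A=r_s+(t-s)A\zeta .
\]
I vectorize $A$ column-wise, so that $A\zeta=(\zeta^\top\otimes I_m)\operatorname{vec}(A)$. The Jacobian of the residual $\Psi_c(a_A)-b$ with respect to $\operatorname{vec}(A)$ is then
\[
J=(t-s)\,J_{\Psi_c}(a_A)\,(\zeta^\top\otimes I_m).
\]

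\emph{Step 2: assembling the Gauss--Newton matrix.} The Gauss--Newton matrix is $\mathbb E[J^\top M_h J]/(m\delta(t-s))$. Using the mixed-product rule, $(\zeta\otimes I)\widetilde G_c(\zeta^\top\otimes I)=(\zeta\zeta^\top)\otimes\widetilde G_c$ with $\widetilde G_c=J_{\Psi_c}^\top M_hJ_{\Psi_c}$. Together with the scalar $(t-s)^2/(m\delta(t-s))=(t-s)/(m\delta)$, this gives Eq.~\ref{eq:app-local-pe-hessian}. This step only needs the identity $J_{\Psi_c}=J_\chi C_c^{-1}$ recorded above the proposition.

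\emph{Step 3: the spectral sandwich.} For every sample, $(\zeta\zeta^\top)\otimes\widetilde G_c(a_A)$ lies between $(\zeta\zeta^\top)\otimes\alpha I$ and $(\zeta\zeta^\top)\otimes\beta I$ in Loewner order. This holds because $\zeta\zeta^\top\succeq0$ and $X\otimes Y\succeq0$ whenever both factors are PSD; apply this with $Y=\widetilde G_c-\alpha I$ and $Y=\beta I-\widetilde G_c$. Taking expectations yields
\[
\alpha\,S_{s,c}\otimes I\ \preceq\ \tfrac{m\delta}{t-s}H_A^{\mathrm{GN}}\ \preceq\ \beta\,S_{s,c}\otimes I .
\]
The eigenvalues of $S_{s,c}\otimes I$ are those of $S_{s,c}$, so
\[
\lambda_{\max}(H)\le \tfrac{t-s}{m\delta}\beta\lambda_{\max}(S),\qquad \lambda_{\min}(H)\ge\tfrac{t-s}{m\delta}\alpha\lambda_{\min}(S)>0 .
\]
Taking the ratio gives the bound $\kappa(H)\le(\beta/\alpha)\kappa(S)$.

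\emph{Step 4: identifying $S_{s,c}$.} Under exact centering, $\mathbb E[r_s-m_{s,c}]=0$ and $\mathbb E[(r_s-m_{s,c})(r_s-m_{s,c})^\top]=\Sigma_{s,c}$. Hence $S_{s,c}=P_{s,c}\Sigma_{s,c}P_{s,c}^\top$, which equals Eq.~\ref{eq:app-state-preconditioner} and is positive definite by assumption.

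The main obstacle is Step 3's coupling. $\widetilde G_c(a_A)$ depends on the sample through $a_A$, so the expectation does not factor as $S\otimes\mathbb E[\widetilde G_c]$. That is why I would use the pointwise Loewner sandwich rather than attempt an exact factorization. The uniform eigenvalue bounds on $\widetilde G_c$ over the evaluated endpoint region are exactly what make this work. The remaining bookkeeping is to check the vectorization convention (column-major versus row-major) so that the Kronecker order matches the one stated in Eq.~\ref{eq:app-local-pe-hessian}.
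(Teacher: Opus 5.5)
Your proposal is correct and follows the paper's route. The paper likewise differentiates the decoded endpoint loss to obtain the Kronecker-structured Gauss--Newton matrix, sandwiches it pointwise between $\alpha$ and $\beta$ multiples of $S_{s,c}\otimes I$, and takes extremal eigenvalues; you add the column-major vectorization, the mixed-product identity, and the Kronecker-product positivity argument that the paper leaves implicit.
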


\begin{proof}
Differentiating the squared decoded endpoint error gives the pullback and input factors in Eq.~\ref{eq:app-local-pe-hessian}.
The stated spectral bounds sandwich the expectation between positive multiples of $S_{s,c}\otimes I$.
The extremal eigenvalues give Eq.~\ref{eq:app-local-pe-condition}.
\end{proof}

\section{One-Step Pullback Geometry}
\label{app:pullback-theory}

The pullback matrix $G_c(y)=J_\chi(y,c)^\top M_xJ_\chi(y,c)$ is exact for infinitesimal perturbations.
The action in preconditioned coordinates can be computed without forming a dense Jacobian:
\begin{equation} \widetilde G_c(r)v
 =J_{\Psi_c}(r)^\top
 \bigl[M_h\bigl(J_{\Psi_c}(r)v\bigr)\bigr].
\label{eq:matrix-free-pullback}
\end{equation}
Eq.~\ref{eq:matrix-free-pullback} uses a decoder Jacobian-vector product followed by a vector-Jacobian product.
The following result quantifies the approximation error for a finite coordinate displacement.

\begin{proposition}[Finite-displacement pullback error]
\label{prop:app-pullback-remainder}
Fix $y$ and a displacement $d$ such that the segment $y+sd$, $s\in[0,1]$, lies in $\mathcal{Y}_c$.
Freeze the positive-definite physical metric $M_x$.
Suppose
\begin{equation}    \norm{D_y^2\chi_c(y+sd)[v,v]}_{M_x}
    \leq L_\chi\norm{v}_2^2
\label{eq:app-chart-hessian-bound}
\end{equation}
for every $s\in[0,1]$ and $v$.
Let $K_\chi=\norm{J_\chi(y,c)}_{2\rightarrow M_x}$.
Then
\begin{equation}\begin{aligned}
    \big|
    \norm{\chi_c(y+d)-\chi_c(y)}_{M_x}^2
    -d^\top G_c(y)d
    \big|
    \leq
    L_\chi K_\chi\norm{d}_2^3
    +\frac{L_\chi^2}{4}\norm{d}_2^4.
\end{aligned}
    \label{eq:app-pullback-remainder}
\end{equation}
\end{proposition}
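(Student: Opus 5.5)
The plan is a second-order Taylor expansion of $\chi_c$ along the segment with an integral remainder, followed by expanding the squared $M_x$-norm. Write $J=J_\chi(y,c)$ and
\begin{equation*}
\chi_c(y+d)-\chi_c(y)=Jd+\rho,\qquad
\rho=\int_0^1(1-s)\,D_y^2\chi_c(y+sd)[d,d]\,ds,
\end{equation*}
which is valid because the segment lies in $\mathcal Y_c$ and $\chi_c$ is $C^2$ there (Assumption~\ref{ass:differentiability}). The triangle inequality for the norm $\norm{\cdot}_{M_x}$, moved inside the integral, together with the hypothesis in Eq.~\ref{eq:app-chart-hessian-bound} at $v=d$, gives
\begin{equation*}
\norm{\rho}_{M_x}\leq L_\chi\norm{d}_2^2\int_0^1(1-s)\,ds=\tfrac12 L_\chi\norm{d}_2^2 .
\end{equation*}

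Next, expand the square in the inner product $\langle u,v\rangle_{M_x}=u^\top M_x v$:
\begin{equation*}
\norm{Jd+\rho}_{M_x}^2=d^\top G_c(y)d+2\langle Jd,\rho\rangle_{M_x}+\norm{\rho}_{M_x}^2 .
\end{equation*}
The difference we must bound is therefore $2\langle Jd,\rho\rangle_{M_x}+\norm{\rho}_{M_x}^2$.

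For the cross term, use Cauchy--Schwarz in the $M_x$ inner product and the definition $K_\chi=\norm{J}_{2\to M_x}$, so that $\norm{Jd}_{M_x}\leq K_\chi\norm{d}_2$. This gives
\begin{equation*}
2\,|\langle Jd,\rho\rangle_{M_x}|\leq 2K_\chi\norm{d}_2\cdot\tfrac12L_\chi\norm{d}_2^2=L_\chi K_\chi\norm{d}_2^3 .
\end{equation*}
For the quadratic term, $\norm{\rho}_{M_x}^2\leq\tfrac14L_\chi^2\norm{d}_2^4$. Summing the two bounds yields Eq.~\ref{eq:app-pullback-remainder}.

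The only step needing care is the integral-form Taylor remainder. One must use the weight $(1-s)$ rather than a mean-value form, since mean-value theorems fail for vector-valued maps; this is exactly what produces the constants $1/2$, and hence $L_\chi K_\chi$ and $L_\chi^2/4$. One must also justify moving the $M_x$-norm inside the integral, which holds because it is a norm (Minkowski's integral inequality). I expect no other obstacle.
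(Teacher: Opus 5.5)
Your proposal is correct and follows the same route as the paper: a second-order Taylor expansion $\chi_c(y+d)-\chi_c(y)=J_\chi d+r_d$ with $\norm{r_d}_{M_x}\le\tfrac{L_\chi}{2}\norm{d}_2^2$, then expanding the squared $M_x$-norm and bounding the cross term by Cauchy--Schwarz. You make the integral remainder and the constant $\int_0^1(1-s)\,ds=\tfrac12$ explicit, which the paper only asserts.
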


\begin{proof}
Taylor's theorem under the bound in Eq.~\ref{eq:app-chart-hessian-bound} gives
\begin{equation*}    \chi_c(y+d)-\chi_c(y)=J_\chi(y,c)d+r_d,
    \qquad
    \norm{r_d}_{M_x}\leq\frac{L_\chi}{2}\norm{d}_2^2.
\end{equation*}
Expanding the squared norm and applying Cauchy--Schwarz yields
\begin{equation*}    2\norm{J_\chi d}_{M_x}\norm{r_d}_{M_x}
    +\norm{r_d}_{M_x}^2,
\end{equation*}
which is bounded by the right-hand side of Eq.~\ref{eq:app-pullback-remainder}.
\end{proof}

The cubic remainder shows that the pullback metric is a local approximation,
whereas decoded endpoint error measures finite physical discrepancy directly.

\section{Feasible Parameterizations and Preconditioning}
\label{app:chart-examples}

\subsection{Linear null-space parameterization}

\begin{proposition}[Exact encoding for affine constraints]
\label{prop:app-nullspace-chart}
Let $R_h(x)=Ax-b$, choose $x_p$ with $Ax_p=b$, and let the columns of $N$ form a basis of $\mathop{\mathrm{ker}}\nolimits A$.
Then
\begin{equation}    \chi(y)=x_p+Ny
    \label{eq:app-nullspace-chart}
\end{equation}
is an exact global constraint-satisfying parameterization of the affine feasible set.
Evaluating the constraint yields
\begin{equation}    R_h(\chi(y))=0,
    \qquad
    J_RJ_\chi=AN=0.
\label{eq:app-nullspace-annihilation}
\end{equation}
\end{proposition}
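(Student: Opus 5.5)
The argument is a direct substitution followed by a short linear-algebra check on the image of $\chi$. I will verify three things in turn: that every image point is feasible, that the residual Jacobian annihilates the chart Jacobian, and that $\chi$ is a global parameterization of the whole feasible set $\mathcal Z=\{x:Ax=b\}$, meaning it is onto and injective.

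\textbf{Feasibility and the Jacobian identity.} For any $y$,
\[
R_h(\chi(y))=A(x_p+Ny)-b=(Ax_p-b)+ANy=0,
\]
because $Ax_p=b$ and the columns of $N$ lie in $\ker A$, so $AN=0$. This gives the first identity in Eq.~\ref{eq:app-nullspace-annihilation} for every $y\in\mathbb R^m$, with no restriction on the coordinate domain. The residual is affine and the chart is affine, so $J_R=A$ at every point and $J_\chi=N$ at every $y$. Hence $J_RJ_\chi=AN=0$, which is the second identity. Equivalently, it follows by differentiating the identity $R_h\circ\chi\equiv0$, exactly as in Lemma~\ref{lem:app-tangent-space}.

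\textbf{Global bijectivity.}
\begin{itemize}
\item[] \emph{Onto.} Take any $x$ with $Ax=b$. Then $A(x-x_p)=0$, so $x-x_p\in\ker A$. Since the columns of $N$ span $\ker A$, there is some $y$ with $x=x_p+Ny$.
\item[] \emph{Injective.} The columns of $N$ are linearly independent, so $N$ has full column rank. If $Ny=Ny'$, then $N(y-y')=0$ and therefore $y=y'$.
\end{itemize}
So $\chi:\mathbb R^m\to\mathcal Z$ is an affine bijection with $m=\dim\ker A$. Its inverse is given explicitly by the left inverse
\[
\eta(x)=(N^\top N)^{-1}N^\top(x-x_p),
\]
which serves as an exact encoder. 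With constant rank $q=\operatorname{rank}A$, rank--nullity gives $m=d-q$, consistent with Assumption~\ref{ass:regular-manifold}.

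\textbf{Where care is needed.} There is no real obstacle. The one point to check is that \emph{global} is justified: the map must be onto the full affine set, not just a local patch. This follows from the spanning property of the basis. I would also note the trivial case: if $b\notin\operatorname{range}A$, the feasible set is empty. That case is excluded because the statement assumes $x_p$ exists.
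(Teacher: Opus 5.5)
Your proof is correct and follows the paper's route. The paper's proof is only the substitution of $\chi(y)=x_p+Ny$ into $Ax-b$, which yields both identities in Eq.~\ref{eq:app-nullspace-annihilation}; your surjectivity and injectivity check, together with the explicit left-inverse encoder, is a useful addition because it justifies the word ``global'' that the paper's one-line argument leaves implicit.
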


\begin{proof}
Substituting Eq.~\ref{eq:app-nullspace-chart} into $Ax-b$ gives Eq.~\ref{eq:app-nullspace-annihilation}.
\end{proof}

This construction covers discrete conservation, boundary, and observation constraints whenever they can be written as a compatible affine system.
Numerical quality depends on the representation of the null-space basis; the exact algebraic identity and coordinate conditioning are separate properties.

\subsection{Discrete streamfunction parameterization}

Let $D_x$ and $D_y$ be discrete derivative matrices on the same grid and with the same boundary convention.
Define
\begin{equation}    \chi(\psi)
    =\begin{bmatrix}D_y\psi\\-D_x\psi\end{bmatrix}.
    \label{eq:app-streamfunction-chart}
\end{equation}

\begin{proposition}[Compatible discrete incompressibility]
\label{prop:app-incompressibility}
If $D_xD_y=D_yD_x$, then every velocity decoded by Eq.~\ref{eq:app-streamfunction-chart} satisfies the matching discrete divergence exactly:
\begin{equation}    D_xu+D_yv
    =(D_xD_y-D_yD_x)\psi=0.
\label{eq:app-discrete-divergence}
\end{equation}
For the Euclidean state metric, the pullback matrix is
\begin{equation}    G_\psi=D_y^\top D_y+D_x^\top D_x.
    \label{eq:app-streamfunction-pullback}
\end{equation}
\end{proposition}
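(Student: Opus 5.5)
The divergence identity follows by direct substitution, and the pullback matrix by differentiating the chart once. First I would read off the decoded components from Eq.~\ref{eq:app-streamfunction-chart}, namely $u=D_y\psi$ and $v=-D_x\psi$, and apply the discrete divergence built from the same matrices $D_x,D_y$. This gives
\begin{equation*}
D_xu+D_yv=D_xD_y\psi-D_yD_x\psi=(D_xD_y-D_yD_x)\psi .
\end{equation*}
Under the hypothesis $D_xD_y=D_yD_x$, the right-hand side vanishes for every $\psi$. This is exactly the residual-factorization identity $R_h(\chi(\psi))=0$ of Eq.~\ref{eq:app-exact-chart} for the linear residual $R_h(u,v)=D_xu+D_yv$. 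I would also note that the chart is linear, so $J_RJ_\chi=D_xD_y-D_yD_x=0$, consistent with Lemma~\ref{lem:app-tangent-space}.

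Second, since $\chi$ is linear in $\psi$, its Jacobian is the constant block matrix $J_\chi=\begin{bmatrix}D_y\\-D_x\end{bmatrix}$. With the Euclidean state metric $M_h=I$, the pullback $G_\psi=J_\chi^\top J_\chi$ expands blockwise to $D_y^\top D_y+(-D_x)^\top(-D_x)=D_y^\top D_y+D_x^\top D_x$, which is Eq.~\ref{eq:app-streamfunction-pullback}. The matrix is independent of $\psi$, so there is no Taylor remainder of the kind in Proposition~\ref{prop:app-pullback-remainder}.

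There is no real analytic obstacle; the substance lies in the hypothesis. The step that needs care is that ``matching discrete divergence'' means the evaluator uses literally the same $D_x,D_y$, with the same boundary convention, as the decoder. Commutation holds, for example, for periodic spectral or centered-difference operators, which are circulant and hence commute, and for tensor-product operators $I\otimes D$ and $D\otimes I$. I would remark on this rather than prove it, since the statement assumes commutation.

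I would also flag two further points for later use. First, $G_\psi$ is only positive semidefinite: constants lie in $\ker D_x\cap\ker D_y$ in the periodic case. Any use of it as a metric in Eq.~\ref{eq:pullback-physical-metric} therefore needs $\psi$ restricted to a mean-free complement, or the active-subspace convention $\kappa_+$. Second, Assumption~\ref{ass:regular-manifold} requires a local embedding, so injectivity of $J_\chi$ must hold on that complement.
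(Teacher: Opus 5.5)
Your proposal is correct and matches the paper's proof: substitute $u=D_y\psi$, $v=-D_x\psi$ into the matching divergence and use commutation. Then form $G_\psi=J_\chi^\top J_\chi$ from the constant Jacobian $J_\chi=\begin{bmatrix}D_y\\-D_x\end{bmatrix}$. Your extra remarks are accurate and go beyond what the paper states: $G_\psi$ is only positive semidefinite because constants lie in the periodic kernel, so the embedding assumption requires restricting $\psi$ to a mean-free complement.
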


\begin{proof}
Commutation gives the residual identity in Eq.~\ref{eq:app-discrete-divergence}.
Eq.~\ref{eq:app-streamfunction-pullback} follows by multiplying the Jacobian of Eq.~\ref{eq:app-streamfunction-chart} by the corresponding transpose factors.
\end{proof}

Eq.~\ref{eq:app-streamfunction-pullback} separates exact incompressibility from intrinsic metric geometry: the parameterization enforces the former, while $G_\psi$ determines the latter.
On a periodic grid, the Fourier symbol of the pullback matrix scales like the squared discrete wave number.
The pullback metric makes the physical importance of gradients and small scales explicit.

\section{Physical-Space Endpoint Matching}
\label{app:match-theory}

The two-time map in Eq.~\ref{eq:two-time-map} is supervised at $s=t$ by the velocity loss in Eq.~\ref{eq:diagonal-fm-loss}.
At $s<t$, the loss in Eq.~\ref{eq:physical-endpoint-loss} compares two estimates of the same decoded physical endpoint.
The following result links the learned finite-interval map to endpoint distribution error.

Let $Y_s$ follow a target marginal path with $\dot Y_s=v_s(Y_s,c)$.
Let
\[
p_{\mathrm{enc},c}=\mathcal L(\Psi_c(Y_1))
\]
denote the encoded target endpoint law.
For the learned map, define the coordinate and physical defects
\begin{equation} e_\theta(s,r)
 =\partial_sT_\theta^{s,1}(r,c)
 +D_rT_\theta^{s,1}(r,c)v_s(r,c),
 \qquad
 d_\theta(s,r)=J_{\Psi_c}\!\left(T_\theta^{s,1}(r,c)\right)e_\theta(s,r).
\label{eq:app-flow-map-defect}
\end{equation}
\begin{proposition}[Physical flow-map defect bound]
\label{prop:app-flow-map-defect}
Assume a smooth target path, a valid coordinate domain, finite second moments,
and
\[
\mathbb E\!\int_0^1
\|d_\theta(s,Y_s)\|_{M_x}^2\,ds<\infty.
\]
For the defects in Eq.~\ref{eq:app-flow-map-defect}, the endpoint law $\widehat p_c$ of $\Psi_c(T_\theta^{0,1}(Y_0,c))$ satisfies
\begin{equation} W_{2,M_x}^2\!\left(\widehat p_c,p_{\mathrm{enc},c}\right)
 \leq\int_0^1\mathbb E\left[
 e_\theta(s,Y_s)^\top
 \widetilde G_c\!\left(T_\theta^{s,1}(Y_s,c)\right)
 e_\theta(s,Y_s)
 \right]ds.
 \label{eq:app-flow-map-bound}
\end{equation}
\end{proposition}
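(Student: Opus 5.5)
The plan is a synchronous coupling along the target path. Set $Z_s=\Psi_c\bigl(T_\theta^{s,1}(Y_s,c)\bigr)$ for $s\in[0,1]$, where $Y_s$ solves $\dot Y_s=v_s(Y_s,c)$ started from $Y_0$. Both endpoint laws can then be read off this one random path.

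\emph{Endpoint identification.} By Eq.~\ref{eq:two-time-map} the factor $(t-s)$ vanishes at $s=t=1$, so $T_\theta^{1,1}(r,c)=r$. Hence $Z_1=\Psi_c(Y_1)$, whose law is $p_{\mathrm{enc},c}$. At the other end, $Z_0=\Psi_c(T_\theta^{0,1}(Y_0,c))$, whose law is $\widehat p_c$. The joint law of $(Z_0,Z_1)$ is therefore a coupling of $\widehat p_c$ and $p_{\mathrm{enc},c}$. By the definition of the Wasserstein distance, exactly as in Eq.~\ref{eq:app-wasserstein-projection},
\[
W_{2,M_x}^2\!\left(\widehat p_c,p_{\mathrm{enc},c}\right)\leq\mathbb E\norm{Z_1-Z_0}_{M_x}^2 .
\]

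\emph{Differentiating along the path.} Assumption~\ref{ass:differentiability} and the smooth target path allow the chain rule pathwise. Since $\dot Y_s=v_s(Y_s,c)$,
\[
\tfrac{d}{ds}T_\theta^{s,1}(Y_s,c)=\partial_sT_\theta^{s,1}(Y_s,c)+D_rT_\theta^{s,1}(Y_s,c)\,v_s(Y_s,c)=e_\theta(s,Y_s).
\]
Applying $J_{\Psi_c}$ gives $\dot Z_s=d_\theta(s,Y_s)$, and therefore $Z_1-Z_0=\int_0^1 d_\theta(s,Y_s)\,ds$.

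\emph{Jensen step.} Jensen's inequality (Cauchy--Schwarz on the unit interval) holds in the $M_x$-norm because $M_x\succ0$. It gives
\[
\norm{Z_1-Z_0}_{M_x}^2\leq\int_0^1\norm{d_\theta(s,Y_s)}_{M_x}^2\,ds .
\]
Then use $J_{\Psi_c}^\top M_xJ_{\Psi_c}=\widetilde G_c$ (Section~\ref{app:preconditioning-theory}) to rewrite the integrand as $e_\theta^\top\widetilde G_c(T_\theta^{s,1}(Y_s,c))\,e_\theta$. Taking expectations and exchanging $\mathbb E$ with $\int_0^1$ by Tonelli (the integrand is nonnegative, and the integrability hypothesis makes the right-hand side finite) yields Eq.~\ref{eq:app-flow-map-bound}.

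\emph{Main obstacle.} The delicate step is the pathwise differentiation. It needs $s\mapsto T_\theta^{s,1}(Y_s,c)$ to be absolutely continuous on $[0,1]$ with values in the valid coordinate domain, so that the fundamental theorem of calculus applies. It also needs $Y_s$ to be a genuine trajectory of the ODE flow and not merely a process with the right marginals. I would state this explicitly as the interpretation of ``smooth target path.'' I would also check the endpoint $s=1$, where the interval length $t-s$ degenerates and the identity $T_\theta^{1,1}=\mathrm{id}$ is used. The finite second moments guarantee that both laws lie in the $W_2$ space.
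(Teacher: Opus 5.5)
Your proposal is correct and follows the paper's argument. The paper also writes $\Psi_c(Y_1)-\Psi_c(T_\theta^{0,1}(Y_0,c))=\int_0^1 d_\theta(s,Y_s)\,ds$ along the target trajectory, couples the two endpoint laws through the common source $Y_0$, and concludes with Cauchy--Schwarz and $J_{\Psi_c}^\top M_xJ_{\Psi_c}=\widetilde G_c$; your version only makes explicit what the paper leaves implicit ($T_\theta^{1,1}=\mathrm{id}$, the pathwise chain rule giving $\dot Z_s=d_\theta(s,Y_s)$, and the Tonelli exchange).
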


\begin{proof}
Along the target path,
\begin{equation*} \Psi_c(Y_1)-\Psi_c\!\left(T_\theta^{0,1}(Y_0,c)\right)
 =\int_0^1d_\theta(s,Y_s)\,ds.
\end{equation*}
The common source $Y_0$ couples the endpoint laws; Cauchy--Schwarz and $\widetilde G_c$ give Eq.~\ref{eq:app-flow-map-bound}.
\end{proof}

The consistency loss is a finite-difference surrogate for the flow-map defect
controlled by Eq.~\ref{eq:app-flow-map-bound}.
For the interpolation path, let
\[
v_s(r,c)=\mathbb E[w\mid r_s=r,c].
\]
For fixed $s,t,c,\delta$ and a stopped-gradient target, conditioning on $r_s$
and averaging over the endpoint-pair velocity $w$ gives the corresponding
conditional semi-gradient with respect to the differentiable velocity output:
\begin{equation} -\frac1m\widetilde G_c(a_\theta)
 \left(
 \partial_sT_\theta^{s,t}+D_rT_\theta^{s,t}v_s
 \right)+O(\delta).
\label{eq:app-semigradient-limit}
\end{equation}
Eq.~\ref{eq:app-semigradient-limit} fixes the target branch and uses $\delta$ to normalize the finite difference.

\section{Regular Residual Geometries}
\label{app:general-residual-geometries}

For regular residual manifolds, PMosFM constructs local feasible coordinates from discrete residual-zero sets rather than assuming a prescribed Riemannian manifold~\citep{chen2023flow}.

\subsection{Local residual-manifold parameterizations}
\label{app:general-residual-coordinates}

Fix the mesh $h$ and condition $c$, and let $M_x=M_h$ be positive definite. Assume that $R_h(\cdot,c)$ is $C^2$ on an open neighborhood and that $D_xR_h$ has constant rank $q$ there. On the regular component
\begin{equation*}\mathcal M_c=\{x:R_h(x,c)=0\},\qquad m=d-q,
\end{equation*}
the tangent space is $T_x\mathcal M_c=\mathop{\mathrm{ker}}\nolimits D_xR_h(x,c)$, with physical metric $g_x(\xi,\zeta)=\xi^\top M_x\zeta$.

\begin{proposition}[Local feasible parameterizations]
\label{prop:general-residual-charts}
Every point of a regular component has a local injective parameterization $\chi_c:\mathcal Y_c\to\mathcal M_c$ with $R_h\circ\chi_c=0$. For a compact regular target region, finitely many such parameterizations cover the region. After an invertible coordinate transform $C_{j,c}$, each decoder
\begin{equation*}\Psi_{j,c}(r)=\chi_{j,c}(\mu_{j,c}+C_{j,c}^{-1}r)
\end{equation*}
remains exactly feasible and has continuous pullback metric $\widetilde G_{j,c}(r)=D\Psi_{j,c}(r)^\top M_xD\Psi_{j,c}(r)$.
\end{proposition}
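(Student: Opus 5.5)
The proof splits into three parts: a local chart at each point, a finite cover by compactness, and the effect of an invertible affine reparameterization on feasibility and on the pullback metric.

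\emph{Local chart.} Fix $x^\ast$ on the regular component $\mathcal M_c$. Since $R_h(\cdot,c)$ is $C^2$ with constant rank $q$ near $x^\ast$, the constant rank theorem gives a neighborhood $U$ of $x^\ast$ on which $\{R_h=0\}\cap U$ is a $C^2$ embedded submanifold of dimension $m=d-q$. Its tangent space is $\mathop{\mathrm{ker}}\nolimits D_xR_h$. To obtain a concrete chart, choose an $m$-dimensional subspace $E\subseteq\mathbb R^d$ such that the $M_x$-orthogonal projection $\Pi_E$ restricted to $T_{x^\ast}\mathcal M_c$ is an isomorphism; the natural choice is $E=T_{x^\ast}\mathcal M_c$ itself. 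Apply the inverse function theorem to $\Pi_E|_{\mathcal M_c\cap U}$ at $x^\ast$. This gives an open set $\mathcal Y_c\subseteq\mathbb R^m$, obtained by identifying $E$ with $\mathbb R^m$ through an $M_x$-orthonormal basis, and a $C^2$ inverse $\chi_c:\mathcal Y_c\to\mathcal M_c$. The map $\chi_c$ is injective because it has the left inverse $\Pi_E$. It satisfies $R_h\circ\chi_c\equiv0$ because its image lies in $\mathcal M_c$. Its Jacobian has rank $m$, which is consistent with Lemma~\ref{lem:app-tangent-space}.

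The main subtlety lies in this step. With constant rank $q<n_R$, the zero set is a manifold only if the level set $\{R_h=0\}$ is locally the preimage of a single point under a submersion onto the $q$-dimensional image. The constant rank theorem supplies exactly this: locally $R_h=\phi\circ\pi$ with $\pi$ a submersion onto $\mathbb R^q$ and $\phi$ injective. Hence $\{R_h=0\}\cap U$ coincides with a single fiber of $\pi$, and I will state that step explicitly.

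\emph{Finite cover and reparameterization.} Shrinking each chart domain to a relatively compact open ball keeps the image open in $\mathcal M_c$. A compact regular target region $K$ is covered by these images, so finitely many charts $\chi_{j,c}$ suffice by compactness. For each chart, take any $\mu_{j,c}$ and invertible $C_{j,c}$ and set $\Psi_{j,c}(r)=\chi_{j,c}(\mu_{j,c}+C_{j,c}^{-1}r)$ on the domain $C_{j,c}(\mathcal Y_{j,c}-\mu_{j,c})$. Exact feasibility follows because $R_h(\Psi_{j,c}(r),c)=R_h(\chi_{j,c}(y),c)=0$ with $y=\mu_{j,c}+C_{j,c}^{-1}r\in\mathcal Y_{j,c}$, exactly as in Eq.~\ref{eq:app-composed-feasibility}. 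For the metric, the chain rule gives $D\Psi_{j,c}(r)=J_{\chi}(y)C_{j,c}^{-1}$. Because $\chi_{j,c}$ is $C^1$ (indeed $C^2$) and $r\mapsto y$ is affine, $D\Psi_{j,c}$ is continuous, and so $\widetilde G_{j,c}=D\Psi_{j,c}^\top M_xD\Psi_{j,c}$ is continuous. Since $J_\chi$ has full column rank and $C_{j,c}$ is invertible, $\widetilde G_{j,c}$ is moreover positive definite, matching Eq.~\ref{eq:pullback-physical-metric}.
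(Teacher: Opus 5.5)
Your proof is correct and follows the same route the paper relies on. The paper states this proposition without a separate proof, resting on the constant-rank setting (with $T_x\mathcal M_c=\ker D_xR_h$), the composition identity of Eq.~\ref{eq:app-composed-feasibility} for feasibility, and the chain rule $J_{\Psi_c}(r)=J_\chi(y,c)C_c^{-1}$ for the pullback metric, all of which you use; your constant-rank factorization of the zero set, the tangent-projection chart, and the compactness argument supply the standard details the paper leaves implicit.
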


A near-identity metric bound for $C_{j,c}$ requires separate verification; the construction is local.

The constructions above apply patchwise. Let $\Sigma_{s,j,c}$ denote the
covariance of $r_s$ within patch $j$, and define
\begin{equation*}P_{s,j,c}
=(\Sigma_{s,j,c}+\varepsilon_P I)^{-1/2},
\qquad
\widetilde G_{j,c}(r)
=D\Psi_{j,c}(r)^\top M_xD\Psi_{j,c}(r).
\end{equation*}
Under the assumptions of Proposition~\ref{prop:app-local-pe-conditioning},
if $\alpha_{j,c}I\preceq\widetilde G_{j,c}(r)\preceq\beta_{j,c}I$ on the
evaluated region, then
\begin{equation}\kappa(H_A^{\mathrm{GN}})
\leq
\frac{\beta_{j,c}}{\alpha_{j,c}}
\kappa\!\left(
P_{s,j,c}\Sigma_{s,j,c}P_{s,j,c}^{\top}
\right).
\label{eq:general-residual-conditioning}
\end{equation}

For overlapping patches, let $a_j$ form a partition of unity on the target
support and set $\omega_j=\int a_j\,d\nu_c$, so $\sum_j\omega_j=1$.
For $\omega_j>0$, define the normalized patch law
\[
\nu_{j,c}=\frac{a_j\nu_c}{\omega_j},
\]
and let $\widehat\nu_{j,c}$ denote the corresponding generated endpoint law.
Set
\[
\widehat\nu_c=\sum_j\omega_j\widehat\nu_{j,c}.
\]
Applying Proposition~\ref{prop:app-flow-map-defect} within each patch and
mixing the resulting couplings gives
\begin{equation}W_{2,M_x}^{2}(\widehat\nu_c,\nu_c)
\leq
\sum_j\omega_j\mathcal E_{j,c},
\label{eq:general-residual-endpoint-bound}
\end{equation}
where $\mathcal E_{j,c}$ is the integrated physical flow-map defect from
Eq.~\ref{eq:app-flow-map-bound}, evaluated on patch $j$.
Eqs.~\ref{eq:general-residual-conditioning} and~\ref{eq:general-residual-endpoint-bound} extend the conditioning and endpoint-error bounds locally without a global flattening of the residual manifold.

\section{Additional Analyses}
\label{app:additional-samples-analysis}

Supplementary evaluations follow the dataset partitions, units, and evaluators in Appendix~\ref{app:pde-dataset-details}.

\subsection{Feasibility, Distribution, and Transport}
\label{app:controlled-smt}

\begin{wraptable}[8]{r}{0.3\textwidth}
\centering
\caption{Normal conditioning.}
\label{tab:controlled-mesh-conditioning}
\scriptsize
\setlength{\tabcolsep}{2.5pt}
\renewcommand{\arraystretch}{1.05}
\begin{tabular}{@{}ccc@{}}
\toprule
$N$ & $\kappa(A_h^\top A_h)$ & $\kappa(A_h^\top W_hA_h)$ \\
\midrule
32  & $4.33\times10^{4}$ & $1.0001$ \\
64  & $6.90\times10^{5}$ & $1.0001$ \\
128 & $1.10\times10^{7}$ & $1.0001$ \\
256 & $1.76\times10^{8}$ & $1.0001$ \\
\bottomrule
\end{tabular}
\end{wraptable}

Appendices~\ref{app:controlled-support-measure}, \ref{app:chart-diagnostics}, and \ref{app:controlled-conditioning} describe support, decoder, and conditioning diagnostics.
Fig.~\ref{fig:residual-manifold} shows that exact parameterization places
samples on the feasible set, while the volume-only law still differs
from the target density; the co-area correction reduces this mismatch.

\begin{figure}[t]
    \centering
    \includegraphics[width=0.97\columnwidth]{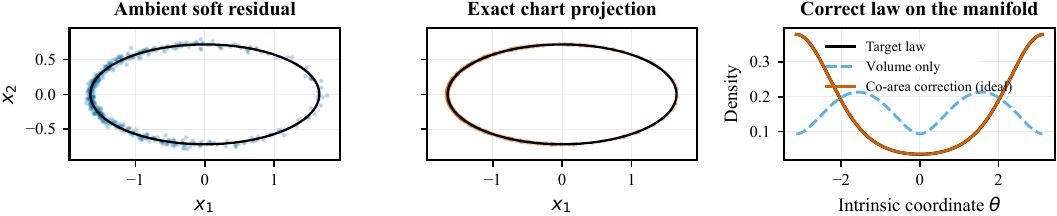}
    \caption{Support and measure on a residual manifold. Exact parameterization enforces feasibility, and the co-area correction recovers the target law.}
    \label{fig:residual-manifold}
\end{figure}

\begin{figure*}[t]
 \centering
 \includegraphics[width=0.97\textwidth]{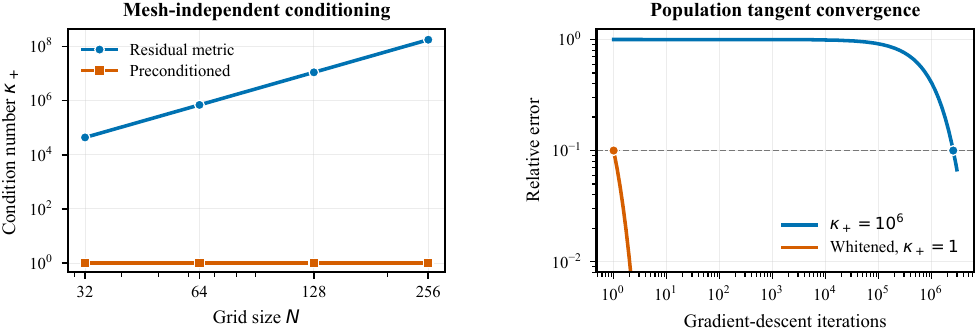}
 \caption{Normal and tangent conditioning.
Left: preconditioning controls residual-normal conditioning under grid refinement.
Right: whitening accelerates tangent-regression convergence.}
 \label{fig:preconditioning-convergence}
\end{figure*}

\subsection{Decoder tolerance and cost}
\label{app:chart-tolerance}

\paragraph{Decoder formulation.}
For a source coordinate $r_0$, PMosFM first evaluates the learned finite-interval
map and then applies the geometric inverse transform:
\begin{equation}    \widehat r_1=T_\theta^{0,1}(r_0,c),
    \qquad
    \widehat y_1=\mu_c+C_c^{-1}\widehat r_1,
    \qquad
    \widehat x_1^{(K)}=\chi_c^{(K)}(\widehat y_1).
\label{eq:app-decoder-endpoint}
\end{equation}
In Eq.~\ref{eq:app-decoder-endpoint}, $K$ indexes the physical decoder, not the learned transport map; for an explicit constraint-satisfying parameterization, decoding applies the algebraic or compatible discrete differential operations defining $\chi_c$, covering the affine, algebraic, and compatible potential-based cases considered in the experiments:
\begin{equation*}    \widehat x_1=\chi_c(\widehat y_1),
    \qquad
    R_h(\chi_c(y),c)\equiv0,
\end{equation*}

For an implicit parameterization, write $x=(y,z)$. Here $z^\star(y,c)$ denotes the selected solution branch, whereas $\mathcal S_h$ denotes one decoder iteration:
\begin{equation}    F_h(z^\star,y,c)=0,
    \qquad
    z^{(k+1)}=\mathcal S_h(z^{(k)};y,c),
    \qquad
    \chi_c^{(K)}(y)=
    \begin{bmatrix}y\\z^{(K)}(y,c)\end{bmatrix}.
\label{eq:app-solver-decoder}
\end{equation}
The Darcy implementation uses the decoder in Eq.~\ref{eq:app-solver-decoder}, with $K=256$ in this configuration.
For an exact implicit case $F_h(z,y,c)=0$ with invertible $D_zF_h$, the directional derivative satisfies
\begin{equation*} D_yz[v]=-(D_zF_h)^{-1}D_yF_h[v].
\end{equation*}
Finite-iteration decoders require separate derivative-accuracy checks.

\paragraph{Decoder accuracy and residual bounds.}
For a $K$-iteration decoder, define the value and differential diagnostics
\begin{equation}\begin{aligned}
    \varepsilon_{\mathrm{dec}}^{(K)}
    &:=\sup_{y\in\mathcal Y_c}
    \norm{R_h(\chi_c^{(K)}(y),c)}_2,\\
    \delta_{\mathrm{dec}}^{(K)}
    &:=\sup_{y\in\mathcal Y_c}
    \norm{D_y\!\left[R_h\circ\chi_c^{(K)}\right](y,c)}_2.
\end{aligned}
\label{eq:app-decoder-certificates}
\end{equation}
In Eq.~\ref{eq:app-decoder-certificates}, the first diagnostic measures physical residual, and the second measures its differential in the represented coordinates.
With
\begin{equation*}    G_{\theta,c}^{(K)}
    =\chi_c^{(K)}\circ
    \left(\mu_c+C_c^{-1}\,\cdot\right)\circ T_\theta^{0,1},
\end{equation*}
the residual bound gives
\begin{equation*}    \norm{R_h\!\left(G_{\theta,c}^{(K)}(r_0,c),c\right)}_2
    \leq\varepsilon_{\mathrm{dec}}^{(K)}.
\end{equation*}
For any differentiable source or model variable $a$,
\begin{equation*}    \norm{D_a\!\left[R_h\circ G_{\theta,c}^{(K)}\right]}_2
    \leq
    \delta_{\mathrm{dec}}^{(K)}
    \norm{C_c^{-1}D_aT_\theta^{0,1}}_2.
\end{equation*}
The two diagnostics address different properties: residual magnitude and
the residual-normal differential after approximate decoding.
The corresponding residual Gauss--Newton block satisfies
\begin{equation}\begin{aligned}
    H_{\mathrm{GN},a}^{R,(K)}
    &:=\left(D_a[R_h\circ G_{\theta,c}^{(K)}]\right)^\top
    W_R\,D_a[R_h\circ G_{\theta,c}^{(K)}],\\
    \norm{H_{\mathrm{GN},a}^{R,(K)}}_2
    &\leq \norm{W_R}_2\left(\delta_{\mathrm{dec}}^{(K)}\right)^2
    \norm{C_c^{-1}D_aT_\theta^{0,1}}_2^2.
\end{aligned}
\label{eq:app-decoder-gn-bound}
\end{equation}
For an exact decoder, $\varepsilon_{\mathrm{dec}}=\delta_{\mathrm{dec}}=0$ and
the encoded residual-normal block vanishes; Eq.~\ref{eq:app-decoder-gn-bound} bounds the block retained by an iterative decoder.

\paragraph{End-to-end cost.}
We distinguish learned transport from physical decoding:
\begin{equation}    T_{\mathrm{sample}}^{(K)}
    =T_{\mathrm{map}}+T_{\mathrm{pre}}+T_{\mathrm{dec}}^{(K)},
    \qquad
    \mathrm{NFE}_{\mathrm{learned}}=1,
\label{eq:app-sampling-cost}
\end{equation}
In Eq.~\ref{eq:app-sampling-cost}, $T_{\mathrm{map}}$ is the cost of $T_\theta^{0,1}$, $T_{\mathrm{pre}}$
contains the fixed coordinate and input transforms, and $T_{\mathrm{dec}}^{(K)}$
is the physical decoding cost.
For a direct setting $T_{\mathrm{dec}}^{\mathrm{dir}}=T_\chi$; for an iterative
decoder,
\begin{equation*}    T_{\mathrm{dec}}^{(K)}
    =T_{\mathrm{init}}+
    \sum_{k=0}^{K-1}T_{\mathcal S}^{(k)}+T_{\mathrm{post}}
    \approx T_{\mathrm{init}}+K\,\overline T_{\mathcal S}+T_{\mathrm{post}}.
\end{equation*}
The timing studies report learned transport and full physical decoding separately,
while the residual diagnostics quantify the accuracy of the realized PMosFM endpoint.

\end{document}

%% file: math_commands.tex
\usepackage{amsmath,amsfonts,bm}

\def\eqref#1{equation~\ref{#1}}

\def\1{\bm{1}}

\def\vmu{{\bm{\mu}}}

\def\vn{{\bm{n}}}

\def\vs{{\bm{s}}}

\def\vu{{\bm{u}}}

\def\vx{{\bm{x}}}
\def\vy{{\bm{y}}}

\DeclareMathAlphabet{\mathsfit}{\encodingdefault}{\sfdefault}{m}{sl}
\SetMathAlphabet{\mathsfit}{bold}{\encodingdefault}{\sfdefault}{bx}{n}